%% file: main.tex
\documentclass[10pt,twocolumn,letterpaper]{article}
\usepackage{algorithm}
\usepackage{algpseudocode}
\usepackage{cvpr}              
\usepackage{threeparttable}
\input{preamble}
\definecolor{cvprblue}{rgb}{0.21,0.49,0.74}
\usepackage[pagebackref,breaklinks,colorlinks,allcolors=cvprblue]{hyperref}

\usepackage{titletoc}
\usepackage{appendix}

\usepackage[accsupp]{axessibility}  

\def\paperID{} 
\def\confName{CVPR}
\def\confYear{2026}
\usepackage{tocloft}

\include{our_imports}

\title{GSNR: Graph Smooth Null-Space Representation for Inverse Problems}

\author{
Romario Gualdr\'on-Hurtado\footnotemark[1]$\,$, %
Roman Jacome\footnotemark[1]$\,$, %
Rafael S. Su\'arez, 
Henry Arguello\\[.1em]
Universidad Industrial de Santander, Colombia, 680002\\[.1em]
\small{\texttt{\{yesid2238324,rajaccar,rafael2269082\}@correo.uis.edu.co}, \ \texttt{henarfu@uis.edu.co}}
}

\begin{document}
\maketitle
 \input{sec/3_finalcopy}

\clearpage


\section*{Acknowledgements}
{This work was supported by the OFFICE OF NAVAL RESEARCH under Grant W911NF2510165, Project titled “ADVANCED ALGORITHMS FOR DISTRIBUTED BEAMFORMING". The views and conclusions contained in this document are those of the authors and should not be interpreted as representing the official policies, either expressed or implied, of the OFFICE OF NAVAL RESEARCH or the U.S. Government.}

{
    \small
    \bibliographystyle{ieeenat_fullname}
    \bibliography{main}
}
\clearpage
\input{sec/X_suppl}


\end{document}

%% file: our_imports.tex
\newcommand{\boxedthm}[1]{
\begin{tcolorbox}[colback=gray!5,
                  colframe=black,
                  width=\linewidth,
                  arc=1mm, auto outer arc,
                  boxrule=1pt,
                  boxsep=-1mm,
                 ]
  #1
\end{tcolorbox}}
\usepackage{tikz,lipsum}
\usepackage[most]{tcolorbox}

\newtcolorbox{Box1}[2][]{
                lower separated=false,
                colback=white,
colframe=white!20!gray,fonttitle=\bfseries,
colbacktitle=white!10!gray,enhanced,
attach boxed title to top left={xshift=0.1cm,
        yshift=-0.01mm}, 
title=#2}
\newtheorem{theorem}{Theorem}
\makeatletter
\newcommand{\settheoremtag}[1]{
  \let\oldthetheorem\thetheorem
  \renewcommand{\thetheorem}{#1}
  \g@addto@macro\endtheorem{
    \addtocounter{theorem}{-1}
    \global\let\thetheorem\oldthetheorem}
  }
\makeatother

\newtheorem{corollary}[theorem]{Corollary}

\newtheorem{definition}{Definition}

\newtheorem{proposition}{Proposition}
\newtheorem{remark}{Remark}

\usepackage{algorithm}  
\usepackage{algpseudocode}  
\usepackage{multirow}
\usepackage{rotating} 
\usepackage{makecell}
\usepackage{array}
\usepackage[normalem]{ulem}
\usepackage{tabularray}
\usepackage{pifont}
\usepackage{amssymb}
\usepackage{pifont}

\newcommand{\Pin}{\mathbf{P}_{\!n}}

\newcommand{\Hpinv}{\mathbf{H}^\dagger}

\newcommand{\diag}{\operatorname{diag}}
\newcommand{\Cov}{\operatorname{Cov}}
\newcommand{\Var}{\operatorname{Var}}

\newcommand{\Range}{\operatorname{Range}}
\newcommand{\Null}{\operatorname{Null}}

\makeatletter
\def\smalloverbrace#1{\mathop{\vbox{\m@th\ialign{##\crcr\noalign{\kern3\p@}%
  \tiny\downbracefill\crcr\noalign{\kern3\p@\nointerlineskip}%
  $\hfil\displaystyle{#1}\hfil$\crcr}}}\limits}
\makeatother

\makeatletter
\def\smallunderbrace#1{\mathop {\vtop {\m@th \ialign {##\crcr $\hfil \displaystyle {#1}\hfil $\crcr \noalign {\kern 3\p@ \nointerlineskip } \tiny\upbracefill \crcr \noalign {\kern 3\p@ }}}}\limits}
\makeatother
\definecolor{customgreen}{HTML}{D9FFD9}
\definecolor{customblue}{HTML}{D9D9FF}
\definecolor{customteal}{HTML}{BFDFDF}
\definecolor{customorange}{HTML}{FFDFBF}

\definecolor{newgreen}{HTML}{2c9f2c}
\definecolor{newblue}{HTML}{1e76b3}

\usepackage{xcolor}
\newcommand{\best}[1]{\colorbox{customteal}{\textbf{#1}}}
\newcommand{\second}[1]{\colorbox{customblue}{\underline{#1}}}

\usepackage{dblfloatfix}

%% file: sec/3_finalcopy.tex
\begingroup
\renewcommand\thefootnote{\fnsymbol{footnote}} 
\maketitle
\footnotetext[1]{Equal contribution.}
\endgroup

\begin{abstract}
Inverse problems in imaging are ill-posed, leading to infinitely many solutions consistent with the measurements due to the non-trivial null-space of the sensing matrix. Common image priors promote solutions on the general image manifold, such as sparsity, smoothness, or score function. However, as these priors do not constrain the null‑space component, they can bias the reconstruction. Thus, we aim to incorporate meaningful null-space information in the reconstruction framework. Inspired by smooth image representation on graphs, we propose \textit{Graph-Smooth Null-Space Representation} (GSNR), a mechanism that imposes structure only into the invisible component. Particularly, given a graph Laplacian, we construct a null-restricted Laplacian that encodes similarity between neighboring pixels in the null-space signal, and we design a low-dimensional projection matrix from the $p$-smoothest spectral graph modes (lowest graph frequencies). This approach has strong theoretical and practical implications: i) improved convergence via a null-only graph regularizer, ii) better coverage, how much null‑space variance is captured by $p$ modes, and iii) high predictability, how well these modes can be inferred from the measurements. GSNR is incorporated into well-known inverse problem solvers, e.g., PnP, DIP, and diffusion solvers, in four scenarios: image deblurring, compressed sensing, demosaicing, and image super-resolution, providing consistent improvement of up to 4.3 dB over baseline formulations and up to 1 dB compared with end-to-end learned models in terms of PSNR. 
\vspace{-0.67cm}
\end{abstract}

\section{Introduction}
\vspace{-0.25cm}
 Recovering an image from a limited number of noisy measurements requires estimating a solution to an undetermined inverse problem, which is of the form 
 \vspace{-0.2cm}
\begin{equation}
 \vspace{-0.2cm}
\mathbf{y}=\mathbf{H}\mathbf{x}^\ast+\boldsymbol{\omega},\qquad \boldsymbol{\omega}\sim\mathcal{N}(\mathbf{0},\sigma^2\mathbf{I}). 
\label{eq:inverse_problem}
\end{equation}
where \(\mathbf{y}\in\mathbb{R}^m\) denotes the measurement vector, \(\mathbf{x}^\ast\in\mathbb{R}^n\) is the unknown target image, \(\mathbf{H} \in \mathbb{R}^{m\times n}\) is the sensing matrix with \(m\!\leq\!n\), and \(\boldsymbol{\omega}\) models additive Gaussian noise.
This formulation is used in a broad range of imaging tasks by using a proper structure of \(\mathbf{H}\). Compressed sensing (CS) tasks employ dense sensing matrices \cite{cs}. Image restoration problems, such as deblurring \cite{zhang2022deep}, inpainting \cite{yu2018generative}, and super-resolution  \cite{tian2011survey}, are modeled by structured Toeplitz or block-diagonal matrices. In medical imaging, modalities like magnetic resonance imaging (MRI) \cite{lustig2008compressed} and computed tomography (CT) \cite{baguer2020computed} use structured physics-based operators (e.g., undersampled Fourier or Radon transforms).
The formulation in \eqref{eq:inverse_problem} becomes ill-posed due to matrix instability and/or the undersampling scheme. Therefore, incorporating prior knowledge or regularization is essential to promote reconstructions that are consistent with the expected structure of \(\mathbf{x}^\ast\) \cite{benning2018modern}.
The estimation of \(\mathbf{x}^\ast\) is commonly formulated as a variational problem that combines a data-fidelity term and a regularization term:
 \vspace{-0.19cm}
\begin{equation}
 \vspace{-0.19cm}
\widehat{\mathbf{x}}=\arg\min_{\mathbf{\tilde{x}}}\ \underbrace{\tfrac12\|\mathbf{H}\mathbf{\tilde{x}}-\mathbf{y}\|_2^2}_{\text{data fidelity:}\ g(\mathbf{\tilde{x}})}+\eta\, \underbrace{f(\mathbf{\tilde{x}})}_{\text{prior}} \label{eq:variational}
\vspace{-0.1cm}
\end{equation}
where \(g(\mathbf{\tilde{x}})\) enforces consistency with the measurements and \(f(\mathbf{\tilde{x}})\) incorporates prior knowledge about the structure of \(\mathbf{x}^\ast\).
Recent advances have introduced data-driven regularizers for \(f(\mathbf{\tilde{x}})\), most notably Plug-and-Play (PnP) \cite{kamilov2023plug, venkatakrishnan2013plug,chan2016plug,zhang2021plug} and Regularization by Denoising (RED) \cite{romano2017little,cohen2021regularization,reehorst2018regularization}, which have achieved remarkable success across a wide range of imaging tasks \cite{kamilov2023plug} by implicitly promoting solutions within the training image dataset manifold i.e., well-trained denoiser learns the data score-function \cite{milanfar2025denoising}. PnP and RED use learned denoisers, allowing the integration of powerful deep restoration networks within algorithms \cite{tan2025image,milanfar2025denoising}. Despite their versatility and success, these priors operate in the image domain and do not explicitly exploit the structure of \(\mathbf{H}\), including its null-space (NS). Only a portion of \(\mathbf{x}^\ast\) is observable through $\mathbf{H}$. The range-null-space decomposition (RNSD) states that any $\mathbf{x} \in \mathbb{R}^n$ vector is decomposed as $\mathbf{x} = \mathbf{x}_r + \mathbf{x}_n$, where $\mathbf{x}_r=\mathbf{P}_r \mathbf{x}$ denotes the component lying in the range space of $\mathbf{H}$, with $\mathbf{P}_r=\Hpinv\mathbf{H}$, and $\mathbf{x}_n=\Pin \mathbf{x}$ denotes the component lying in its NS, with the NS projector \(\Pin=\mathbf{I}-\Hpinv\mathbf{H}\). $\mathbf{x}_n$ is invisible to $\mathbf{H}$ because it belongs its NS, defined as
 \vspace{-0.2cm}
\begin{align*}
 \vspace{-0.2cm}
\Null(\mathbf{H}) 
    &= \{\mathbf{x} \in \mathbb{R}^n : \mathbf{H}\mathbf{x} = \mathbf{0}\} \\
    &= \{\mathbf{x} : \mathbf{x} \perp \mathbf{h}_j,\; \forall j \in \{1, \ldots, m\}\},
 \vspace{-0.2cm}
\end{align*}
where $\mathbf{h}_j\in\mathbb{R}^n$ is the $j$-th row of $\mathbf{H}$.
This interpretation has inspired approaches that explicitly exploit  \(\Null(\mathbf{H})\). Null-Space Networks (NSN) \cite{schwab2019deep} and Deep Decomposition \cite{chen2020deep}  operate across the entire \(\Null(\mathbf{H})\), treating all invisible directions as equally relevant, neglecting the fact that natural, perceptually plausible images occupy only a low-dimensional manifold within that space \cite{Neurips}. To address this gap, the recent Nonlinear Projections of the Null-Space (NPN) framework \cite{Neurips} introduces a learnable low-dimensional NS projection matrix \(\mathbf{S}\in\mathbb{R}^{p\times n}\), with $p\leq(n-m)$. $\mathbf{S}$ is designed depending on the inverse problem, making its rows orthogonal to $\mathbf{H}$, then $\mathbf{S}$ is optimized jointly a NS component predictor \(\mathrm{G}(\mathbf{y})\equiv \mathbf{S}\mathbf{x}^\ast\) to adapt the $\mathrm{span}(\mathbf{S})$ into a subspace of $\mathrm{Null}(\mathbf{H})$. During reconstruction, NPN penalizes deviations \(\|\mathrm{G}(\mathbf{y})-\mathbf{S}\mathbf{\tilde{x}}\|^2_2\). However,
natural images occupy a low‑dimensional, structured subset inside $\Null(\mathbf{H})$, so
blindly learning an arbitrary $\mathrm{span}(\mathbf{S}^\top)\subset\Null(\mathbf{H})$ can waste capacity and induce bias. In this work, we present a principled framework that provides a new representation of the NS, enabling the optimal selection of informative NS directions. To address this issue, we took into account two main optimality criteria: 

\boxedthm{
    \textbf{Coverage:}  \textit{How much of the null-space of $\mathbf{H}$ is represented with the projection matrix $\mathbf{S}$.}

\textbf{Predictability:} \textit{How easily can the projections $\mathbf{Sx}$ be estimated from the measurements $\mathbf{y}$ only. }
}

Inspired by the graph (smooth) representation of images \cite{ortega2018graph,6319643}, we introduce Graph-Smooth Null-Space Representation (GSNR)
to overcome the shortcomings of NS representation and prediction.  Different from the well-studied graph-structure preserving approach in imaging inverse problems \cite{kalofolias2016learn,6319643} that promotes smoothness over the image $\mathbf{x}$, our approach is the first to endow the NS component $\mathbf{x}_n$ with a graph structure, defined through a sparse, positive semidefinite graph Laplacian $\mathbf{L}\succeq\mathbf{0}$, such as 4/8‑nearest‑neighbor (NN) image grid whose edge weights encode local pixel similarity. We form the \emph{null‑restricted Laplacian} $\mathbf{T}\ :=\ \mathbf{P}_n\,\mathbf{L}\,\mathbf{P}_n\quad\text{on}\quad\Null(\mathbf{H})$. Via spectral graph theory, we construct the graph-smooth NS projection matrix $\mathbf{S}$ as the $p$-smoothest NS modes, which corresponds to the $p$ first graph Fourier modes.  
The proposed GSNR changes how we solve inverse problems by providing exact structure information where the sensor is blind. By selecting the $p$-smoothest eigenmodes of the null‑restricted Laplacian, our method (i) reduces hallucinations and bias by constraining only the invisible component, not the whole image; (ii) improves conditioning, leading to faster and more stable convergence across PnP, DIP, and diffusion solvers; (iii) increases {data efficiency} by capturing most null‑space variation with small $p$ (high coverage) and by targeting the modes that are easiest to infer from measurements (high predictability); and (iv) offers operational diagnostics coverage/predictability curves that allows selecting $p$ and justify regularization objectively. Because it is plug‑compatible with standard pipelines and agnostic to the forward operator, the approach scales across different imaging tasks, turning an ill‑posed ambiguity into a structured, measurable, and learnable component. The proposed construction has the following benefits: 

\begin{enumerate}
    \item Selected graph-smooth null-space components produce high coverage at low dimensionality projection, i.e., small $p$ (Theorem \ref{th:coverage} and Theorem \ref{th:Th1}).

    \item 
Predicting the $p$ null-space components from $\mathbf{y}$ using a GSNR is easier than the ones obtained with a common null-space basis (Proposition \ref{thm:modeR2-bold}). 

\item Predicted graph null-space components can be incorporated via regularization into solvers such as PnP, Deep Image Prior, and Diffusion Models.
\end{enumerate}

\vspace{-0.3cm}
\section{Related Work}
\vspace{-0.2cm}

\hspace{0.25cm} \textbf{Variational and learned priors.} A common scheme to solve \eqref{eq:variational} is proximal gradient descent (PGD) \cite{parikh2014proximal}. Recently, PnP \cite{kamilov2023plug} and RED \cite{romano2017little} methods have replaced the proximal operator of \(f(\mathbf{\tilde{x}})\) with learned denoisers, enabling the incorporation of data-driven priors without requiring an explicit analytic form \cite{Neurips}.
Although these frameworks achieve state-of-the-art performance, they leave $\mathrm{Null}(\mathbf{H})$ largely unconstrained; denoisers may freely modify components that are invisible to the sensing operator.

\textbf{Nonlinear Projections of the Null-Space (NPN).} To explicitly regularize image unobserved components, \cite{Neurips} introduced a task-specific low-dimensional projection within $\mathrm{Null}(\mathbf{H})$, enforcing consistency between learned NS predictions and reconstructions.
Building on this principle, we propose GSNR, which incorporates graph-smoothness analysis to guide the selection of a structured null-subspace.
Our design ensures that the learned projection aligns with the intrinsic geometry of the data, yielding NS components that are both predictable from the measurements and consistent with the graph-induced smoothness prior.

\textbf{Graph theory in imaging inverse problems}. Variational regularization methods address inverse problems by imposing smoothness or sparsity priors that constrain the solution space. Total variation \cite{RUDIN1992259} enforces pixel-wise smoothness, whereas wavelet-based sparsity priors \cite{10.1093/biomet/81.3.425} promote sparse representations of images in the transform domain via soft-thresholding.
Although these priors improve stability, they are typically limited to local structures and often oversmooth fine textures or complex geometries.
To better capture local dependencies, graph-based and non-local regularization methods have been reintroduced. \cite{peyre2008non} proposed an adaptive nonlocal regularization framework that jointly estimates the image and the graph that encodes patch similarities.  However, like classical priors, this framework operates solely on the reconstruction and doesn't constrain the image component lying in $\mathrm{Null}(\mathbf{H})$.

\vspace{-0.2cm}
\section{Graph-Smooth Null-Space Representation}

\vspace{-0.1cm}
\subsection{Preliminaries: Graphs}
\vspace{-0.2cm}
Graphs provide a geometric representation of data, where vertices correspond to data elements (e.g., pixels or image patches) and weighted edges encode pairwise similarities between them \cite{shuman2013emerging}.
Signals defined on weighted undirected graphs are mathematically denoted by $\mathcal{G}=(\mathcal{V},\mathcal{E},\mathbf{W})$, where $\mathcal{V}$ is the vertex set with $|\mathcal{V}|=n$, $\mathcal{E}\subseteq\mathcal{V}\times\mathcal{V}$ is the edge set, and $\mathbf{W}\in\mathbb{R}^{n\times n}$ is the symmetric weighted adjacency matrix with $\mathbf{W}_{ij}>0$ if $(i,j)\in\mathcal{E}$ and $\mathbf{W}_{ij}=0$ otherwise. The unnormalized graph Laplacian is
\vspace{-0.2cm}
\begin{equation}
\vspace{-0.2cm}
\mathbf{L}=\mathbf{D}-\mathbf{W},\qquad
(\mathbf{L}\mathbf{x})_i=\sum_{j}\mathbf{W}_{ij}\bigl(\mathbf{x}_i-\mathbf{x}_j\bigr),
\vspace{-0.1cm}
\end{equation}
where the diagonal degree matrix $\mathbf{D}=\mathrm{diag}(d_1,\dots,d_n)$ accomplishes $d_i=\sum_{j}\mathbf{W}_{ij}$. Its Dirichlet energy is
\vspace{-0.2cm}
\begin{equation}
\vspace{-0.2cm}
\mathbf{x}^{\top}\mathbf{L}\mathbf{x}=\frac{1}{2}\sum_{i,j}\mathbf{W}_{ij}\bigl(\mathbf{x}_i-\mathbf{x}_j\bigr)^2, \label{eq:Dirichlet}
\vspace{-0.1cm}
\end{equation}
which penalizes discrepancies across edges with large weights. For symmetric $\mathbf{W}$, $\mathbf{L}$ is real symmetric positive semidefinite (SPSD) with eigenpairs $\{(\lambda_\ell,\mathbf{v}_\ell)\}_{\ell=0}^{n-1}$, $0=\lambda_0\le \lambda_1\le\cdots$, $\mathbf{v}_0\propto\mathbf{1}$ on connected graphs. Eigenvectors with small $\lambda_\ell$ vary slowly across high-weight edges, motivating the interpretation of $\lambda$ as a graph frequency. For the usage of  $\mathbf{L}$, we focus on two graph topologies \cite{ortega2018graph}:

\begin{enumerate}
    \item \textbf{4NN:} This promotes piecewise smoothness aligned with the coordinate axes and attenuates horizontal/vertical fluctuations, yielding an anisotropic (axis-biased) smoothing effect. Here, $\mathcal{E}_4=\{(i,j):\|\mathbf{p}_i-\mathbf{p}_j\|_1=1\}$, $\mathbf{W}_{ij}=\mathbf{1}_{(i,j)\in\mathcal{E}_4}$, with $\mathbf{p}_i\in\mathbb{Z}^2$ pixel coordinates.

    \item \textbf{8NN:} This reduces orientation bias and more closely approximates an isotropic Laplace operator, promoting rotation-invariant piecewise smoothness and suppressing fluctuations uniformly across directions.  Here, $\mathcal{E}_8=\{(i,j):\|\mathbf{p}_i-\mathbf{p}_j\|_\infty=1\}$, $\mathbf{W}_{ij}=w_o\mathbf{1}_{\|\cdot\|_2=1}+w_d\mathbf{1}_{\|\cdot\|_2=\sqrt{2}}$, where $w_d=w_o/\sqrt{2}$.


\end{enumerate}

\noindent Nevertheless, we extend the analysis of the proposed approach to other topologies in Supp. \ref{app:additional_graphs}. 

\vspace{-0.2cm}
\subsection{Graph-smooth null modes}\label{subsect:graphlimited}
\vspace{-0.2cm}

We study smoothness within the NS by restricting the Laplacian to
\vspace{-0.1cm}
\begin{equation}
\vspace{-0.1cm}
\mathbf{T}=\mathbf{P}_n \mathbf{L} \mathbf{P}_n, \label{eq:T_definition}
\end{equation}

\noindent 
the reason for using the null-graph operator $\mathbf{T} \in \mathbb{R}^{n \times n}$ stems from the need to pay attention to certain geometric properties of the image's NS, which are extracted by $\mathbf{L}$. Let $\mathbf{Tx}=\mathbf{P}_n\mathbf{L}\mathbf{P}_n\mathbf{x}=\mathbf{P}_n\mathbf{L}\mathbf{x}_n$, where $\mathbf{L}\mathbf{x}_n$ extracts, areas of greatest difference in the NS, and then re-project it to the NS with $\mathbf{P}_n$, thus ensuring that this operation results in a projection from the NS.
In Fig. \ref{fig:proj_ns}, from left to right, we show the ground-truth $\mathbf{x}^*$, back-projection $\mathbf{H}^\top \mathbf{y}$, true null component $\mathbf{P}_n \mathbf{x}^*$, and two graph NS projections $\mathbf{P}_n \mathbf{L}_{8\text{nn}} \mathbf{P}_n \mathbf{x}^*$ and $\mathbf{P}_n \mathbf{L}_{4\text{nn}}\mathbf{P}_n \mathbf{x}^*$, for super-resolution (SR).
The map $\mathbf{P}_n \mathbf{x}^*$ isolates signal content invisible to the sensor, while
$\mathbf{P}_n \mathbf{L}\mathbf{P}_n  \mathbf{x}^*$ highlights {where} $\mathbf{Lx}_n$ falls into the NS. These graph projections highlight the smoothest \textit{blind} signal components, e.g., textures near edges.

\begin{figure}[!t]
    \centering
    \includegraphics[width=\linewidth]{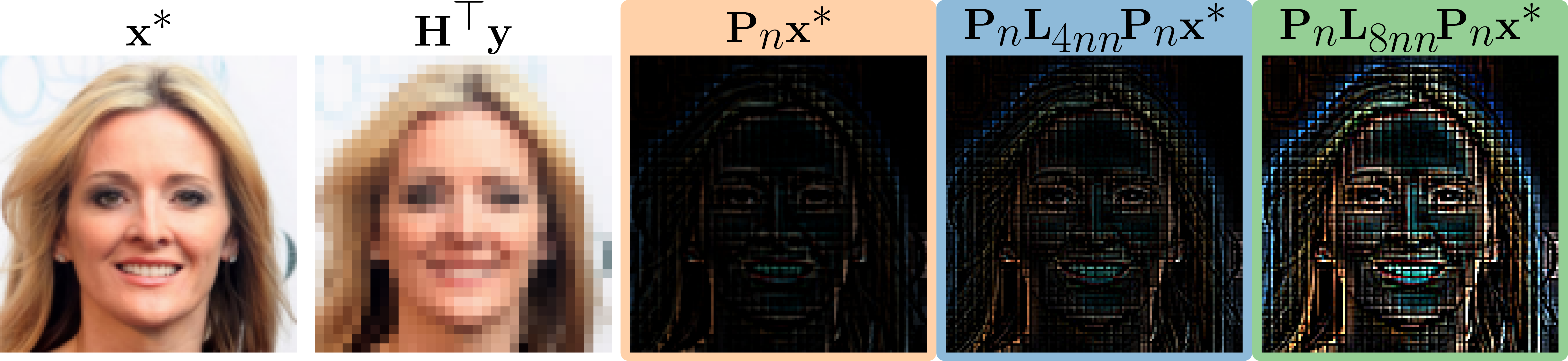}
\vspace{-0.7cm}
    \caption{For image SR task with $SRF=4, n= 3 \cdot 128^2$, we show ground-truth, adjoint reconstruction, NS projection, projection onto graph-smooth NS with $\mathbf{L}_{4nn}$ and $\mathbf{L}_{8nn}$.}
\vspace{-0.5cm}
    \label{fig:proj_ns}
\end{figure}

Then, by applying eigenvalue decomposition, we see that $\mathbf{T} =\mathbf{V} \operatorname{diag}\left(\mu_1, \ldots, \mu_n\right) \mathbf{V}^{\top}$ is SPSD, with $\quad 0 \leq \mu_1 \leq \cdots \leq \mu_n $, where $\mathbf{V} \in \mathbb{R}^{n \times n}$ is orthogonal. Note that $\mathbf{V}$ diagonalizes $\mathbf{T}$, so its columns $\{\mathbf{v}_i\}$ are the \emph{graph-Fourier modes within $\mathrm{Null}(\mathbf{H})$}. 
If $\mu_i$ is smaller, then a smoother graph-mode is selected. $\mathbf{V}$ has the following properties:  i) $\mathbf{V}$ has orthonormal columns $\mathbf{v}_i\in\mathrm{Null}(\mathbf{H})$, $\mu_i$-ordered. ii) These $\{\mathbf{v}_i\}$ are \emph{graph-smooth null modes} probing $\mathbf{x}$. The graph-smooth NS projector $\mathbf{S}$ with $p$ rows is obtained by taking the first $p$ columns of $\mathbf{V}$:
\vspace{-0.2cm}
\begin{equation}
\vspace{-0.2cm}
\mathbf{S}=\overbrace{\mathbf{V}[:,\,1\!:\!p]^{\top}}^{\mathbf{\mathbf{V}_p^\top}}  = [\mathbf{v}_1,\dots, \mathbf{v}_p]^\top  \in \mathbb{R}^{p \times n}. \label{eq:S_definition}
\end{equation} 

  \noindent Note that $\mathbf{S}$ keeps only the $p$-smoothest null directions and $\mathbf{Sx}$ expresses the NS part of $\mathbf{x}$ in an orthonormal basis sorted by increasing graph frequency.  Particularly, based on RNSD: $\mathbf{S}\mathbf{x}
  =  \mathbf{S}\bigl(\mathbf{P}_r+\mathbf{P}_n\bigr)\mathbf{x} =  \mathbf{S}\mathbf{P}_r\mathbf{x}+\mathbf{S}\mathbf{P}_n\mathbf{x},$ since rows of $\mathbf{S}$ lie in $\mathrm{Null}(\mathbf{H})$, hence $\mathbf{S}\mathbf{P}_r=\mathbf{0}$, leading to $
  \mathbf{Sx}= \mathbf{0} + \mathbf{S}\mathbf{P}_n\mathbf{x} = \mathbf{V}_p^\top\mathbf{P}_n\,\mathbf{x} = \begin{bmatrix}
      \langle \mathbf{v}_1,\mathbf{x}_n\rangle\\[-2pt]\vdots\\[-2pt]
      \langle \mathbf{v}_p,\mathbf{x}_n\rangle
     \end{bmatrix}.$
\noindent Here $\{\mathbf{v}_i\}_{i=1}^p$ are the $p$ Laplacian-smoothest orthonormal null modes; thus $\mathbf{S}\mathbf{x}$
collects the coefficients of $\mathbf{x}$ along these $p$ directions.

\vspace{-0.2cm}
\subsection{Learning low-dimensional GSNR}

\vspace{-0.2cm}
We train the GSNR predictor \(\mathrm{G}\) following:
\vspace{-0.2cm}
\begin{equation}
\vspace{-0.2cm}
\mathrm{G}^{\ast} = \min_{\mathrm{G}}~~\mathbb{E}\big[\|\mathrm{G}(\mathbf{y})-\mathbf{S} \mathbf{x}^\ast\|_2^2\big].\label{eq:train-g}
\end{equation}

\noindent Here, we aim to predict the low-dimensional GSNR using the neural network $\mathbf{G}$. Subsequently, the theoretical foundations will be laid that greater predictability will guarantee better convergence in the task.

\vspace{-0.2cm}
\subsection{Reconstruction objective}

\vspace{-0.2cm}
Thus, we incorporate our method as
\vspace{-0.2cm}
\begin{equation}
\vspace{-0.2cm}
\min_{\mathbf{\tilde{x}}} g(\mathbf{\tilde{x}})+\eta\,f(\mathbf{\tilde{x}})+\gamma\|\mathrm{G}^{\ast}(\mathbf{y})-\mathbf{S} \mathbf{\tilde{x}}\|_2^2+ \tfrac{\gamma_{g}}{2}\overbrace{\mathbf{\tilde{x}}^{\top} \mathbf{T}\mathbf{\tilde{x}}}^{\phi(\mathbf{\tilde{x})}},\label{eq:recon}
\end{equation}
which includes the learned GSNR term, plus a graph regularizer acting {only} on the null component of the reconstruction, weighted by $\gamma$ and $\gamma_g$, respectively. This retains compatibility with proximal/PnP/DM solvers. 

\vspace{-0.2cm}
\section{Theory and Analysis}
\vspace{-0.2cm}

We develop a theoretical analysis on GSNR based on \emph{null-restricted spectral} formulation that turns GSNR design/performance prediction into ordered-eigenvalue criteria on $\mathbf{T}$ for optimal subspace selection,  obtaining a principled eigen-ordering and criteria used to choose $\mathbf{S}, p, \mathbf{L}$. The theoretical results are summarized as:

\boxedthm{
\uline{\textit{Coverage}}: the first \(p\) eigenmodes of \(\mathbf{T}\) provide an optimal graph-smooth representation of \(\operatorname{Null}(\mathbf{H})\), by covering a high null-space spectrum energy with~$p\ll n-m$.

\uline{\textit{Minimax optimality}}: the same GSNR basis is worst-case optimal over a graph-energy ellipsoid in \(\operatorname{Null}(\mathbf{H})\), giving a principled ordering of null directions

\uline{\textit{Predictability}}: via block-precision/Schur-complement analysis, we show that GSNR has a higher per-mode recoverability from measurements $\mathbf{y}$ than plain null-space projections.

\uline{\textit{Convergence}:} the effect of the graph regularizer provides better conditioning and convergence properties based on fixed-point analysis.}

\vspace{-0.2cm}
\subsection{Null-space coverage}
\vspace{-0.2cm}

Before developing the formal analysis on the coverage properties of the NS projections, we validate the representation performance of three settings: i) $\mathbf{L=I}$, ii) $\mathbf{L}_{4nn}$, and iii) $\mathbf{L}_{8nn}$. Note that the first setting is equivalent to just taking the NS basis. In Fig. \ref{fig:graph_coverage} (a), each row shows projections ($\mathbf{x}=(\mathbf{H}^\top\mathbf{H} +\mathbf{S}^\top \mathbf{S})\mathbf{x}^*$) as the dimension of the  null–space subspace increases,
\(p\in\{153,\,1536,\,4608,\,9216,\,15360\}\), for three graph choices in the design of \(\mathbf{S}\):
(top, orange) \(\mathbf{L=I}\); (middle, green) grid \(\mathbf{L}_{8\text{nn}}\); (bottom, blue) grid \(\mathbf{L}_{4\text{nn}}\).
For small \(p\), the grid Laplacians already yield visually plausible faces, while \(\mathbf{L=I}\) still produces noisy, poorly structured estimates. The number printed above each image is the MSE of the representation. The plot at the bottom shows the MSE for all values of $p = \{1,\dots, n-m\}$, showing that the graph-null-space projections provide better signal representation with smaller $p$ than just the NS basis. In general, to measure the amount of energy from the NS that the low-dimensional operator $\mathbf{S}$ can capture, we use the spectral coverage defined as:
\vspace{-0.2cm}
\begin{equation}
\vspace{-0.2cm}    C(p)=\frac{\operatorname{tr}\left(\mathbf{S}\mathrm{Cov}(\mathbf{x}_n)\mathbf{S}^{\top}\right)}{\operatorname{tr}\left(\mathrm{Cov}(\mathbf{x}_n)\right)} \in[0,1],\label{eq:Cp}
\end{equation}
where  $\Cov(\mathbf{x})$ is the covariance matrix of $\mathbf{x}$. View the Laplacian eigenvectors as a {graph–Fourier} basis: the first modes are smooth, low “graph–frequency” patterns, while later modes are increasingly oscillatory \citep{shuman2013emerging,ortega2018graph}. The $C(p)$ indicates what fraction of the total {NS variance} lies in the span of the first \(p\) graph–Fourier modes retained by $\mathbf{S}$, similarly as explained variance in PCA \cite{jolliffe2016principal}. Intuitively, if the signal is graph–smooth, most variability concentrates in low frequencies, so coverage climbs rapidly with small \(p\); if not, coverage grows more slowly, like in PCA. \ref{app:graph_as_a_matrix} shows coverage results and eigenvalue analysis that validate the usefulness of {graph Laplacians}, rather than a learnable matrix. Mathematical insights for the selection of the appropriate $p$ value depending on the coverage are given in \ref{app:t_constructio}.

\begin{figure}[!t]
    \centering
    \includegraphics[width=\linewidth]{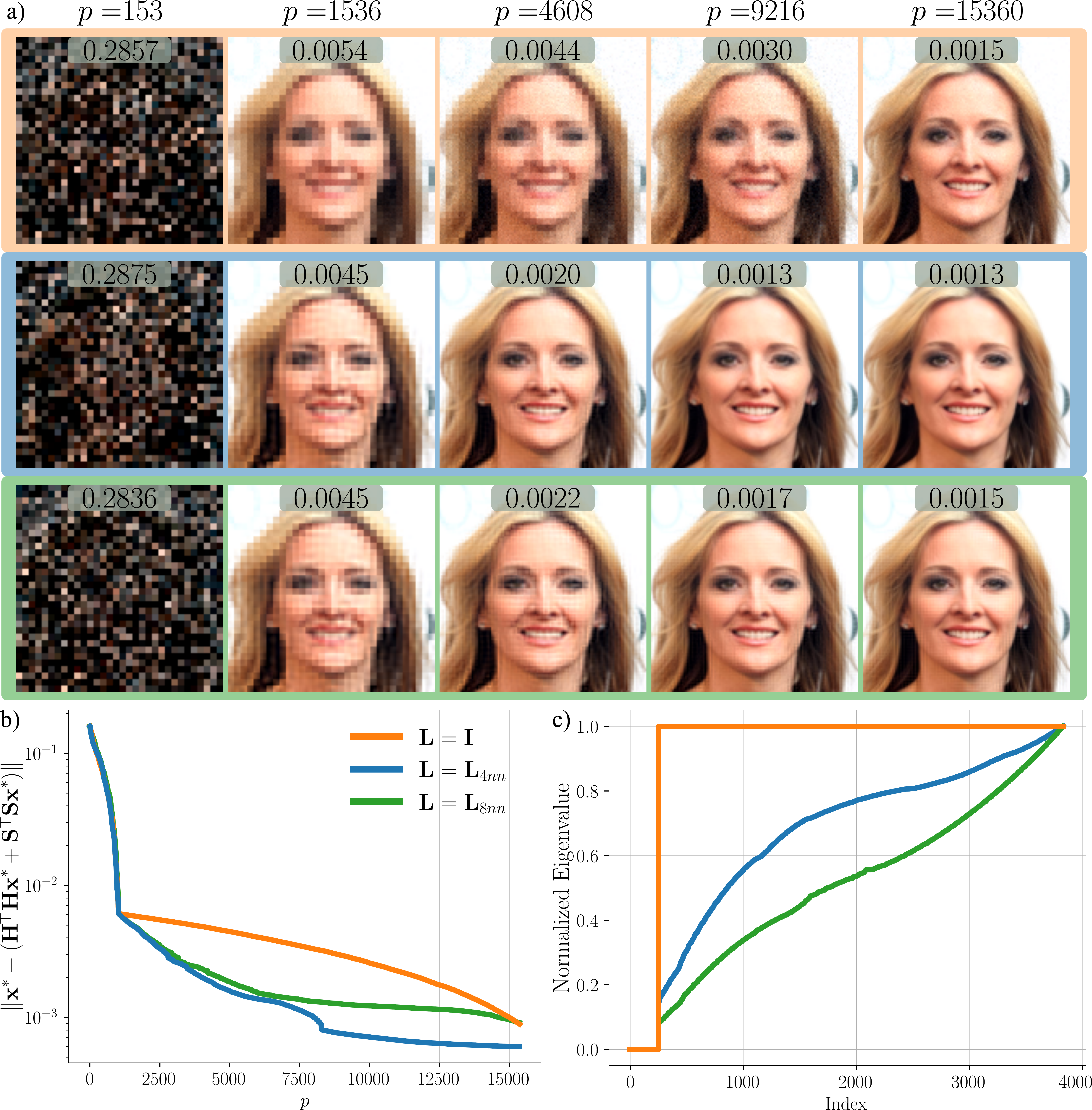}
\vspace{-0.7cm}
    \caption{Coverage and spectral analysis for SR task. \textbf{(a)} $\mathbf{x^\ast} = \mathbf{H}^\top \mathbf{Hx^\ast+S}^\top\mathbf{Sx^\ast}$ for $\mathbf{S}$ given by \eqref{eq:S_definition} using $\mathbf{L}=\mathbf{I}$ (orange), $\mathbf{L = L}_{4nn}$ (green) and $\mathbf{L=L}_{8nn}$ (blue) for different values of $p$. \textbf{(b)} RNSD representation error, varying $p$. \textbf{(c)} Variation of $\mathbf{T}$ normalized eigenvalues with respect to their index.}
    \label{fig:graph_coverage}
\vspace{-0.5cm}
\end{figure}

\vspace{-0.2cm}
\begin{definition}[Gaussian Markov Random Field \cite{rue2005gaussian}]

A GMRF on the graph $\mathcal{G}= (\mathcal{V},\mathcal{E})$ is a zero-mean Gaussian vector $\mathbf{x}\in \mathbb{R}^n$ with precision matrix $\mathbf{Q} = \boldsymbol{\Sigma}^{-1} \in \mathbb{R}^{n\times n}$, with $\boldsymbol{\Sigma} = \operatorname{Cov}(\mathbf{x})$, is sparse and encodes the conditional independence of the graph 
\vspace{-0.2cm}
\begin{equation*}
\vspace{-0.2cm}
    p(\mathbf{x}) \propto \operatorname{exp} \left(-\frac{1}{2} \mathbf{x}^\top \mathbf{Q}\mathbf{x}\right), \quad \mathbf{Q}_{i,j} = 0 \longleftrightarrow (i,j) \notin \mathcal{E},
\end{equation*}

\noindent where smoothness is required in neighboring pixels, we define the precision matrix $\mathbf{Q} = \alpha \mathbf{L} + \epsilon \mathbf{I}$ with $\alpha>0, \epsilon>0$.

\end{definition}

\vspace{-0.3cm}

\noindent Under this setting, we derive the following guarantee on improved NS coverage. 
\vspace{-0.2cm}
\begin{theorem}[Coverage for graph-smooth null-space]\label{th:coverage}
Consider the construction of $\mathbf{T}$ in \eqref{eq:T_definition}. The covariance of the NS, $\mathrm{Cov}(\mathbf{x}_n)$  is a spectral function of $\mathbf{T}$, i.e.,  $\mathrm{Cov}(\mathbf{x}_n) = \mathbf{V}\mathrm{diag}({\lambda_1,\dots, \lambda_n})\mathbf{V}^\top$, where $\lambda_i = \frac{1}{\alpha \mu_i + \epsilon}$, $\lambda_1\geq\lambda_2\cdots\geq \lambda_n$. With the construction of $\mathbf{S}$ in \eqref{eq:S_definition}, the coverage of the NS using GMRF with a Laplacian matrix $\mathbf{L}$ as $C_L(p)$ and the coverage when $\mathbf{L=I}$ is denoted $C_I(p)$, satisfies for every $p=1,\dots, n-m$.
\vspace{-0.2cm}
\begin{equation*}
\vspace{-0.2cm}
    C_L(p)\ \ge\ C_I(p).
\end{equation*}

\end{theorem}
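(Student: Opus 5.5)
The plan is to reduce the claim to a Ky Fan (Rayleigh--Ritz) maximum-trace argument. The key observation is that both $C_L(p)$ and $C_I(p)$ in \eqref{eq:Cp} share the \emph{same} denominator $\operatorname{tr}(\mathrm{Cov}(\mathbf{x}_n))$, because the signal model (the GMRF with precision $\mathbf{Q}=\alpha\mathbf{L}+\epsilon\mathbf{I}$) is fixed. Only the projector $\mathbf{S}$ changes. It therefore suffices to show
\[
\operatorname{tr}\bigl(\mathbf{S}_L\,\mathrm{Cov}(\mathbf{x}_n)\,\mathbf{S}_L^\top\bigr)\ \ge\ \operatorname{tr}\bigl(\mathbf{S}_I\,\mathrm{Cov}(\mathbf{x}_n)\,\mathbf{S}_I^\top\bigr)
\]
for every $p\le n-m$, where $\mathbf{S}_L=\mathbf{V}_p^\top$ is built from $\mathbf{T}=\mathbf{P}_n\mathbf{L}\mathbf{P}_n$ and $\mathbf{S}_I$ is built from $\mathbf{P}_n\mathbf{I}\mathbf{P}_n=\mathbf{P}_n$.

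First I will make the spectral form of $\mathrm{Cov}(\mathbf{x}_n)$ precise, and this is the step I expect to be the main obstacle. Taken literally, $\mathbf{P}_n\mathbf{Q}^{-1}\mathbf{P}_n$ need not commute with $\mathbf{T}$. Also, on $\operatorname{Range}(\mathbf{H}^\top)$ we have $\mu=0$, so the formula would give $\lambda=1/\epsilon$ there, whereas the true covariance of $\mathbf{x}_n$ vanishes on those directions. I will therefore work with the GMRF restricted to $\mathrm{Null}(\mathbf{H})$, as the theorem statement intends. Concretely, $\mathbf{x}_n$ has density $\propto\exp(-\tfrac12\mathbf{x}_n^\top(\alpha\mathbf{T}+\epsilon\mathbf{P}_n)\mathbf{x}_n)$ on $\mathrm{Null}(\mathbf{H})$, so that
\[
\mathrm{Cov}(\mathbf{x}_n)=(\alpha\mathbf{T}+\epsilon\mathbf{P}_n)^{\dagger}.
\]
Since $\mathbf{T}$ leaves $\mathrm{Null}(\mathbf{H})$ invariant and vanishes on its complement, I choose $\mathbf{V}$ so that its first $n-m$ columns span $\mathrm{Null}(\mathbf{H})$, ordered by $\mu_i$ restricted to that subspace. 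This gives $\mathrm{Cov}(\mathbf{x}_n)=\sum_{i\le n-m}\lambda_i\mathbf{v}_i\mathbf{v}_i^\top$ with $\lambda_i=1/(\alpha\mu_i+\epsilon)$, and zero on the range directions. Because $t\mapsto 1/(\alpha t+\epsilon)$ is decreasing, ascending $\mu_i$ gives descending $\lambda_i$. Hence the smoothest null modes are exactly the top-$p$ eigenvectors of $\mathrm{Cov}(\mathbf{x}_n)$.

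Second, I apply Ky Fan's maximum principle. Over all $\mathbf{S}\in\mathbb{R}^{p\times n}$ with orthonormal rows, $\operatorname{tr}(\mathbf{S}\boldsymbol{\Sigma}\mathbf{S}^\top)\le\sum_{i=1}^p\lambda_i(\boldsymbol{\Sigma})$, with equality when the rows span the top-$p$ eigenspace. For $\mathbf{S}_L$ the bound is attained, so
\[
C_L(p)=\frac{\sum_{i=1}^p\lambda_i}{\sum_{i=1}^{n-m}\lambda_i}.
\]
For $\mathbf{L}=\mathbf{I}$, the operator $\mathbf{T}=\mathbf{P}_n$ has eigenvalue $1$ with multiplicity $n-m$ on the null space. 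Its eigen-decomposition therefore returns an arbitrary orthonormal basis of $\mathrm{Null}(\mathbf{H})$, which is independent of $\mathbf{L}$. The resulting $\mathbf{S}_I$ still has orthonormal rows, so Ky Fan gives $\operatorname{tr}(\mathbf{S}_I\mathrm{Cov}(\mathbf{x}_n)\mathbf{S}_I^\top)\le\sum_{i\le p}\lambda_i$. Dividing by the common denominator yields $C_L(p)\ge C_I(p)$ for all $p=1,\dots,n-m$.

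Finally, I will note two consequences. Equality holds at $p=n-m$, where both coverages equal $1$. Equality also holds whenever $\mathrm{span}(\mathbf{S}_I^\top)$ happens to coincide with the top-$p$ eigenspace, for instance when $\mathbf{L}$ acts as a multiple of the identity on $\mathrm{Null}(\mathbf{H})$. This explains why the inequality is not strict in general.
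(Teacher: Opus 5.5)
Your Ky Fan argument is correct for the reading you adopt. That reading differs from the paper's, however, and as written your proof does not establish the comparison the paper actually makes.

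You keep the prior fixed at precision $\alpha\mathbf{L}+\epsilon\mathbf{I}$ and vary only the projector. Your $C_I(p)$ is therefore the coverage of an arbitrary orthonormal basis of $\Null(\mathbf{H})$ under the $\mathbf{L}$-prior. The paper instead reads ``the coverage when $\mathbf{L}=\mathbf{I}$'' as the coverage when the GMRF itself uses $\mathbf{L}=\mathbf{I}$. Then $\mathbf{T}=\mathbf{P}_n$, the null-space covariance is flat, and every orthonormal $p$-frame in $\Null(\mathbf{H})$ has coverage exactly $p/q$, with $q=n-m$. The main text uses the theorem this way (``$C_{\mathbf{I}}(p)=p/q$'').

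The paper does the same trace computation you do, giving $C_L(p)=\sum_{i\le p}\lambda_i/\sum_{i\le q}\lambda_i$. It then finishes with the averaging inequality $\frac{1}{p}\sum_{i\le p}\lambda_i\ge\frac{1}{q}\sum_{i\le q}\lambda_i$ for the nonincreasing sequence $(\lambda_i)$. You never compare $C_L(p)$ with $p/q$, so you need to add this line. It also follows inside your framework. Take all frames formed by $p$-element subsets of any orthonormal null basis; their coverages average to $p/q$, so some frame reaches $p/q$, and Ky Fan gives $C_L(p)\ge p/q$. Your equality discussion likewise belongs to your reading. Under the paper's reading, the inequality is strict for every $p<q$ unless $\lambda_1=\cdots=\lambda_q$.

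The two routes buy different things. Your argument is an optimality result on fixed data: the smoothest null modes maximize coverage among all $p$-dimensional subspaces of $\Null(\mathbf{H})$. That is the comparison actually plotted in Fig.~\ref{fig:SPC_PredCov_vs_p}(a), and it holds for every basis returned for the degenerate eigenvalue of $\mathbf{P}_n$. The paper's argument compares two priors and yields the explicit linear baseline $p/q$ with a clean strictness criterion.

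Your caution about the spectral form of $\Cov(\mathbf{x}_n)$ is justified. For $\mathbf{x}\sim\mathcal{N}(\mathbf{0},\mathbf{Q}^{-1})$, the marginal null block is $\mathbf{C}_{nn}=\mathbf{Q}_{nn}^{-1}+\mathbf{Q}_{nn}^{-1}\mathbf{Q}_{nr}\mathbf{C}_{rr}\mathbf{Q}_{rn}\mathbf{Q}_{nn}^{-1}$, which is the paper's own identity in the proof of Proposition~\ref{thm:modeR2-bold}. This is not $(\alpha\mathbf{T}+\epsilon\mathbf{P}_n)^{\dagger}$ in general. The paper's proof simply takes the spectral form as given. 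Your explicit null-restricted model is the right way to make that hypothesis precise. So is restricting $\mathbf{V}$ to $\Null(\mathbf{H})$, so that the $m$ zero eigenvalues of $\mathbf{T}$ on $\Range(\mathbf{H}^\top)$ do not enter.
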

The proof can be found in Supp. \ref{app:proof_cov}. 

We illustrate this effect in Fig. \ref{fig:graph_coverage}(b)-(c). In \ref{fig:graph_coverage}(b), the representation error of the RNSD decreases more rapidly when using $\mathbf{L}_{8nn}$ and $  \mathbf{L}_{4nn}$ compared to $\mathbf{I}$. In \ref{fig:graph_coverage}(c), the eigenvalue spectra of $\mathbf{T}$ reveal that, unlike the flat spectrum of the pure NS projector $\mathbf{P}_n$, graph Laplacians exhibit smoothly increasing eigenvalues. From Theorem \ref{th:coverage}, a flat spectrum ($\mu_i=1$ for all $i$ as with $\mathbf{L}=\mathbf{I}$) yields $C_{\mathbf{I}}(p)=p/q$, so coverage grows only linearly with $p$, whereas Laplacians with increasing $\{\mu_i\}$ achieve faster coverage growth.

Our second theoretical result is a minimax optimality on the NS coverage by the NS scheme. First, define the feasible set of NS components
\vspace{-0.2cm}
\begin{equation}
\vspace{-0.2cm}
\mathcal{M}_\tau
:= \bigl\{\mathbf{x}_n\in \mathrm{Null}(\mathbf{H}):\ \mathbf{x}_n^\top \mathbf{L}\,\mathbf{x}_n \le \tau\bigr\} \nonumber
\label{eq:Mtau}
\end{equation}

\begin{theorem}[Minimax optimality]\label{th:Th1}
Let \(V_p=\mathrm{span}\{\mathbf{v}_1,\dots,\mathbf{v}_p\}\) and \(P_{V_p}\) be its projector. Then, among all \(p\)-dimensional \(V\subset\mathrm{Null}(\mathbf{H})\), 
provided \(\mu_{p+1}>0\),
\vspace{-0.1cm}
\begin{align*}
\vspace{-0.1cm}
\min_{\dim(V)=p}\;
    &\sup_{\mathbf{x}_n \in \mathcal{M}_\tau}\;
    \|(\mathbf{I} - P_V)\mathbf{x}_n\|_2^2 = \\
    &\sup_{\mathbf{x}_n \in \mathcal{M}_\tau}\;
    \|(\mathbf{I} - P_{V_p})\mathbf{x}_n\|_2^2 = \frac{\tau}{\mu_{p+1}}.
\end{align*}
\end{theorem}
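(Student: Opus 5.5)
This is a Kolmogorov $p$-width argument for an ellipsoid, carried out inside $\Null(\mathbf{H})$. Let $q=n-m$ be the dimension of $\Null(\mathbf{H})$.

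\textbf{Spectral setup.} First I would fix the spectral structure of $\mathbf{T}$ restricted to the null-space. For $\mathbf{x}_n\in\Null(\mathbf{H})$ we have $\mathbf{P}_n\mathbf{x}_n=\mathbf{x}_n$, so $\mathbf{x}_n^\top\mathbf{L}\mathbf{x}_n=\mathbf{x}_n^\top\mathbf{T}\mathbf{x}_n$. Moreover $\mathbf{T}$ vanishes on $\Range(\mathbf{H}^\top)$ and maps $\Null(\mathbf{H})$ into itself. Hence we may take an orthonormal eigenbasis $\mathbf{v}_1,\dots,\mathbf{v}_q$ of $\Null(\mathbf{H})$ with eigenvalues $\mu_1\le\dots\le\mu_q$. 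This is the basis intended in the construction of $\mathbf{S}$ in \eqref{eq:S_definition}: the trivial zero eigenvalues from the range directions are discarded, so that $\mathbf{v}_i\in\Null(\mathbf{H})$. Writing $\mathbf{x}_n=\sum_i c_i\mathbf{v}_i$, the constraint becomes $\sum_i\mu_i c_i^2\le\tau$, so $\mathcal{M}_\tau$ is an ellipsoid with semi-axes $\sqrt{\tau/\mu_i}$.

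\textbf{Upper bound for $V_p$.} Next I would bound the residual for the specific subspace $V_p$:
\[
\|(\mathbf{I}-P_{V_p})\mathbf{x}_n\|_2^2=\sum_{i>p}c_i^2\le\frac{1}{\mu_{p+1}}\sum_{i>p}\mu_i c_i^2\le\frac{\tau}{\mu_{p+1}},
\]
where the first inequality uses $\mu_i\ge\mu_{p+1}>0$ for $i>p$. Equality is attained at $\mathbf{x}_n=\sqrt{\tau/\mu_{p+1}}\,\mathbf{v}_{p+1}$, so the supremum for $V_p$ equals $\tau/\mu_{p+1}$.

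\textbf{Lower bound for arbitrary $V$ (main step).} Let $V\subset\Null(\mathbf{H})$ be any subspace with $\dim V=p$. Consider $W=\mathrm{span}\{\mathbf{v}_1,\dots,\mathbf{v}_{p+1}\}$, which has dimension $p+1$. By a dimension count, $W\cap V^\perp$ contains a unit vector $\mathbf{w}=\sum_{i\le p+1}c_i\mathbf{v}_i$; here the orthogonal complement of $V$ is taken within $\Null(\mathbf{H})$, which is legitimate because $W\subset\Null(\mathbf{H})$. For this vector, $\mathbf{w}^\top\mathbf{T}\mathbf{w}=\sum_{i\le p+1}\mu_i c_i^2\le\mu_{p+1}$. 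Scaling gives $\mathbf{x}_n=\sqrt{\tau/\mu_{p+1}}\,\mathbf{w}$, which lies in $\mathcal{M}_\tau$ and satisfies $P_V\mathbf{x}_n=\mathbf{0}$. Its residual is therefore $\|\mathbf{x}_n\|^2=\tau/\mu_{p+1}$, so every admissible $V$ has worst-case error at least $\tau/\mu_{p+1}$.

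Combining the two bounds shows that $V_p$ attains the minimax value, which equals $\tau/\mu_{p+1}$. The step needing the most care is the lower bound, in particular keeping the intersection argument inside $\Null(\mathbf{H})$ and handling possible zero eigenvalues. If $\mu_i=0$ for some $i\le p$, those directions are unbounded in $\mathcal{M}_\tau$. This causes no difficulty: $V_p$ contains them, so the upper bound is unaffected, and in the lower bound a zero $\mu_i$ only makes the Rayleigh quotient smaller. The hypothesis $\mu_{p+1}>0$ is exactly what makes the value finite.
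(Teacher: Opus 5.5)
Your proof is correct and follows the paper's argument. You use the same eigen-expansion upper bound for $V_p$, attained at $\sqrt{\tau/\mu_{p+1}}\,\mathbf{v}_{p+1}$, and a lower bound from a vector of $V^\perp\cap\Null(\mathbf{H})$ whose Rayleigh quotient is at most $\mu_{p+1}$. The one difference is that the paper obtains this vector by citing Courant--Fischer for $W=V^\perp$, while your dimension count with $\mathrm{span}\{\mathbf{v}_1,\dots,\mathbf{v}_{p+1}\}$ proves that step inline and avoids dividing by a possibly vanishing $\widetilde{\mu}_{\min}(W)$.
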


\noindent The proof of this theorem is in Supp. \ref{app:minmax_thm}. 
\vspace{-0.1cm}
\begin{remark}
    Selecting $\mathbf{S}$ so that its rows are $\{\mathbf{v}_1^\top,\dots,\mathbf{v}_p^\top\}$ is {provably optimal}
for covering the NS under the graph energy $\mathbf{x}^\top \mathbf{T} \mathbf{x}$.
As $p$ grows, the guaranteed worst-case error decays with $1/\mu_{p+1}$, which is fast for graph Laplacians
(where many $\mu_j$ are small). With $\mathbf{L=I}$ one gets $\mathbf{T=P}_n$ (flat spectrum), hence
$\mu_{p+1}=1$ and no improvement in the bound, explaining why geometry-free designs cannot
cover the NS efficiently.
\end{remark}

\subsection{Null-space projections predictability}
\vspace{-0.2cm}

It is also important to quantify the predictability level of the NS projections $\mathbf{Sx}^{\ast}$ from the acquired measurements $\mathbf{y}$. We develop a \textit{statistical coupling} analysis via linear estimation.

\vspace{-0.1cm}
\begin{proposition}[Per–mode predictability bound]
\label{thm:modeR2-bold}
Let $\mathbf{x}\!\in\!\mathbb{R}^n$ be zero–mean Gaussian with covariance $\mathbf{C}$ and precision
$\mathbf{Q}=\mathbf{C}^{-1}$. Let $\mathbf{H}\!\in\!\mathbb{R}^{m\times n}$ and denote the orthogonal
projectors onto $\mathcal{R}:=\Range(\mathbf{H}^\top)$ and $\mathcal{N}:=\Null(\mathbf{H})$ by
$\mathbf{P}_r$ and
$\mathbf{P}_n$, respectively.
Block $\mathbf{Q}$ and $\mathbf{C}$ with respect to the decomposition $\mathbb{R}^n=\mathcal{R}\oplus\mathcal{N}$:
\vspace{-0.2cm}
\begin{equation*}
\vspace{-0.2cm}
    \mathbf{Q}=
\begin{bmatrix}\mathbf{Q}_{rr}&\mathbf{Q}_{rn}\\ \mathbf{Q}_{nr}&\mathbf{Q}_{nn}\end{bmatrix},\qquad
\mathbf{C}=
\begin{bmatrix}\mathbf{C}_{rr}&\mathbf{C}_{rn}\\ \mathbf{C}_{nr}&\mathbf{C}_{nn}\end{bmatrix}.
\end{equation*}
Assume $\mathbf{Q}_{nn}\succ 0$ on $\mathcal{N}$ and let $\{\mathbf{v}_j\}$ be an orthonormal eigenbasis of
$\mathbf{Q}_{nn}$ with $\mathbf{Q}_{nn}\mathbf{v}_j=\mu_j\,\mathbf{v}_j$ and $\mu_j>0$.
Define the $j$‑th null coefficient $a_j:=\mathbf{v}_j^\top \mathbf{x}_n$ where $\mathbf{x}_n:=\mathbf{P}_n\mathbf{x}$.
Consider measurements from \eqref{eq:inverse_problem}
and denote $\mathbf{C}_y:=\mathbf{H}\mathbf{C}_{rr}\mathbf{H}^\top+\sigma^2\mathbf{I}_m$.
Then the population $R^2$ of the optimal linear predictor of $a_j$ from $\mathbf{y}$ satisfies
\vspace{-0.2cm}
\begin{equation*}
\vspace{-0.2cm}
    \rho_j^2
:=\frac{\Cov(a_j,\mathbf{y})^\top\,\mathbf{C}_y^{-1}\,\Cov(\mathbf{y},a_j)}{\Var(a_j)}
\;\le\;
\frac{c_j}{c_j+\mu_j},
\end{equation*}
\[
c_j:=\mathbf{v}_j^\top\!\big(\mathbf{Q}_{rn}\mathbf{C}_{rr}\mathbf{Q}_{nr}\big)\mathbf{v}_j.
\]
\end{proposition}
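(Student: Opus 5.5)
The plan is to compare the predictor that uses $\mathbf{y}$ with an oracle that observes the range component $\mathbf{x}_r:=\mathbf{P}_r\mathbf{x}$ exactly. I will show that the oracle's $R^2$ equals $c_j/(c_j+\mu_j)$ exactly, using block-inverse (Schur complement) identities between $\mathbf{Q}$ and $\mathbf{C}$. Throughout, all blocks are regarded as operators between $\mathcal{R}$ and $\mathcal{N}$, so that $\mathbf{C}_{rr}$ is invertible on $\mathcal{R}$ and $\mathbf{Q}_{nn}$ is invertible on $\mathcal{N}$.

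\textbf{Step 1 (reduce $\mathbf{y}$ to $\mathbf{x}_r$).} Since $\mathbf{H}\mathbf{x}_n=\mathbf{0}$, we have $\mathbf{y}=\mathbf{H}\mathbf{x}_r+\boldsymbol{\omega}$, with $\boldsymbol{\omega}$ independent of $\mathbf{x}$. Hence $\Cov(\mathbf{y},a_j)=\mathbf{H}\mathbf{C}_{rn}\mathbf{v}_j$ and $\Cov(\mathbf{y})=\mathbf{C}_y$. The numerator of $\rho_j^2$ is therefore
\[
\mathbf{v}_j^\top\mathbf{C}_{nr}\,\mathbf{M}\,\mathbf{C}_{rn}\mathbf{v}_j,\qquad \mathbf{M}:=\mathbf{H}^\top(\mathbf{H}\mathbf{C}_{rr}\mathbf{H}^\top+\sigma^2\mathbf{I}_m)^{-1}\mathbf{H}.
\]
I will prove $\mathbf{M}\preceq\mathbf{C}_{rr}^{-1}$ on $\mathcal{R}$. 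Conceptually this is the statement that linear prediction from $\mathbf{y}$, which is a noisy linear function of $\mathbf{x}_r$, cannot beat prediction from $\mathbf{x}_r$ itself, since any linear function of $\mathbf{y}$ is a linear function of $(\mathbf{x}_r,\boldsymbol{\omega})$ and $\boldsymbol{\omega}\perp\mathbf{x}$.

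Algebraically, I will write $\mathbf{B}:=\mathbf{H}\mathbf{C}_{rr}^{1/2}$, so that $\mathbf{C}_{rr}^{1/2}\mathbf{M}\mathbf{C}_{rr}^{1/2}=\mathbf{B}^\top(\mathbf{B}\mathbf{B}^\top+\sigma^2\mathbf{I})^{-1}\mathbf{B}$. This matrix has eigenvalues $s^2/(s^2+\sigma^2)\le 1$, where $s$ ranges over the singular values of $\mathbf{B}$, which gives $\mathbf{M}\preceq\mathbf{C}_{rr}^{-1}$. Consequently
\[
\rho_j^2\le \frac{\mathbf{v}_j^\top\mathbf{C}_{nr}\mathbf{C}_{rr}^{-1}\mathbf{C}_{rn}\mathbf{v}_j}{\mathbf{v}_j^\top\mathbf{C}_{nn}\mathbf{v}_j}.
\]

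\textbf{Step 2 (block identities).} From $\mathbf{Q}\mathbf{C}=\mathbf{I}$, the $(n,r)$ block gives $\mathbf{Q}_{nr}\mathbf{C}_{rr}+\mathbf{Q}_{nn}\mathbf{C}_{nr}=\mathbf{0}$, so
\[
\mathbf{C}_{nr}=-\mathbf{Q}_{nn}^{-1}\mathbf{Q}_{nr}\mathbf{C}_{rr}.
\]
The Schur complement identity gives
\[
\mathbf{C}_{nn}-\mathbf{C}_{nr}\mathbf{C}_{rr}^{-1}\mathbf{C}_{rn}=\mathbf{Q}_{nn}^{-1}.
\]
Substituting the first identity yields
\[
\mathbf{C}_{nr}\mathbf{C}_{rr}^{-1}\mathbf{C}_{rn}=\mathbf{Q}_{nn}^{-1}\mathbf{Q}_{nr}\mathbf{C}_{rr}\mathbf{Q}_{rn}\mathbf{Q}_{nn}^{-1}.
\]

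\textbf{Step 3 (evaluate on the eigenvector).} Using $\mathbf{Q}_{nn}^{-1}\mathbf{v}_j=\mu_j^{-1}\mathbf{v}_j$, the oracle numerator equals $c_j/\mu_j^2$. The variance is $\Var(a_j)=\mathbf{v}_j^\top\mathbf{C}_{nn}\mathbf{v}_j=1/\mu_j+c_j/\mu_j^2$. Their ratio is exactly $c_j/(c_j+\mu_j)$, which proves the bound.

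\textbf{Main obstacle.} The main care is needed in Step 1. One must justify the operator inequality on $\mathcal{R}$ rather than on $\mathbb{R}^n$, and handle the case where $\mathbf{H}$ has nontrivial row-space deficiency. The case $\sigma=0$ with non-injective $\mathbf{B}$ is handled by the same eigenvalue argument, using a pseudo-inverse. Everything else is routine block algebra. Equality holds precisely when the noise is absent and $\mathbf{H}$ is injective on $\mathcal{R}$.
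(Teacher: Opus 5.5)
Your proof is correct and is essentially the paper's argument. It uses the same identity $\mathbf{C}_{nr}=-\mathbf{Q}_{nn}^{-1}\mathbf{Q}_{nr}\mathbf{C}_{rr}$ with the Schur-complement expression for $\mathbf{C}_{nn}$, the same whitening by $\mathbf{H}\mathbf{C}_{rr}^{1/2}$ with the SVD bound $s^2/(s^2+\sigma^2)\le 1$, and the same evaluation on $\mathbf{v}_j$; you have only reordered the steps so the intermediate bound is visibly the $R^2$ of the oracle linear predictor from $\mathbf{x}_r$.

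Two small notes. Like the paper, you tacitly read $c_j$ as $\mathbf{v}_j^\top\mathbf{Q}_{nr}\mathbf{C}_{rr}\mathbf{Q}_{rn}\mathbf{v}_j$; the block order in the statement is a typo, since $\mathbf{v}_j\in\mathcal{N}$ makes the literal expression vanish. Also, $\mathbf{H}$ is always injective on $\Range(\mathbf{H}^\top)$, so your equality condition reduces to $\sigma=0$ (or $c_j=0$).
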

\vspace{-0.2cm}
The proof of this proposition is found in Supp. \ref{app:proof_stats}.
\vspace{-0.1cm}

\begin{remark}
 If the prior is a GMRF with precision $\mathbf{Q}=\alpha\mathbf{L}+\varepsilon\mathbf{I}$ on $\mathbf{L}$, then $\mathbf{Q}_{nn}=\alpha\mathbf{T}+\varepsilon\mathbf{P}_n$ with
$\mathbf{T}:=\mathbf{P}_n\mathbf{L}\mathbf{P}_n$, and one can take $\{\mathbf{v}_j\}$ as the
$\mathbf{T}$–eigenmodes (graph‑smooth null modes). The bound becomes
\vspace{-0.2cm}
\begin{equation*}
\vspace{-0.2cm}
    \rho_j^2\;\le\;\frac{c_j}{c_j+\alpha\,\lambda_j(\mathbf{T})+\varepsilon},
\quad
c_j=\mathbf{v}_j^\top\big(\mathbf{Q}_{rn}\mathbf{C}_{rr}\mathbf{Q}_{nr}\big)\mathbf{v}_j,
\end{equation*}
with $\mathbf{Q}_{rn}=\alpha\mathbf{L}_{rn}$. Thus smoother null modes (small $\lambda_j(\mathbf{T})$) are more predictable. Now, if $\mathbf{L}=\mathbf{I}$ (isotropic prior), then $\mathbf{Q}=\gamma\mathbf{I}$ and
$\mathbf{Q}_{rn}=\mathbf{P}_r\,\mathbf{Q}\,\mathbf{P}_n=\mathbf{0}$, so $c_j=0$ and
$\rho_j^2\le 0$ for all $j$: there is \emph{no} statistical coupling between $\mathcal{R}$ and $\mathcal{N}$.
\end{remark}

\begin{figure}[!t]
    \centering
    \includegraphics[width=\linewidth]{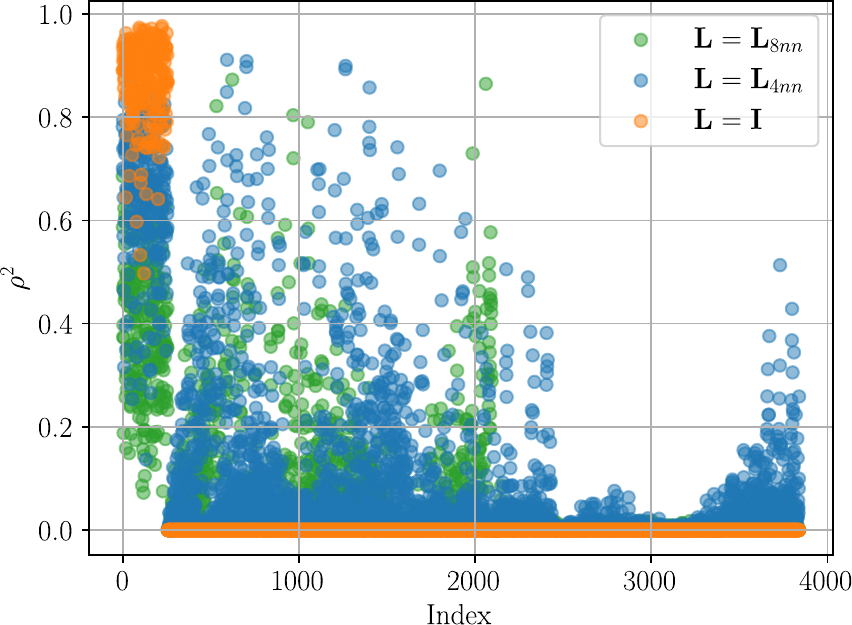}
    \vspace{-0.7cm}
    \caption{ Per-mode predictability for each case when $\mathbf{L} \in \{ \mathbf{I},\mathbf{L}_{8nn},\mathbf{L}_{4nn} \} $ for SR case with $SRF = 4$ with $n=64^2$.}
    \label{fig:predict}
\vspace{-0.6cm}
\end{figure}

\vspace{-0.2cm}
\noindent To evaluate the developed theory, Fig. \ref{fig:predict},  the
empirical per–mode predictability $\rho_j^2$ (one dot per eigenmode), for three choices of $\mathbf{L} \in \{ \mathbf{L}_{8nn},\mathbf{L}_{4nn},\mathbf{I} \} $, where $\mathbf{L=I}$ has same index ordering across panels. From indices $[0,m-1]$, the predictability is high for all settings, since they correspond to the eigenpairs related to the Range space, thus, the predictability is high.  The observations are:  (i) for $\mathbf{L=I}$, the spectrum is constant over the NS and then drops to $0$ outside of it, while both grid Laplacians exhibit a {smooth, monotone} growth of $\mu_j$;  (ii) correspondingly, with grid Laplacians, a {long tail} of modes has non‑zero predictability,
($\rho_j^2$ spread over a wide index range), whereas with $\mathbf{L=I}$ only a small initial block of modes
shows any predictability, and the rest are essentially $0$.  Note also that $\mathbf{L}_{4nn}$ has higher predictability than $\mathbf{L}_{8nn}$ in more eigenmodes, following the fast increase eigenvalue distribution Fig \ref{fig:graph_coverage} (c).

Note that this analysis was using a linear estimation framework, harnessing the NS GMRF prior. In practice, the \textit{statistical coupling} can also be achieved with the supervised learning of the predictor $\mathrm{G}$ in Eq. \eqref{eq:train-g}, but the trend of GS prior holds on this setting. To evaluate the predictability of the learned predictor, we used
\vspace{-0.3cm}
\begin{equation}
\vspace{-0.3cm}
    R^2(p) \triangleq 1-\frac{\mathbb{E}\left[\left\|\mathrm{G}^*(\mathbf{y})-\mathbf{S x}^{\ast}\right\|_2^2\right]}{\mathbb{E}\left[\left\|\mathbf{S x}^{\ast}\right\|_2^2\right]}.
\end{equation}

\subsection{Graph-based regularizer}
\label{sec:Graph-based_reg}

Consider the PnP–PGD iteration solving the optimization problem \eqref{eq:recon} without the term $\Vert \mathrm{G}^*(\mathbf{y})-\mathbf{Sx}\Vert$ (for more detailed analysis of this term, see \cite[Theorem 1]{Neurips}),   
\vspace{-0.6cm}

\begin{figure}[!t]
\centering
\includegraphics[width=\linewidth]{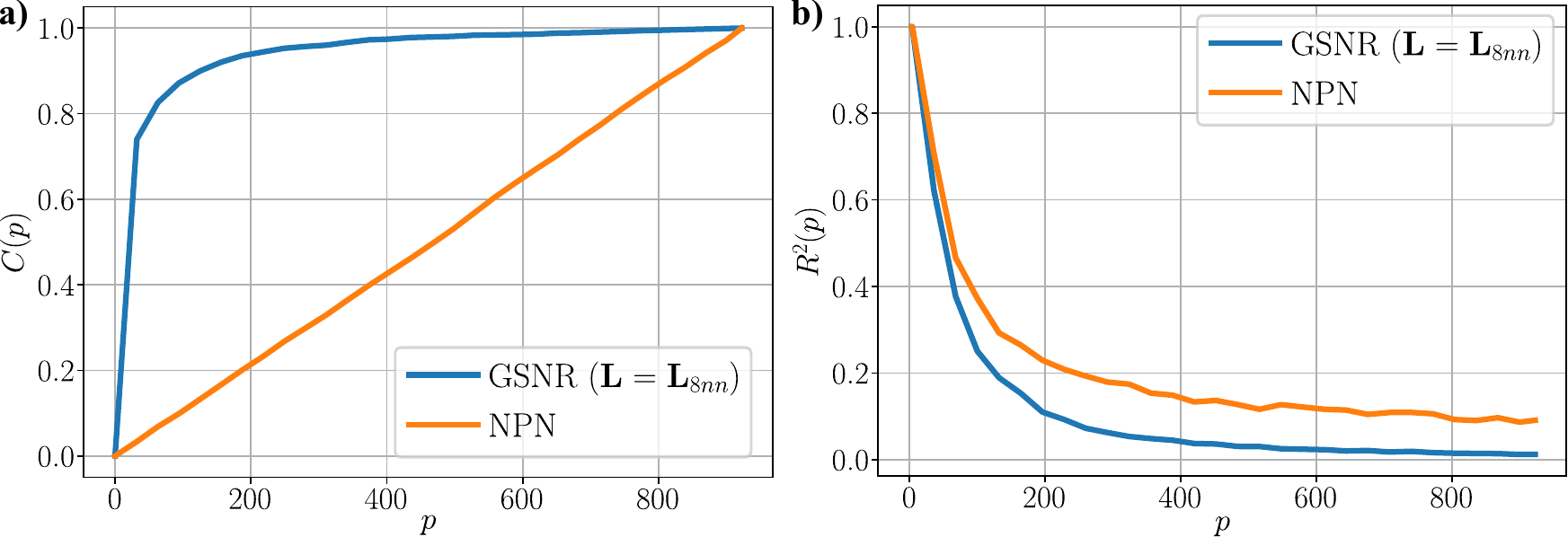}
\vspace{-0.7cm}
\caption{a) {Coverage} and b) Predictability vs. $p$ for CS with CIFAR-10 dataset. In this case, $m/n=0.1$ and $p = 1 \cdots n-m$.}
\label{fig:SPC_PredCov_vs_p}
\vspace{-0.6cm}
\end{figure}

\begin{figure*}[!t]
    \centering
\vspace{-0.3cm}
    \includegraphics[width=0.95\linewidth]{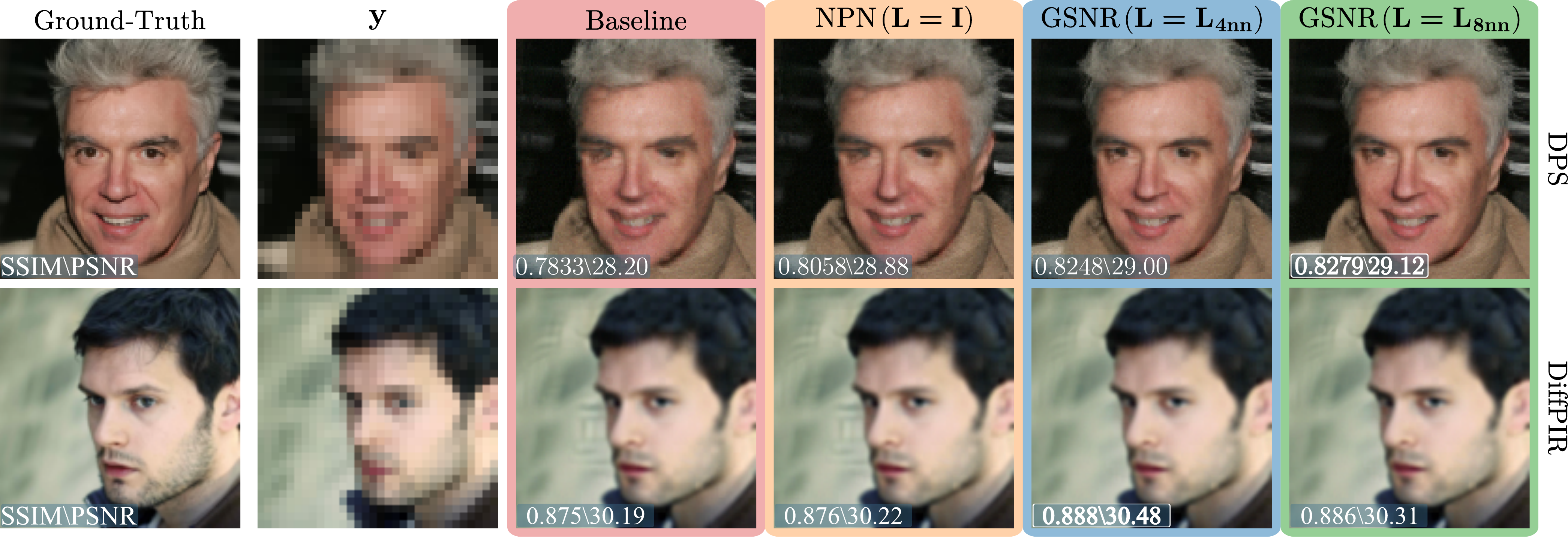}
\vspace{-0.3cm}
    \caption{Results of DM-based solvers (DPS \cite{dps} \& DiffPIR \cite{DiffPIR}) for Baseline, NPN \cite{Neurips}, and GSNR with $\mathbf{L_{4nn}}$ and $\mathbf{L_{8nn}}$. Here, $p=0.1n$.}
    \label{fig:dm}
\vspace{-0.55cm}
\end{figure*}

\begin{align}   
\vspace{-0.3cm}
\mathbf{x}_{k+1}&
= \mathrm{D}_\sigma\!\Big(\mathbf{x}_k-\alpha\big(\nabla g(\mathbf{x}_k)+\gamma_g\nabla \phi(\mathbf{x}_k)\big)\Big)\\&
= \mathrm{D}_\sigma\!\Big(\mathbf{x}_k-\alpha\big(\mathbf{H}^\top(\mathbf{Hx}_k-\mathbf{y})+\gamma_g\mathbf{Tx}_k\big)\Big),
\end{align}

\vspace{-0.2cm}
\noindent where $\mathrm{D}_\sigma$ is a bounded denoiser with $\|\mathrm{D}_\sigma(\mathbf{u})-\mathrm{D}_\sigma(\mathbf{v})\|\le\sqrt{1+\delta}\,\|\mathbf{u-v}\|$
(as in \cite[Assumption~2]{Neurips}), since $\nabla \phi(\mathbf{x}_k) = \mathbf{T}^\top \mathbf{x}_k = \mathbf{T} \mathbf{x}_k $. Let $\lambda_{\min}>0$ and $\lambda_{\max}$ be the extreme eigenvalues of $\mathbf{A}_{\gamma_g}=\mathbf{H^\top H+\gamma_g T}$ on
$\mathrm{span}(\mathrm{Range}(\mathbf{H}^\top)\cup\mathrm{Null}(\mathbf{H}))=\mathbb{R}^n$. Then for any step size $\alpha\in(0,\,2/\lambda_{\max})$,   
\vspace{-0.2cm}
\begin{align}   
\vspace{-0.2cm}
\|\mathbf{x}_{k+1}-\mathbf{x}^*\|
\;\le\;&
(1+\delta)\,\|\mathbf{I}-\alpha \mathbf{A}_{\gamma_g}\|_2\,\|\mathbf{x}_k-\mathbf{x}^*\|\\ \;\le\; &
\rho\,\|\mathbf{x}_k-\mathbf{x}^*\|,
\end{align}

\vspace{-0.25cm}
\noindent where $\rho=(1+\delta)\max\{\,1-\alpha\lambda_{\min},\,\alpha\lambda_{\max}-1\,\}$.
In particular, if ${\rho}<1$,
The iteration is a contraction and converges linearly to a fixed point.
The optimal $\alpha$ (for this linear bound) is $\alpha^*=\tfrac{2}{\lambda_{\min}+\lambda_{\max}}$
with rate
$\rho^*=(1+\delta)\,\frac{\kappa-1}{\kappa+1}$, $\kappa=\frac{\lambda_{\max}}{\lambda_{\min}}.$ Compared with the baseline ($\gamma_g=0$), adding $\gamma_g \mathbf{T}$: (i) makes $\mathbf{A}_{\gamma_g}\succ 0$ even on
$\mathrm{Null}(\mathbf{H})$, guaranteeing uniqueness of the quadratic subproblem, and (ii)
\emph{reduces} the condition number $\kappa$ because $\mathbf{T}$ acts precisely on directions where
$\mathbf{H}^\top \mathbf{H}$ is small, improving convergence and $\rho^*$.

\vspace{-0.2cm}
\section{Experiments}
\vspace{-0.2cm}

To validate the proposed approach, we implemented it in PyTorch~\cite{paszke2019pytorch} and used the DeepInverse library~\cite{tachella2025deepinverse} to benchmark different solvers, as detailed below. Since our work does not propose a specific network $\mathrm{G}$, we validate the approach using a U-Net model~\cite{unet} with a four-level 2D encoder--decoder architecture: the encoder applies double convolution blocks with $C, 2C, 4C,$ and $8C$ channels (with $C = 64$) and $2\times2$ max-pooling, while the decoder uses nearest-neighbor upsampling with skip connections, refining features back to $C$ channels before a final $1\times1$ convolution produces the output. The network is optimized using the Adam optimizer~\cite{adam} with a learning rate of $10^{-3}$ for 50 epochs. For all experiments, we used a noise variance $\sigma^2 = 0.05$. Code at \footnote{\href{http://github.com/yromariogh/GSNR}{github.com/yromariogh/GSNR}}. We consider the inverse problems:

\textbf{Compressed Sensing (CS):} $\mathbf{H} \in \{-1,1\}^{m \times n}$ are the first 10\% rows of a Hadamard matrix $\mathbf{A} \in \{-1,1\}^{n \times n}$~\cite{cs}. We validate with CIFAR-10 \cite{krizhevsky2009learning} and CelebA \cite{liu2015faceattributes} datasets. 

\textbf{Super-Resolution (SR):} We set $n = 3 \cdot 128^2$ and use an SR factor (SRF) of 4, i.e., $m = 3 \cdot 32^2$. We validate on the CelebA dataset~\cite{liu2015faceattributes}, using 10000 training images resized to $128\times128$. We also show results for the Synthetic Aperture Radar (SAR) image dataset~\cite{dalsasso2020sar}.

\textbf{Demosaicing:} We used the CelebA dataset, resized to $64\times 64$, and the Bayer filter \cite{bayer1976color} acquisition pattern.

\textbf{Deblurring:} We set $n = 64^2 \cdot 3$ and used a 2-D Gaussian kernel with a bandwidth $\sigma_k = 1$. We validated our experiments on the CelebA \cite{liu2015faceattributes} dataset and the Places365 \cite{zhou2017places} dataset, both resized to $64\times 64$, using 10000 images for training and a batch size of 32. %

\noindent For all of these imaging tasks, we construct the null-restricted Laplacian $\mathbf{T}$ and the matrix $\mathbf{S}$, leveraging properties of $\mathbf{H}$. Details on the computation of these matrices in \ref{app:t_constructio}. We provide comprehensive ablations of GSNR in \ref{app:graph_reg}.

\vspace{-0.2cm}
\subsection{Predictability and Coverage} 
\vspace{-0.2cm}

The coverage definition in \eqref{eq:Cp}, has a general expression for the covariance
matrix computation. In practice, we computed as follows. Given centered images
$\mathbf{x}_i$ and their null projections $\mathbf{x}_{n,i} = \mathbf{P}_n (\mathbf{x}_i - \bar{\mathbf{x}})$, where
$\bar{\mathbf{x}} = \frac{1}{N} \sum_{i=1}^N \mathbf{x}_i$ and the NS covariance $\mathrm{Cov}(\mathbf{x}_n) = \frac{1}{N} \sum_{i=1}^N \mathbf{x}_{n,i} \mathbf{x}_{n,i}^\top$. Fig. \ref{fig:SPC_PredCov_vs_p} shows the results in terms of (a) Coverage and (b) Predictability in CS for different operators $\mathbf{S}$, under the same acquisition matrix $\mathbf{H}$. We used the $N=10000$ image of the CIFAR-10 dataset. In this case, NPN ($\mathbf{L=I}$) is obtained following \cite[Eq. 3]{Neurips}, and GSNR uses $\mathbf{L=L}_{8nn}$. Note that here we take the first $p$ rows of each $\mathbf{S}$. Supp. \ref{app:cs_results} shows additional results for CS. \ref{app:coverage_in_other_tasks} shows the coverage curves for the SR task.

\vspace{-0.2cm}
\subsection{Diffusion Solvers}
\vspace{-0.2cm}

We used two diffusion model (DM) solvers, DPS \cite{dps} and DiffPIR \cite{DiffPIR}. See Supp. \ref{app:GSNR_in_DM} for more details on the adaptation of these models with the proposed approach. Fig.~\ref{fig:dm} compares DPS and DiffPIR with and without the proposed GSNR term. In all cases, the baseline DM already produces plausible faces, but it often leaves residual blur and aliasing around high–frequency structures such as hair, eyebrows, and facial contours. When we replace the generic NS basis by the GSNR, the reconstructions become visibly sharper and more faithful to the ground truth: edges around the jawline and nose are better defined, textures are less washed out, and blocky artifacts from the low-resolution observation $\mathbf{y}$ are further suppressed. These qualitative improvements are consistent with the quantitative SSIM/PSNR numbers printed in each panel: the gains over the baseline DiffPIR/DPS are modest in dB but systematic (e.g., $30.19\to 30.31$; $30.19\to 30.48$), indicating that the GSNR term helps the diffusion prior resolve NS ambiguities rather than hallucinating arbitrary details. In other words, the DM still provides the powerful image prior, but GSNR steers it along graph-smooth NS directions that are predictable from $\mathbf{y}$ and compatible with the forward model, yielding more accurate and stable reconstructions. Supp. \ref{app:GSNR_in_DM} shows the integration of GSNR in latent-space DMs \cite{he2024manifold}.

\begin{table}[!t]
\centering
\caption{Quantitative comparison on the super-resolution and demosaicing tasks (higher is better) using the CelebA test set. For each task, the \best{best} result and \second{second-best} results.}
\vspace{-0.3cm}
\label{tab:sr_demosaic}
\resizebox{\linewidth}{!}{%
\begin{tabular}{lcc}
\toprule
\multirow{2}{*}{Method} & \multicolumn{2}{c}{Task} \\
\cmidrule(lr){2-3}
 & Super-Resolution & Demosaicing \\
\midrule
PnP \cite{kamilov2023plug}          & 27.37           & 39.35 \\
NPN \cite{Neurips}                                     & 29.21           & 39.77 \\
DDN-Cascade \cite{chen2020deep}         & 28.92           & 37.83 \\
DDN-Independent \cite{chen2020deep}     & 28.81           & 38.92 \\
DNSN \cite{schwab2019deep}              & 26.52           & 39.33 \\
Unrolling \cite{xiang2021fista}         & 28.97           & 38.82 \\
GSNR-PnP w.\ $\mathbf{L}_{8\text{nn}}$  & \second{29.38}  & \second{39.88} \\
GSNR-PnP w.\ $\mathbf{L}_{4\text{nn}}$  & \best{29.42}    & \best{39.89} \\
\bottomrule
\end{tabular}%
}
\vspace{-0.6cm}
\end{table}

\vspace{-0.3cm}
\subsection{Comparison with End-to-End Methods}
\vspace{-0.2cm}

Table~\ref{tab:sr_demosaic} reports PSNR on SR and demosaicing, comparing classical PnP, the proposed GSNR-PnP variants, NPN, and several strong deep-learning baselines. Across both tasks, GSNR-PnP with graph Laplacians $\mathbf{L}_{4\text{nn}}$ and $\mathbf{L}_{8\text{nn}}$ achieves the best performance. For SR, GSNR-PnP yields roughly a $2$~dB gain over vanilla PGD-PnP and a clear margin over both NPN (with $\mathbf{L=I}$) and task-specific unrolled/cascade architectures. For demosaicing, the problem is less ill-posed, so the absolute gains are smaller, but GSNR-PnP still provides a consistent improvement over PGD-PnP, NPN, and the deep baselines. Overall, these results show that injecting graph-smooth NS structure into a standard PnP solver consistently improves upon (i) geometry-free PnP/NPN baselines and (ii) specialized deep unrolled or cascade architectures, with particularly strong benefits on the most underdetermined SR task. In \ref{app:demosaicking} and \ref{app:sr_results}, we show additional results for Demosaicking and SR, respectively.

\begin{table}[t]
\caption{Final PSNR (dB) for deblurring ($\sigma_k=1$) with $n=64^2\cdot 3$ and $p=0.8\cdot n$.
We report results on Places365~\cite{dalsasso2020sar} and CelebA~\cite{liu2015faceattributes} test sets
using Lip-DnCNN~\cite{ryu2019plug} and a wavelet denoiser~\cite{donoho2002noising}. For each column, \best{best} and \second{second-best} results.}   
\vspace{-0.3cm}
\label{tab:psnr_compact_deblur}
\centering
\resizebox{\linewidth}{!}{%
\begin{tabular}{lcccccc}
\toprule
\multirow{2}{*}{Method} & \multirow{2}{*}{$\gamma_{\mathrm{g}}$}
 & \multicolumn{2}{c}{Places365} & \multicolumn{2}{c}{CelebA} \\
\cmidrule(lr){3-4} \cmidrule(lr){5-6}
 & & Lip-DnCNN & Wavelet & Lip-DnCNN & Wavelet \\
\midrule
Baseline (PGD-PnP)      & --        & 31.58 & 30.78 & 35.26 & 33.64 \\
NPN~\cite{Neurips}      & --        & 33.22 & 33.17 & 37.86 & 37.68 \\
GSNR w.\ $\mathbf{L}_{8\text{nn}}$ & \ding{56} & \second{33.60} & \second{33.59} & \second{38.18} & \second{37.99} \\
GSNR w.\ $\mathbf{L}_{8\text{nn}}$ & \ding{51} & \best{33.60} & \best{33.59} & \best{38.18} & \best{37.99} \\
\bottomrule
\end{tabular}%
}
\vspace{-0.7cm}
\end{table}

\vspace{-0.2cm}
\subsection{Plug-and-Play}
\vspace{-0.2cm}

We used the PGD-PnP \cite{hurault2023relaxed, kamilov2023plug}, but GSNR can be easily adapted to other formulations, such as ADMM \cite{chan2016plug} or HQS \cite{rasti2023plug}.  We used three denoisers: DnCNN with Lipchitz training (Lip-DnCNN) \cite{ryu2019plug}, DRUNet \cite{zhang2021plug}, Wavelet denoiser (Daubechies 8 with 4 levels) using soft-thresholding proximal \cite{donoho2002noising}. For improving stability and performance of the PGD-PnP, we include the equivariant denoising technique \cite{terris2024equivariant}  with two random $90°$ rotations and two reflections.


\textbf{Deblurring}: Table~\ref{tab:psnr_compact_deblur} reports the final PSNR for deblurring on Places365 and CelebA under different graph variants and denoisers. In all settings, both NPN and GSNR substantially improve over the baseline PGD-PnP solver: on CelebA, the gain is on the order of $+2$~dB for both Lip-DnCNN and the wavelet denoiser, and on Places365, the gain is around $+1.5$–$2$~dB. The GSNR variant with $\mathbf{L}_{8\text{nn}}$ provides a small but consistent improvement over NPN on both datasets and for both denoisers, indicating that injecting NS structure into the NS helps the solver recover sharper details without
overfitting. Interestingly, turning the additional graph penalty on or off
($\gamma_{\mathrm{g}}$ \ding{51} vs.\ \ding{56}) has a negligible effect on the final PSNR in this experiment, suggesting that the learned GSNR already captures most of the useful graph structure for deblurring. Figure~\ref{fig:conv_db} shows the deblurring convergence behaviour with the DnCNN denoiser. The baseline PnP-PGD curve increases steadily but saturates at a lower PSNR plateau. In contrast, both GSNR
variants (with $\gamma_{\mathrm{g}}=0$ and $\gamma_{\mathrm{g}}=0.1$) converge to a noticeably
higher final PSNR, confirming that the graph-smooth null-space representation improves the
fixed point of the algorithm. The explicit null-only graph penalty ($\gamma_{\mathrm{g}}=0.1$)
mainly affects the {transient} regime: it accelerates convergence in the first few hundred iterations, reaching near-peak PSNR significantly earlier than GSNR with $\gamma_{\mathrm{g}}=0$ \textcolor{black}{(aligned with the theoretical analysis conducted in Sec. \ref{sec:Graph-based_reg})}, while all three methods eventually stabilize. Additional visual results in \ref{app:deblurring}.
\begin{figure}
    \centering
    \includegraphics[width=0.85\linewidth]{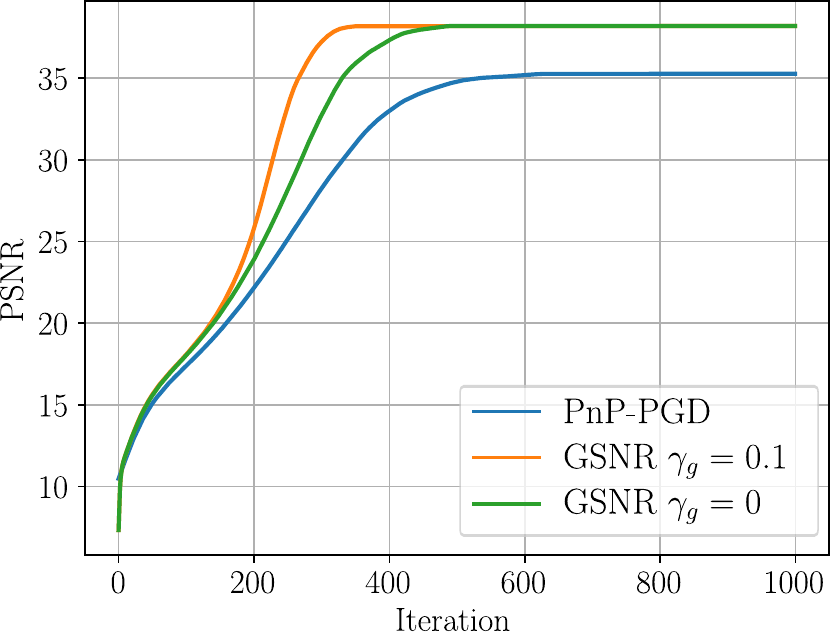}
    \vspace{-0.3cm}
    \caption{Deblurring convergence curves for PnP-PGD baseline using Lip-DnCNN and GSNR variants with $\mathbf{L}_{8nn} $ with and without the null-restricted Laplacian regularization.}\vspace{-0.74cm}
    \label{fig:conv_db}
\end{figure}

\textbf{SR}: Fig.~\ref{fig:sr_single_example} shows an SR example for different graph designs and denoisers. Columns correspond to the choice of Laplacian $\mathbf{L}$ or to the baseline PnP without GSNR, while rows correspond to different denoisers. In all three rows, moving from the baseline and $\mathbf{L}=\mathbf{I}$ columns to the graph-based $\mathbf{L}_{4\mathrm{nn}}$ and $\mathbf{L}_{8\mathrm{nn}}$ columns produces visibly sharper details: facial contours are crisper, hair strands are better resolved, and blocky artifacts inherited from the low-resolution backprojection $\mathbf{H}^\top\mathbf{y}$ are reduced. The improvements are most pronounced for the DnCNN and DRUNet denoisers. Still, they are also clearly present for the simpler Wavelet denoiser, indicating that the benefit comes from the GSNR itself rather than from the denoiser. Among the graph designs, $\mathbf{L}_{8\mathrm{nn}}$ yields the visually clearest reconstruction, suggesting that richer graph connectivity enables the solver to exploit structure in the null space rather than hallucinating high-frequency content. Results with Deep Image Prior \cite{deep_prior} and SAR images in SR can be found in \ref{app:sr_results}.\vspace{-0.1cm}


\vspace{-0.2cm}
\section{Discussion}
\vspace{-0.2cm}
We demonstrate that GSNR is a versatile approach for a broad range of inverse problems, data modalities, and neural networks (\ref{app:neural_network_ablation}).  
Nevertheless, the method requires precise knowledge of the sensing matrix, which is unavailable in some applications, e.g., real-world image deblurring. In \ref{app:inexact_forward_operator}, we provide analysis on the effect of uncertainty in the sensing operator, where a performance drop is evident, but the improvement compared with the baseline algorithm is maintained. Follow-up ideas will harness the null-space construction under sensing matrix uncertainty for calibrated null-space {regularization functions}. Another aspect of GSNR is the requirement to perform EVD on the null-restricted Laplacian $\mathbf{T}$, which, at large image scales, necessitates substantial computational resources. However, this computation is performed \textit{offline} once per $(\mathbf{H, L},n,p)$, and reused for learning the network $\mathrm{G}$ and the reconstruction step. In \ref{app:scalability}, we provide a detailed runtime analysis of the approach. Additionally, our approach is designed for linear inverse problems; the adaptation for nonlinear inverse problems, e.g., phase retrieval, requires careful modeling of the invertibility properties of the sensing operator and its input-dependent ambiguities. In recent work \cite{ehrlich2026pseudo}, a generalization of the null-space property was developed for neural networks (non-linear functions), opening new frontiers for learning optimal null-space components.

\begin{figure}[!t]
    \centering
    \includegraphics[width=\linewidth]{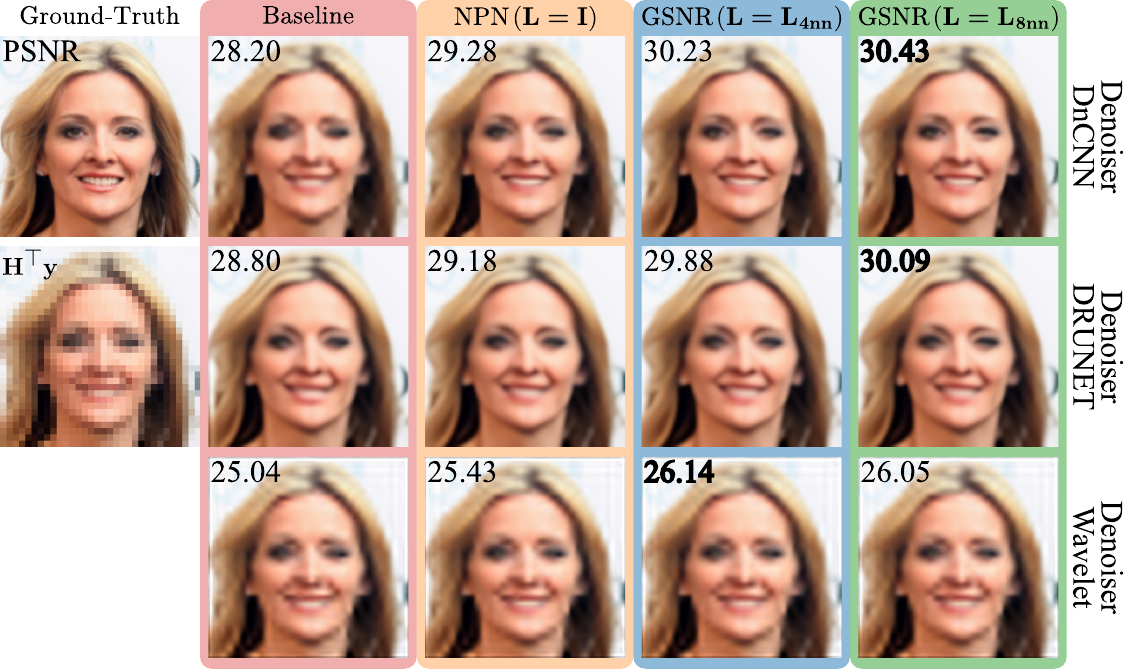}
    \vspace{-0.7cm}
    \caption{Results of PnP, NPN, and GSNR varying $\mathbf{L}$ and denoisers. Best results for each denoiser are in bold. Here, $p=0.1n$.}
    \vspace{-0.65cm}
    \label{fig:sr_single_example}
\end{figure}

\vspace{-0.2cm}

\section{Conclusion and Future Outlook}

\vspace{-0.1cm}

We introduced \emph{GSNR: Graph-Smooth Null-Space Representation}, which equips an inverse problem solver not only with \emph{what} the measurements constrain, but also with \emph{how} to use the degrees of freedom that the sensor never sees (\textcolor{black}{null-space}). Instead of regularizing the full image, GSNR builds a low-dimensional basis for $\Null(\mathbf{H})$ aligned with graph smoothness via the null-restricted Laplacian $\mathbf{T} = \mathbf{P}_n \mathbf{L} \mathbf{P}_n$, and can optionally add a null-only graph penalty at inference. Theoretically, the smoothest $\mathbf{T}$-modes are minimax-optimal for null-space components, and under a GMRF prior, they provably capture more null-space variance than geometry-free choices while enjoying stronger statistical coupling to the measurements. In practice, this yields better coverage of plausible solutions, higher predictability of null-space signal components, and improved conditioning, translating into faster and more stable convergence of PnP and diffusion-based solvers. Looking ahead, rather than fixing $\mathbf{L}$ to a hand-crafted grid Laplacian, learning the graph structure or employing interpretable graph neural operators promises data-adaptive GSNR bases that better reflect semantic structure. Also, a more rigorous study of GSNR within generative solvers could provide a principled way to inject graph-smooth null constraints into these frameworks without breaking their probabilistic interpretation.

%% file: sec/X_suppl.tex
\clearpage
\setcounter{page}{1}
\maketitlesupplementary


\appendix

\setcounter{section}{0}              
\renewcommand\thesection{A\arabic{section}}

\addcontentsline{toc}{section}{Appendix}

\startcontents[appendices]
\printcontents[appendices]{}{1}{\section*{Appendix Contents}}
\section{Proof of Theorem \ref{th:coverage}} \label{app:proof_cov}
\setcounter{theorem}{0}
\begin{theorem}[Coverage for graph smooth null-space]
Consider the construction of $\mathbf{T}$ Eq \eqref{eq:T_definition}, the covariance of the null-space $\mathrm{Cov}(\mathbf{x}_n)$  is a spectral function of $\mathbf{T}$, i.e.,  $\mathrm{Cov}(\mathbf{x}_n) = \mathbf{V}\mathrm{diag}({\lambda_1,\dots, \lambda_q})\mathbf{V}^\top$, where $\lambda_i = \frac{1}{\alpha \mu_i + \epsilon}$, $\lambda_1\geq\lambda_2\cdots\geq \lambda_q$. With the construction of $\mathbf{S}$ in \eqref{eq:S_definition}, the coverage of the null-space using GMRF with a Laplacian matrix $\mathbf{L}$ as $C_L(p)$ and the coverage when $\mathbf{L=I}$ is denoted $C_I(p)$, satisfies for every $p=1,\dots, q$

\begin{equation}
    C_L(p)\ \ge\ C_I(p)
\end{equation}
\end{theorem}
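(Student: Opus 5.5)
The proof splits into two parts: first establish the spectral form of $\mathrm{Cov}(\mathbf{x}_n)$, then reduce $C_L(p)\ge C_I(p)$ to an elementary inequality on nonincreasing sequences.

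\textbf{Step 1: spectral form of the covariance.} I would work with the null-restricted GMRF. Write $q=n-m=\dim\Null(\mathbf{H})$. Following the block-precision viewpoint of Proposition~\ref{thm:modeR2-bold}, the null block of the precision is
\begin{equation*}
\mathbf{Q}_{nn}=\mathbf{P}_n(\alpha\mathbf{L}+\epsilon\mathbf{I})\mathbf{P}_n=\alpha\mathbf{T}+\epsilon\mathbf{P}_n .
\end{equation*}
I model $\mathbf{x}_n$ as the GMRF on $\Null(\mathbf{H})$ with this precision, so that $\mathrm{Cov}(\mathbf{x}_n)=\mathbf{Q}_{nn}^{\dagger}$, the inverse taken on $\Null(\mathbf{H})$.

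Since $\mathbf{T}$ maps $\Null(\mathbf{H})$ into itself and annihilates $\Range(\mathbf{H}^\top)$, its eigenvectors can be chosen in two groups: $q$ of them in $\Null(\mathbf{H})$, and $m$ of them spanning $\Range(\mathbf{H}^\top)$, all with eigenvalue $0$.

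Care is needed with the ordering. The columns $\mathbf{v}_1,\dots,\mathbf{v}_q$ used in \eqref{eq:S_definition} must be the null-space eigenvectors, ordered by their eigenvalues $\mu_i$. The range eigenvectors are excluded, which is consistent with the paper's requirement that the rows of $\mathbf{S}$ lie in $\Null(\mathbf{H})$. On each null eigenvector we have $\mathbf{Q}_{nn}\mathbf{v}_i=(\alpha\mu_i+\epsilon)\mathbf{v}_i$. Therefore
\begin{equation*}
\mathrm{Cov}(\mathbf{x}_n)=\sum_{i=1}^q \lambda_i\mathbf{v}_i\mathbf{v}_i^\top,\qquad \lambda_i=\frac{1}{\alpha\mu_i+\epsilon}.
\end{equation*}
Because $\mu_1\le\dots\le\mu_q$, the $\lambda_i$ are nonincreasing.

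\textbf{Step 2: compute both coverages.} By orthonormality of the $\mathbf{v}_i$, $\mathbf{S}\,\mathrm{Cov}(\mathbf{x}_n)\mathbf{S}^\top=\diag(\lambda_1,\dots,\lambda_p)$, and hence
\begin{equation*}
C_L(p)=\frac{\sum_{i\le p}\lambda_i}{\sum_{i\le q}\lambda_i}.
\end{equation*}
For $\mathbf{L}=\mathbf{I}$ we get $\mathbf{T}=\mathbf{P}_n$, so $\mu_i=1$ for every null mode and $\lambda_i=1/(\alpha+\epsilon)$ is constant. Thus $C_I(p)=p/q$. Since the covariance is then isotropic on $\Null(\mathbf{H})$, this value does not depend on which orthonormal null basis is chosen.

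\textbf{Step 3: the inequality.} It remains to show
\begin{equation*}
\frac{1}{p}\sum_{i\le p}\lambda_i\ \ge\ \frac{1}{q}\sum_{i\le q}\lambda_i
\end{equation*}
for a nonincreasing sequence. This is the standard fact that the average of the $p$ largest terms is at least the overall average. Concretely, expand $q\sum_{i\le p}\lambda_i-p\sum_{i\le q}\lambda_i$ as
\begin{equation*}
(q-p)\sum_{i\le p}\lambda_i-p\sum_{p<i\le q}\lambda_i .
\end{equation*}
The first sum is at least $p\,\lambda_p$ and the second is at most $(q-p)\lambda_{p+1}$, so the expression is at least $p(q-p)(\lambda_p-\lambda_{p+1})\ge 0$. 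Equality holds iff the spectrum is flat.

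\textbf{Main obstacle.} Step~1 is the delicate part. For the full GMRF, $\mathbf{P}_n\mathbf{Q}^{-1}\mathbf{P}_n$ is not literally a function of $\mathbf{T}$ unless $\mathbf{L}$ commutes with $\mathbf{P}_n$. I would therefore make the null-restricted (conditional) model explicit as the interpretation of the hypothesis. If one insists on the marginal model, I would first check whether $\mathbf{P}_n\mathbf{L}\mathbf{P}_r=\mathbf{0}$ holds; this is plausible for the circulant $\mathbf{H}$ and grid $\mathbf{L}$ used in deblurring, and in that case the two models coincide. I would also guard the index bookkeeping so that the zero eigenvalues of $\mathbf{T}$ coming from $\Range(\mathbf{H}^\top)$ never enter the first $p$ columns.
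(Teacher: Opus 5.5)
Your proposal is correct and follows the paper's proof. The shared steps are the trace computation giving $C_L(p)=\sum_{i\le p}\lambda_i/\sum_{i\le q}\lambda_i$, the flat spectrum of $\mathbf{T}=\mathbf{P}_n$ giving $C_I(p)=p/q$, and the fact that the mean of the $p$ largest terms of a nonincreasing sequence is at least the overall mean. Your Step~1 goes beyond the paper, which simply takes the spectral form of $\mathrm{Cov}(\mathbf{x}_n)$ as part of the statement; your derivation via the null-block precision $\alpha\mathbf{T}+\epsilon\mathbf{P}_n$, your exclusion of the $\mathrm{Range}(\mathbf{H}^\top)$ zero modes from the ordering, and your caveat that the marginal $\mathbf{P}_n\mathbf{Q}^{-1}\mathbf{P}_n$ is generally not a function of $\mathbf{T}$ are all correct and make that hypothesis precise.
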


\noindent Using cyclicity of the trace and $P(p)=\mathbf{V}_p\mathbf{V}_p^\top$,
\begin{align*}
\mathrm{tr}\big(P(p)\,\mathrm{Cov}\big)
&= \mathrm{tr}\Big(\mathbf{V}_p\mathbf{V}_p^\top \,\mathbf{V}\,\mathrm{diag}(\lambda)\,\mathbf{V}^\top\Big)\\&
= \mathrm{tr}\Big( (\mathbf{V}^\top \mathbf{V}_p)(\mathbf{V}^\top \mathbf{V}_p)^\top\,\mathrm{diag}(\lambda)\Big) \\
&= \mathrm{tr}\big(\mathrm{diag}(\lambda_1,\ldots,\lambda_p)\big)
= \sum_{i=1}^p \lambda_i.
\end{align*}
Also $\mathrm{tr}(\mathrm{Cov})=\sum_{i=1}^q \lambda_i$. Hence
\begin{equation}
C_L(p) \;=\; \frac{\sum_{i=1}^{p}\lambda_i}{\sum_{i=1}^{q}\lambda_i}.
\label{eq:CL_closed_form}
\end{equation}

 Consider the case $\mathbf{L=I} \xrightarrow{} \mathbf{T} = \mathbf{P}_n$ due to the symmetric and idempotent property of the null-space projector. Any symmetric and idempotent matrix is diagonalizable with eigenvalues only in $\{0,1\}$. Then, any eigenvector and eigenvalue pair $(\mathbf{v}_i,\lambda_i)$ satisfy
\[
\lambda_i =
\begin{cases}
1, & \text{if } \mathbf{v}_i\in \mathrm{Null}(\mathbf{H}),\\[4pt]
0, & \text{if } \mathbf{v}_i\in \mathrm{Range}(\mathbf{H}^\top).
\end{cases}
\]

\noindent Under this analysis, replacing the eigenvalues $\lambda_i$ in \eqref{eq:CL_closed_form}, we have 

\begin{equation}
    C_I(p) = \frac{p}{q}. \label{eq:CI_linear}
\end{equation}

\noindent Since $(\lambda_i)$ is nonincreasing, the mean of the first $p$ terms is at least the
global mean:
\[
\frac{1}{p}\sum_{i=1}^p \lambda_i \;\ge\; \frac{1}{q}\sum_{i=1}^q \lambda_i
\quad\Longleftrightarrow\quad
\frac{\sum_{i=1}^p \lambda_i}{\sum_{i=1}^q \lambda_i} \;\ge\; \frac{p}{q}.
\]
Combining with \eqref{eq:CL_closed_form}–\eqref{eq:CI_linear} gives
$C_L(p)\ge C_I(p)$ for all $p$, with strict inequality whenever $(\lambda_i)$ is not constant.
\setcounter{theorem}{0}
\begin{corollary}
Using $\lambda_i=\frac{1}{\alpha\mu_i+\varepsilon}$, $\mu_1\le\cdots\le \mu_q$, and the
bounds $\sum_{i\le p}\lambda_i\ge p\,\lambda_p$ and
$\sum_{i\le q}\lambda_i\le p\,\lambda_1 + (q-p)\,\lambda_{p+1}$, we obtain
\begin{align}
    C_L(p)&\ge
\frac{p\,\lambda_p}{p\,\lambda_1+(q-p)\,\lambda_{p+1}}\\&
=\frac{1}{\ \frac{\alpha\mu_p+\varepsilon}{\alpha\mu_1+\varepsilon}
+\frac{q-p}{p}\,\frac{\alpha\mu_p+\varepsilon}{\alpha\mu_{p+1}+\varepsilon}\ } \;\ge\; \frac{p}{q},
\end{align}
with the last inequality again strict unless the spectrum is flat.
This makes explicit how fast coverage rises for Laplacians whose
$\{\mu_i\}$ increase rapidly (grids/graphs) compared to $\mathbf{L=I}$.
\end{corollary}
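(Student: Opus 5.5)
The plan is to start from the closed form \eqref{eq:CL_closed_form}, $C_L(p)=\sum_{i\le p}\lambda_i/\sum_{i\le q}\lambda_i$, obtained in the proof of Theorem~\ref{th:coverage}. I would then bound the numerator from below and the denominator from above using only the monotonicity $\lambda_1\ge\cdots\ge\lambda_q>0$. This monotonicity holds because $\mu_1\le\cdots\le\mu_q$ and $t\mapsto 1/(\alpha t+\varepsilon)$ is decreasing for $\alpha,\varepsilon>0$.

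\textbf{Step 1 (the two bounds).} Each of the first $p$ terms is at least $\lambda_p$, so the numerator is at least $p\lambda_p$. For the denominator, I split $\sum_{i\le q}\lambda_i=\sum_{i\le p}\lambda_i+\sum_{i>p}\lambda_i$. The first block is at most $p\lambda_1$ and the second is at most $(q-p)\lambda_{p+1}$. Since both bounds are positive, dividing gives the first displayed inequality. For $p=q$ the second block is empty, and the convention $(q-p)\lambda_{p+1}=0$ suffices.

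\textbf{Step 2 (the algebraic identity).} I divide numerator and denominator by $p\lambda_p$ and substitute $\lambda_i=1/(\alpha\mu_i+\varepsilon)$. Then $\lambda_1/\lambda_p=(\alpha\mu_p+\varepsilon)/(\alpha\mu_1+\varepsilon)$ and $\lambda_{p+1}/\lambda_p=(\alpha\mu_p+\varepsilon)/(\alpha\mu_{p+1}+\varepsilon)$, which yields the stated expression. This step is routine.

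\textbf{Step 3 (comparison with $p/q$), the main obstacle.} The final inequality asks that $p\lambda_p/(p\lambda_1+(q-p)\lambda_{p+1})\ge p/q$. This is equivalent to
\[
(q-p)(\lambda_p-\lambda_{p+1})\;\ge\;p(\lambda_1-\lambda_p).
\]
The right-hand side is nonnegative, so the inequality is not automatic from monotonicity alone. It holds, for instance, when the spectrum is flat on $\{1,\dots,p\}$ and there is a gap at $p$, but it can fail in general. A concrete failure: if $\lambda_1>\lambda_p=\lambda_{p+1}$, the left-hand side vanishes while the right-hand side is positive.

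I would therefore handle this step in two ways.
\begin{enumerate}
\item The unconditional statement $C_L(p)\ge p/q$ is taken directly from Theorem~\ref{th:coverage}, via the Chebyshev-type mean comparison already proved there.
\item The explicit lower bound is stated to dominate $p/q$ only under the displayed spectral-gap condition. Translated to $\mu$, this reads $(q-p)\bigl(\tfrac{1}{\alpha\mu_p+\varepsilon}-\tfrac{1}{\alpha\mu_{p+1}+\varepsilon}\bigr)\ge p\bigl(\tfrac{1}{\alpha\mu_1+\varepsilon}-\tfrac{1}{\alpha\mu_p+\varepsilon}\bigr)$.
\end{enumerate}

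\textbf{Strictness.} In the flat case $\mu_i\equiv\mu$, every bound in the chain is an equality, so the bound coincides with $p/q$. Strictness of $C_L(p)>p/q$ for a non-flat spectrum is inherited from Theorem~\ref{th:coverage}.
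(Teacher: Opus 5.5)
Your Steps 1 and 2 match the paper. The appendix gives no separate proof of this corollary beyond the chain written in the statement itself. The two elementary bounds, together with $\lambda_1/\lambda_p=(\alpha\mu_p+\varepsilon)/(\alpha\mu_1+\varepsilon)$ and $\lambda_{p+1}/\lambda_p=(\alpha\mu_p+\varepsilon)/(\alpha\mu_{p+1}+\varepsilon)$, are all there is to it.

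Your Step 3 diagnosis is also correct, and it exposes an error in the paper rather than a gap in your proposal. The paper simply asserts the final inequality by analogy with Theorem~\ref{th:coverage}, and it is false in general. Your reduction to $(q-p)(\lambda_p-\lambda_{p+1})\ge p(\lambda_1-\lambda_p)$ is right, and two further examples make the failure starker:
\begin{itemize}
\item At $p=q$ the bound equals $\lambda_q/\lambda_1=(\alpha\mu_1+\varepsilon)/(\alpha\mu_q+\varepsilon)$. This is strictly below $p/q=1$ for every non-flat spectrum, so ``strict unless flat'' is exactly reversed there.
\item Take the regime the paper advertises, with smoothly increasing $\mu_i$: $\alpha=\varepsilon=1$, $\mu_i=i-1$ (so $\lambda_i=1/i$), $q=100$, $p=10$. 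The bound is $1/(10+90/11)\approx 0.055<0.1=p/q$, while the true coverage is $H_{10}/H_{100}\approx 0.56$.
\end{itemize}
So the explicit bound does not ``make explicit how fast coverage rises''; in exactly that regime it is typically weaker than the trivial $p/q$.

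Your repair is the correct version of the statement: keep $C_L(p)\ge p/q$ from the mean comparison in Theorem~\ref{th:coverage}, and claim that the explicit bound dominates $p/q$ only under the gap condition. Two small refinements:
\begin{itemize}
\item Even when the gap condition holds, the bound can equal $p/q$ for a non-flat spectrum. For example, $p=1$ with $\lambda_1=\lambda_2>\lambda_3$ gives the bound exactly $1/q$. The right strictness criterion for the last inequality is strict inequality in your gap condition, not non-flatness of the spectrum.
\item The strictness of $C_L(p)>p/q$ that you inherit from Theorem~\ref{th:coverage} holds only for $p<q$, since $C_L(q)=1=q/q$ always. The theorem's own strictness clause has the same slip.
\end{itemize}
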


\section{Proof of Theorem \ref{th:Th1}} 
\label{app:minmax_thm}
\begin{theorem}[Minimax optimality]
Let \(V_p=\mathrm{span}\{\mathbf{v}_1,\dots,\mathbf{v}_p\}\) and \(P_{V_p}\) be its projector. Then, among all \(p\)-dimensional \(V\subset\mathrm{Null}(\mathbf{H})\), 
provided \(\mu_{p+1}>0\),
\vspace{-0.1cm}
\begin{align*}
\vspace{-0.1cm}
\min_{\dim(V)=p}\;
    &\sup_{\mathbf{x}_n \in \mathcal{M}_\tau}\;
    \|(\mathbf{I} - P_V)\mathbf{x}_n\|_2^2 = \\
    &\sup_{\mathbf{x}_n \in \mathcal{M}_\tau}\;
    \|(\mathbf{I} - P_{V_p})\mathbf{x}_n\|_2^2 = \frac{\tau}{\mu_{p+1}}.
\end{align*}
\end{theorem}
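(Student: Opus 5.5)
This is a Kolmogorov $p$-width computation for the ellipsoid $\mathcal{M}_\tau$, which we will carry out inside $\mathrm{Null}(\mathbf{H})$.

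\textbf{Setup.} Restrict everything to $\mathcal{N}=\mathrm{Null}(\mathbf{H})$, of dimension $q=n-m$. For $\mathbf{x}_n\in\mathcal{N}$ we have $\mathbf{x}_n=\mathbf{P}_n\mathbf{x}_n$, hence
\[
\mathbf{x}_n^\top\mathbf{L}\mathbf{x}_n=\mathbf{x}_n^\top\mathbf{T}\mathbf{x}_n .
\]
Since $\mathbf{T}\mathbf{P}_r=\mathbf{0}$, the operator $\mathbf{T}$ maps $\mathcal{N}$ into itself. We therefore take the eigenpairs $(\mu_i,\mathbf{v}_i)$, $i=1,\dots,q$, of $\mathbf{T}|_{\mathcal{N}}$, with the $\mathbf{v}_i$ orthonormal in $\mathcal{N}$ and $\mu_i$ ordered increasingly. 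This is the reading under which $\mathbf{S}$ has rows in $\mathcal{N}$. The range directions are the trivial zero eigenvalues of $\mathbf{T}$ and are excluded from the indexing.

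Expanding $\mathbf{x}_n=\sum_i a_i\mathbf{v}_i$ turns the constraint into
\[
\mathcal{M}_\tau=\Bigl\{\textstyle\sum_i \mu_i a_i^2\le\tau\Bigr\},
\]
an ellipsoid (possibly degenerate in the directions with $\mu_i=0$).

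\textbf{Upper bound, achieved by $V_p$.} For $V=V_p$ the residual is
\[
\|(\mathbf{I}-P_{V_p})\mathbf{x}_n\|_2^2=\sum_{i>p}a_i^2\le\frac{1}{\mu_{p+1}}\sum_{i>p}\mu_i a_i^2\le\frac{\tau}{\mu_{p+1}}.
\]
This uses $\mu_i\ge\mu_{p+1}>0$ for $i>p$ and the nonnegativity of $\mu_i a_i^2$ for $i\le p$, since $\mathbf{T}\succeq 0$. Equality holds at $\mathbf{x}_n=\sqrt{\tau/\mu_{p+1}}\,\mathbf{v}_{p+1}$. So the supremum for $V_p$ equals $\tau/\mu_{p+1}$ exactly, and in particular the minimum over $V$ is at most this value.

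\textbf{Lower bound for an arbitrary $V$.} Let $V\subset\mathcal{N}$ with $\dim V=p$, and set $W=\mathrm{span}\{\mathbf{v}_1,\dots,\mathbf{v}_{p+1}\}$. A dimension count gives $\dim W+\dim V^\perp\cap\mathcal{N}=(p+1)+(q-p)>q$, so there is a unit vector $\mathbf{u}\in W\cap V^\perp$. For this $\mathbf{u}$, Rayleigh-quotient bounding gives $\mathbf{u}^\top\mathbf{T}\mathbf{u}\le\mu_{p+1}$. Put $\mathbf{x}_n=\sqrt{\tau/\mu_{p+1}}\,\mathbf{u}$. Then $\mathbf{x}_n\in\mathcal{M}_\tau$, and because $\mathbf{u}\perp V$,
\[
\|(\mathbf{I}-P_V)\mathbf{x}_n\|_2^2=\|\mathbf{x}_n\|_2^2=\frac{\tau}{\mu_{p+1}}.
\]
Thus every $V$ has worst-case error at least $\tau/\mu_{p+1}$. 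Combined with the upper bound, the minimum is attained at $V_p$ and the chain of equalities follows.

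\textbf{Main obstacle.} The main point of care is the bookkeeping that keeps all eigenvectors inside $\mathcal{N}$. The argument needs $\mathcal{N}$ to be $\mathbf{T}$-invariant and the $\mu_i$ to be indexed only over $\mathcal{N}$. Otherwise the range eigenvectors, which also have eigenvalue $0$ for $\mathbf{T}$, would contaminate the ordering and break the dimension count.

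The assumption $\mu_{p+1}>0$ handles the degenerate directions. If some $\mu_i=0$ with $i\le p$, the set $\mathcal{M}_\tau$ is unbounded along $\mathbf{v}_i$, but $V_p$ contains those directions, so the residual stays bounded. The lower-bound step is the Courant--Fischer dimension argument and is routine once this setup is fixed.
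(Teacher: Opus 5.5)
Your proof is correct and follows the paper's argument. You use the same eigen-expansion over $\mathrm{Null}(\mathbf{H})$, the same upper bound attained at $\sqrt{\tau/\mu_{p+1}}\,\mathbf{v}_{p+1}$, and a Courant--Fischer lower bound, which you make explicit via the dimension count $\mathrm{span}\{\mathbf{v}_1,\dots,\mathbf{v}_{p+1}\}\cap V^\perp\neq\{\mathbf{0}\}$; the paper instead cites the max--min characterization to get $\widetilde{\mu}_{\min}(V^\perp)\le\mu_{p+1}$, so your explicit adversarial vector also avoids its implicit division by $\widetilde{\mu}_{\min}(W)$, which could be zero.
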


\noindent \textit{\textbf{Proof.}}

\noindent Expand any $\mathbf{x}_n\in\Null(\mathbf{H})$ in the eigenbasis of $\mathbf{T}$:
$\mathbf{x}_n=\sum_{j=1}^q a_j \mathbf{v}_j$ with $\|\mathbf{x}_n\|_2^2=\sum_j a_j^2$
and energy constraint $\mathbf{x}_n^\top\mathbf{T}\mathbf{x}_n=\sum_j \mu_j a_j^2 \le \tau$.
This is the Rayleigh–Ritz parameterization for a symmetric PSD matrix.  Take $V_p=\mathrm{span}\{\mathbf{v}_1,\dots,\mathbf{v}_p\}$.
Then $(\mathbf{I}-P_{V_p})\mathbf{x}_n=\sum_{j\ge p+1} a_j \mathbf{v}_j$, hence
\[
\|(\mathbf{I}-P_{V_p})\mathbf{x}_n\|_2^2=\sum_{j\ge p+1} a_j^2
\;\le\;\frac{1}{\mu_{p+1}}\sum_{j\ge p+1}\mu_j a_j^2
\;\le\;\frac{\tau}{\mu_{p+1}}.
\]
The first inequality uses $\mu_j\ge\mu_{p+1}$ for all $j\ge p+1$.
Equality is attained by $\mathbf{x}_n^\star=\sqrt{\tau/\mu_{p+1}}\,\mathbf{v}_{p+1}$,
so
\[
\sup_{\mathbf{x}_n\in\mathcal{M}_\tau}\|(\mathbf{I}-P_{V_p})\mathbf{x}_n\|_2^2=\frac{\tau}{\mu_{p+1}}.
\tag{$\ast$}
\]

\medskip
Let $V$ be any $p$–dimensional subspace of $\Null(\mathbf{H})$ and set $W:=V^\perp$
(within $\Null(\mathbf{H})$), so $\dim W=q-p$.
Consider the restricted eigenproblem for $\mathbf{T}$ on $W$; denote its smallest eigenvalue by
\[
\widetilde{\mu}_{\min}(W) \;=\;\min_{\mathbf{u}\in W,\ \|\mathbf{u}\|=1}\ \mathbf{u}^\top \mathbf{T}\mathbf{u}.
\]
By the Courant–Fischer min–max theorem,
\begin{align}
\mu_{p+1} \;=&\; \max_{\dim(S)=q-p}\ \min_{\mathbf{u}\in S,\ \|\mathbf{u}\|=1}\ \mathbf{u}^\top\mathbf{T}\mathbf{u}
\nonumber\\\;\ge&\; \min_{\mathbf{u}\in W,\ \|\mathbf{u}\|=1}\ \mathbf{u}^\top\mathbf{T}\mathbf{u}
\;=\; \widetilde{\mu}_{\min}(W).
\end{align}
Thus $\widetilde{\mu}_{\min}(W)\le \mu_{p+1}$.  

Now maximize the residual under the energy constraint \emph{within $W$}, where the projection
vanishes: for any $\mathbf{x}_n\in W$, $(\mathbf{I}-P_V)\mathbf{x}_n=\mathbf{x}_n$.
The maximizer aligns with the eigenvector of $\mathbf{T}|_W$ for $\widetilde{\mu}_{\min}(W)$, yielding
\begin{align}
\sup_{\mathbf{x}_n\in\mathcal{M}_\tau}\|(\mathbf{I}-P_V)\mathbf{x}_n\|_2^2
&\;\ge\;\sup_{\substack{\mathbf{x}_n\in W\\ \mathbf{x}_n^\top\mathbf{T}\mathbf{x}_n\le\tau}} \|\mathbf{x}_n\|_2^2
\;=\;\frac{\tau}{\widetilde{\mu}_{\min}(W)}
\nonumber\\&\;\ge\;\frac{\tau}{\mu_{p+1}}.
\tag{$\dagger$}
\end{align}
Combining $(\ast)$ and $(\dagger)$ gives
\[
\min_{\dim(V)=p}\ \sup_{\mathbf{x}_n\in\mathcal{M}_\tau}\ \|(\mathbf{I}-P_V)\mathbf{x}_n\|_2^2
\;=\; \frac{\tau}{\mu_{p+1}},
\]
with the minimum attained at $V=V_p$.

\begin{remark}
 The constraint
$\mathbf{x}_n^\top\mathbf{T}\mathbf{x}_n\le\tau$ is an ellipsoid aligned with
the eigenvectors of $\mathbf{T}$. The largest Euclidean norm inside this ellipsoid occurs along
the \emph{smallest} eigenvalue direction. If the subspace $V$ leaves a small-$\mu$ direction
outside, an adversary can place all energy there and produce a large residual. Courant–Fischer
formalizes this: every $p$–dimensional $V$ leaves some direction with Rayleigh quotient
$\le\mu_{p+1}$ in $V^\perp$.  

Selecting
$V_p=\mathrm{span}\{\mathbf{v}_1,\dots,\mathbf{v}_p\}$ “covers” all directions with the
smallest graph energy (smoothest modes). The worst direction you do \emph{not} cover is then
$\mathbf{v}_{p+1}$ with energy $\mu_{p+1}$, making the worst-case miss exactly $\tau/\mu_{p+1}$.
Any other choice leaves an even {smoother} (smaller $\mu$) direction uncovered, increasing
the worst-case error.

The bound quantifies how quickly the worst-case miss decays as
$p$ grows: the key driver is $\mu_{p+1}$. For graph Laplacians, $\{\mu_j\}$ increases smoothly,
so $\mu_{p+1}$ grows and the miss shrinks rapidly; for $\mathbf{L}=\mathbf{I}$, $\mathbf{T}=\mathbf{P}_n$
has a flat spectrum on $\Null(\mathbf{H})$, so $\mu_{p+1}$ is constant and the bound does not improve.
This explains why graph-limited designs achieve much better null-space coverage than geometry-free design
choices (the argument is the Kolmogorov $n$‑width of the ellipsoid $\{\mathbf{x}_n:\mathbf{x}_n^\top
\mathbf{T}\mathbf{x}_n\le\tau\}$). 
\end{remark}
\setcounter{proposition}{0}

\section{Proof of Proposition \ref{thm:modeR2-bold}}
\label{app:proof_stats}
\begin{proposition}[Per–mode predictability bound]
Let $\mathbf{x}\!\in\!\mathbb{R}^n$ be zero–mean Gaussian with covariance $\mathbf{C}$ and precision
$\mathbf{Q}=\mathbf{C}^{-1}$. Let $\mathbf{H}\!\in\!\mathbb{R}^{m\times n}$ and denote the orthogonal
projectors onto $\mathcal{R}:=\Range(\mathbf{H}^\top)$ and $\mathcal{N}:=\Null(\mathbf{H})$ by
$\mathbf{P}_r$ and
$\mathbf{P}_n$, respectively.
Block $\mathbf{Q}$ and $\mathbf{C}$ with respect to the decomposition $\mathbb{R}^n=\mathcal{R}\oplus\mathcal{N}$:
\vspace{-0.1cm}
\begin{equation*}
\vspace{-0.1cm}
    \mathbf{Q}=
\begin{bmatrix}\mathbf{Q}_{rr}&\mathbf{Q}_{rn}\\ \mathbf{Q}_{nr}&\mathbf{Q}_{nn}\end{bmatrix},\qquad
\mathbf{C}=
\begin{bmatrix}\mathbf{C}_{rr}&\mathbf{C}_{rn}\\ \mathbf{C}_{nr}&\mathbf{C}_{nn}\end{bmatrix}.
\end{equation*}
Assume $\mathbf{Q}_{nn}\succ 0$ on $\mathcal{N}$ and let $\{\mathbf{v}_j\}$ be an orthonormal eigenbasis of
$\mathbf{Q}_{nn}$ with $\mathbf{Q}_{nn}\mathbf{v}_j=\mu_j\,\mathbf{v}_j$ and $\mu_j>0$.
Define the $j$‑th null coefficient $a_j:=\mathbf{v}_j^\top \mathbf{x}_n$ where $\mathbf{x}_n:=\mathbf{P}_n\mathbf{x}$.
Consider measurements from \eqref{eq:inverse_problem}
and denote $\mathbf{C}_y:=\mathbf{H}\mathbf{C}_{rr}\mathbf{H}^\top+\sigma^2\mathbf{I}_m$.
Then the population $R^2$ of the optimal linear predictor of $a_j$ from $\mathbf{y}$ satisfies
\vspace{-0.1cm}
\begin{equation*}
\vspace{-0.1cm}
    \rho_j^2
:=\frac{\Cov(a_j,\mathbf{y})^\top\,\mathbf{C}_y^{-1}\,\Cov(\mathbf{y},a_j)}{\Var(a_j)}
\;\le\;
\frac{c_j}{c_j+\mu_j},
\end{equation*}
\[
c_j:=\mathbf{v}_j^\top\!\big(\mathbf{Q}_{rn}\mathbf{C}_{rr}\mathbf{Q}_{nr}\big)\mathbf{v}_j.
\]
Equality holds in the ideal case $\mathbf{H}=\mathbf{I}$ and $\sigma^2=0$.
\end{proposition}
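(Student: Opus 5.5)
The plan is to reduce prediction from $\mathbf{y}$ to prediction from the range component $\mathbf{x}_r:=\mathbf{P}_r\mathbf{x}$, and then compute the latter exactly with Gaussian conditioning in precision form.

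\emph{Step 1: $\mathbf{y}$ is a noisy linear image of $\mathbf{x}_r$.} Since $\mathbf{H}\mathbf{P}_n=\mathbf{0}$, we have $\mathbf{y}=\mathbf{H}\mathbf{x}_r+\boldsymbol{\omega}$, with $\boldsymbol{\omega}$ independent of $\mathbf{x}$. This gives $\Cov(\mathbf{y})=\mathbf{H}\mathbf{C}_{rr}\mathbf{H}^\top+\sigma^2\mathbf{I}_m=\mathbf{C}_y$, which matches the definition in the statement, and $\Cov(\mathbf{y},a_j)=\mathbf{H}\mathbf{C}_{rn}\mathbf{v}_j$. Any linear predictor $\mathbf{w}^\top\mathbf{y}$ has the form $\mathbf{w}^\top\mathbf{H}\mathbf{x}_r+\mathbf{w}^\top\boldsymbol{\omega}$. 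Its mean-squared error is at least that of the linear predictor $\mathbf{w}^\top\mathbf{H}\mathbf{x}_r$, because the noise term is independent and zero-mean and so only adds variance. Hence the population $R^2$ from $\mathbf{y}$ is at most the population $R^2$ of the best linear predictor of $a_j$ from $\mathbf{x}_r$. Equivalently, $\rho_j^2$ is the squared canonical correlation of $a_j$ with $\mathbf{y}$, and a linear map plus independent noise cannot increase it.

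\emph{Step 2: exact $R^2$ from $\mathbf{x}_r$ via the precision blocks.} For jointly Gaussian $(\mathbf{x}_r,\mathbf{x}_n)$, the best linear predictor coincides with the conditional mean, and $\mathbf{x}_n\mid\mathbf{x}_r$ has precision $\mathbf{Q}_{nn}$ and mean $-\mathbf{Q}_{nn}^{-1}\mathbf{Q}_{nr}\mathbf{x}_r$ (Schur complement). Here all blocks are understood as operators restricted to $\mathcal{R}$ and $\mathcal{N}$, with inverses taken on those subspaces; this is where $\mathbf{Q}_{nn}\succ0$ on $\mathcal{N}$ is used. Using $\mathbf{v}_j^\top\mathbf{Q}_{nn}^{-1}=\mu_j^{-1}\mathbf{v}_j^\top$, the two variance components of $a_j$ are:
\begin{itemize}
\item the residual variance, $\mathbf{v}_j^\top\mathbf{Q}_{nn}^{-1}\mathbf{v}_j=1/\mu_j$;
\item the explained variance, $\mu_j^{-2}\,\mathbf{v}_j^\top\mathbf{Q}_{nr}\mathbf{C}_{rr}\mathbf{Q}_{rn}\mathbf{v}_j=c_j/\mu_j^2$.
\end{itemize}
The second item identifies this quadratic form with $c_j$ in the statement, using $\mathbf{Q}_{nr}=\mathbf{Q}_{rn}^\top$; the ordering in the statement's $c_j$ is read as this symmetric form. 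By the law of total variance, $\Var(a_j)=c_j/\mu_j^2+1/\mu_j$. Therefore
\[
R^2_{\mathbf{x}_r}=\frac{c_j/\mu_j^2}{c_j/\mu_j^2+1/\mu_j}=\frac{c_j}{c_j+\mu_j},
\]
and combining this with Step 1 gives $\rho_j^2\le c_j/(c_j+\mu_j)$.

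\emph{Step 3: equality.} When $\sigma^2=0$ and $\mathbf{H}$ is injective on $\mathcal{R}$, $\mathbf{y}$ determines $\mathbf{x}_r$ linearly, so the inequality in Step 1 is tight. The case $\mathbf{H}=\mathbf{I}$ named in the statement is this situation, although there $\mathcal{N}$ is trivial and the claim is degenerate, so the equality clause is best stated for this injective, noiseless case.

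The main obstacle is making Step 1 rigorous without inverting singular objects. I would phrase it as a comparison of minimal mean-squared errors: $\min_{\mathbf{w}}\mathbb{E}(a_j-\mathbf{w}^\top\mathbf{y})^2\ge\min_{\mathbf{u}}\mathbb{E}(a_j-\mathbf{u}^\top\mathbf{x}_r)^2$. This follows by expanding the square and dropping the nonnegative noise term $\sigma^2\|\mathbf{w}\|^2$. I would also verify carefully the subspace-restricted block-inverse identity $(\mathbf{C}_{nn}-\mathbf{C}_{nr}\mathbf{C}_{rr}^{-1}\mathbf{C}_{rn})^{-1}=\mathbf{Q}_{nn}$ and the relation $\mathbf{C}_{nr}\mathbf{C}_{rr}^{-1}=-\mathbf{Q}_{nn}^{-1}\mathbf{Q}_{nr}$ on $\mathcal{N}$.
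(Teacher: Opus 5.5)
Your proof is correct, but it reaches the key inequality by a different argument than the paper. The paper computes $\mathrm{Cov}(a_j,\mathbf{y})=-\mathbf{v}_j^\top\mathbf{Q}_{nn}^{-1}\mathbf{Q}_{nr}\mathbf{C}_{rr}\mathbf{H}^\top$ from the block-inverse identity. It writes the numerator as $\mathbf{w}_j^\top\mathbf{M}\mathbf{w}_j$, with $\mathbf{w}_j=\mathbf{C}_{rr}^{1/2}\mathbf{Q}_{rn}\mathbf{Q}_{nn}^{-1}\mathbf{v}_j$, $\mathbf{M}=\mathbf{A}^\top(\mathbf{A}\mathbf{A}^\top+\sigma^2\mathbf{I})^{-1}\mathbf{A}$ and $\mathbf{A}=\mathbf{H}\mathbf{C}_{rr}^{1/2}$. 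An SVD then shows $\mathbf{M}\preceq\mathbf{I}$, which bounds the numerator by $\|\mathbf{w}_j\|^2=\mu_j^{-2}c_j$. The denominator $\mu_j^{-1}+\mu_j^{-2}c_j$ comes from the Schur-complement formula for $\mathbf{C}_{nn}$, which is exactly your law-of-total-variance step.

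Since $\|\mathbf{w}_j\|^2$ is precisely the variance of $a_j$ explained by $\mathbf{x}_r$, your Step 1 and the paper's $\mathbf{M}\preceq\mathbf{I}$ are the same inequality seen from two sides. Yours is a data-processing (MSE comparison) argument that needs only independence of the noise, with no inversion of $\mathbf{C}_y$ and no spectral decomposition. The paper's version gives an explicit expression for $\rho_j^2$, in which the loss appears through the factors $s_i^2/(s_i^2+\sigma^2)$.

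Your route also handles the equality clause more cleanly, and you can sharpen it. $\mathbf{H}$ is automatically injective on $\mathrm{Range}(\mathbf{H}^\top)$, so $\mathbf{x}_r=\mathbf{H}^\dagger\mathbf{y}$ when $\sigma^2=0$. Hence equality holds for every $\mathbf{H}$ in the noiseless case, reading $\mathbf{C}_y^{-1}$ as a pseudo-inverse if $\mathbf{H}$ is rank-deficient. You are right that $\mathbf{H}=\mathbf{I}$ makes the stated equality case vacuous, since then $\mathcal{N}$ is trivial.

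One small correction concerns your identification with $c_j$. It does not follow from $\mathbf{Q}_{nr}=\mathbf{Q}_{rn}^\top$, because transposing $\mathbf{Q}_{rn}\mathbf{C}_{rr}\mathbf{Q}_{nr}$ does not reverse the order of its factors. As written, that product is ill-typed in the block convention. If the blocks are read as $n\times n$ operators such as $\mathbf{Q}_{nr}=\mathbf{P}_n\mathbf{Q}\mathbf{P}_r$, it gives $c_j=0$, because $\mathbf{Q}_{nr}\mathbf{v}_j=\mathbf{P}_n\mathbf{Q}\mathbf{P}_r\mathbf{v}_j=\mathbf{0}$. So taking $c_j=\mathbf{v}_j^\top\mathbf{Q}_{nr}\mathbf{C}_{rr}\mathbf{Q}_{rn}\mathbf{v}_j$ is a typo fix, not a consequence of symmetry; the paper's proof makes the same substitution silently.
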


Since $\mathbf{Q}\succ 0$ and $\mathbf{Q}_{nn}\succ 0$, the block inverse of $\mathbf{Q}$ exists and yields
the standard formulas (Schur complements)
\begin{align}\label{eq:block-identities}
&\mathbf{C}_{nr}=-\,\mathbf{Q}_{nn}^{-1}\mathbf{Q}_{nr}\mathbf{C}_{rr},\nonumber\\&
\mathbf{C}_{nn}=\mathbf{Q}_{nn}^{-1}+\mathbf{Q}_{nn}^{-1}\mathbf{Q}_{nr}\mathbf{C}_{rr}\mathbf{Q}_{rn}\mathbf{Q}_{nn}^{-1}.\end{align}
They follow from 
\[\,\mathbf{Q}^{-1}=\left[\begin{smallmatrix}\cdot & -(\mathbf{Q}_{rr}-\mathbf{Q}_{rn}\mathbf{Q}_{nn}^{-1}\mathbf{Q}_{nr})^{-1}\mathbf{Q}_{rn}\mathbf{Q}_{nn}^{-1}\\ -\mathbf{Q}_{nn}^{-1}\mathbf{Q}_{nr}(\cdot) & \cdot\end{smallmatrix}\right]\].

\medskip

\noindent By definition $a_j=\mathbf{v}_j^\top \mathbf{x}_n$ and $\mathbf{y}=\mathbf{H}\mathbf{x}_r+\boldsymbol{\omega}$ with
$\boldsymbol{\omega}\perp\mathbf{x}$. Hence
\begin{equation}\label{eq:Cov-aj-y}
\Cov(a_j,\mathbf{y})
=\mathbf{v}_j^\top \Cov(\mathbf{x}_n,\mathbf{H}\mathbf{x}_r)
=\mathbf{v}_j^\top \mathbf{C}_{nr}\mathbf{H}^\top.
\end{equation}
Using \eqref{eq:block-identities}, this becomes
\[
\Cov(a_j,\mathbf{y})
=-\,\mathbf{v}_j^\top \mathbf{Q}_{nn}^{-1}\mathbf{Q}_{nr}\mathbf{C}_{rr}\,\mathbf{H}^\top.
\]
Therefore the \emph{numerator} of $\rho_j^2$ is
\begin{align}
\mathrm{Num}
&=\Cov(a_j,\mathbf{y})^\top \mathbf{C}_y^{-1}\Cov(a_j,\mathbf{y})\nonumber\\
&=\mathbf{H}\mathbf{C}_{rr}\mathbf{Q}_{rn}\mathbf{Q}_{nn}^{-1}\mathbf{v}_j\;\cdot\;
\mathbf{C}_y^{-1}\;\cdot\;
\mathbf{v}_j^\top \mathbf{Q}_{nn}^{-1}\mathbf{Q}_{nr}\mathbf{C}_{rr}\mathbf{H}^\top.\label{eq:Num-raw}
\end{align}
Introduce $\mathbf{A}:=\mathbf{H}\mathbf{C}_{rr}^{1/2}$ and note $\mathbf{C}_y=\mathbf{A}\mathbf{A}^\top+\sigma^2\mathbf{I}$.
Define also
\[
\mathbf{M}:=\mathbf{C}_{rr}^{1/2}\mathbf{H}^\top\mathbf{C}_y^{-1}\mathbf{H}\mathbf{C}_{rr}^{1/2}
=\mathbf{A}^\top(\mathbf{A}\mathbf{A}^\top+\sigma^2\mathbf{I})^{-1}\mathbf{A}.
\]
Let $\mathbf{w}_j:=\mathbf{C}_{rr}^{1/2}\mathbf{Q}_{rn}\mathbf{Q}_{nn}^{-1}\mathbf{v}_j$.
Then \eqref{eq:Num-raw} rewrites compactly as
\begin{equation}\label{eq:Num-wMw}
\mathrm{Num}=\mathbf{w}_j^\top \mathbf{M}\,\mathbf{w}_j.
\end{equation}

\medskip
\noindent Take an SVD $\mathbf{A}=\mathbf{V}\mathbf{S}\mathbf{V}^\top$, $\mathbf{S}=\diag(s_i)\ge 0$.
Then
\[
\mathbf{M}
=\mathbf{V}\mathbf{S}\,\big(\mathbf{S}^2+\sigma^2\mathbf{I}\big)^{-1}\mathbf{S}\mathbf{V}^\top
=\mathbf{V}\,\diag\!\Big(\frac{s_i^2}{s_i^2+\sigma^2}\Big)\mathbf{V}^\top \;\preceq\; \mathbf{I},
\]
with equality iff $\sigma^2=0$ (and then $\mathbf{M}=\mathbf{I}$). Consequently,
\begin{equation}\label{eq:Num-bound}
\mathrm{Num}=\mathbf{w}_j^\top \mathbf{M}\mathbf{w}_j \;\le\; \mathbf{w}_j^\top \mathbf{w}_j
=\mathbf{v}_j^\top \mathbf{Q}_{nn}^{-1}\mathbf{Q}_{nr}\mathbf{C}_{rr}\mathbf{Q}_{rn}\mathbf{Q}_{nn}^{-1}\mathbf{v}_j.
\end{equation}

\medskip
\noindent Using \eqref{eq:block-identities},
\begin{align}
\mathrm{Den}
&:=\Var(a_j)=\mathbf{v}_j^\top \mathbf{C}_{nn}\mathbf{v}_j\nonumber\\
&=\mathbf{v}_j^\top \mathbf{Q}_{nn}^{-1}\mathbf{v}_j
+\mathbf{v}_j^\top \mathbf{Q}_{nn}^{-1}\mathbf{Q}_{nr}\mathbf{C}_{rr}\mathbf{Q}_{rn}\mathbf{Q}_{nn}^{-1}\mathbf{v}_j.
\label{eq:Den-raw}
\end{align}
Since $\mathbf{Q}_{nn}\mathbf{v}_j=\mu_j\mathbf{v}_j$ and $\|\mathbf{v}_j\|_2=1$, we have
\[
\mathbf{Q}_{nn}^{-1}\mathbf{v}_j=\mu_j^{-1}\mathbf{v}_j,\qquad
\mathbf{v}_j^\top \mathbf{Q}_{nn}^{-1}\mathbf{v}_j=\mu_j^{-1}.
\]
Define the scalar
\[
c_j:=\mathbf{v}_j^\top \big(\mathbf{Q}_{rn}\mathbf{C}_{rr}\mathbf{Q}_{nr}\big)\mathbf{v}_j\;\ge 0,
\]
and observe that
\[
\mathbf{v}_j^\top \mathbf{Q}_{nn}^{-1}\mathbf{Q}_{nr}\mathbf{C}_{rr}\mathbf{Q}_{rn}\mathbf{Q}_{nn}^{-1}\mathbf{v}_j
=\mu_j^{-2}\,c_j.
\]
Plugging into \eqref{eq:Den-raw} gives
\begin{equation}\label{eq:Den-final}
\mathrm{Den}=\mu_j^{-1}+\mu_j^{-2}\,c_j.
\end{equation}

\medskip

\noindent From \eqref{eq:Num-bound} and \eqref{eq:Den-final},
\[
\rho_j^2
=\frac{\mathrm{Num}}{\mathrm{Den}}
\;\le\;
\frac{\mu_j^{-2}c_j}{\mu_j^{-1}+\mu_j^{-2}c_j}
=\frac{c_j}{c_j+\mu_j}.
\]
If $\sigma^2=0$ and $\mathbf{H}=\mathbf{I}$, then $\mathbf{C}_y=\mathbf{C}_{rr}$ and hence
$\mathbf{M}=\mathbf{I}$, so inequality \eqref{eq:Num-bound} is an equality and
$\rho_j^2=\frac{c_j}{c_j+\mu_j}$.

\section{Graph structures}

\subsection{Additional graphs} \label{app:additional_graphs}
We analyze two additional graph topologies.  \vspace{-0.5cm}

\paragraph{Random-walk normalized graph (\(\mathbf{L}_{\mathrm{rw}}\)).}
Starting from an undirected graph with adjacency \(\mathbf{A}\) and degree matrix \(\mathbf{D}\),
The random-walk normalized Laplacian is
\[
\mathbf{L}_{\mathrm{rw}} \;=\; \mathbf{I}-\mathbf{D}^{-1}\mathbf{A}.
\]
It can be read as \(\mathbf{L}_{\mathrm{rw}}=\mathbf{I}-\mathbf{P}\), where
\(\mathbf{P}=\mathbf{D}^{-1}\mathbf{A}\) is the one-step Markov transition; hence
\(\mathbf{L}_{\mathrm{rw}}\mathbf{1}=\mathbf{0}\) and the spectrum reflects how probability mass
mixes locally. This normalization corrects for degree imbalance: edges from low-degree nodes are
not underweighted, so the quadratic form \(\mathbf{x}^\top\mathbf{L}_{\mathrm{rw}}\mathbf{x}\)
penalizes variations relative to local node degree. In our pipeline, we never use
\(\mathbf{L}_{\mathrm{rw}}\) directly; we act on the null component through the null-restricted
operator \(\mathbf{T}=\mathbf{P}_n\mathbf{L}_{\mathrm{rw}}\mathbf{P}_n\) and compute its smooth
eigenmodes to build \(\mathbf{S}\). 

\paragraph{Symmetric normalized graph (\(\mathbf{L}_{\mathrm{sym}}\)).}
The symmetric normalized Laplacian is
\[
\mathbf{L}_{\mathrm{sym}} \;=\; \mathbf{I}-\mathbf{D}^{-1/2}\mathbf{A}\,\mathbf{D}^{-1/2},
\]
which is symmetric positive semidefinite and yields the energy
\(
\mathbf{x}^\top\mathbf{L}_{\mathrm{sym}}\mathbf{x}
=\tfrac{1}{2}\sum_{i,j} w_{ij}\bigl(\tfrac{x_i}{\sqrt{d_i}}-\tfrac{x_j}{\sqrt{d_j}}\bigr)^2.
\)
It equalizes contributions across nodes with different degrees and, on regular grids, is similar
to \(\mathbf{L}_{\mathrm{rw}}\) (hence they share eigenvalues up to a similarity transform).
Within GSNR, we apply the same null restriction \(\mathbf{T}=\mathbf{P}_n\mathbf{L}_{\mathrm{sym}}\mathbf{P}_n\)
and take its \(p\) smallest eigenpairs to span the graph-smooth null subspace used by
\(\mathbf{S}\). 
In practice, both normalizations deliver the intended
effect, concentrating smooth, predictable variation in \(\Null(\mathbf{H})\), with small empirical
differences that reflect how degree reweighting shapes the spectrum of \(\mathbf{T}\).

\begin{figure}
    \centering
    \includegraphics[width=\linewidth]{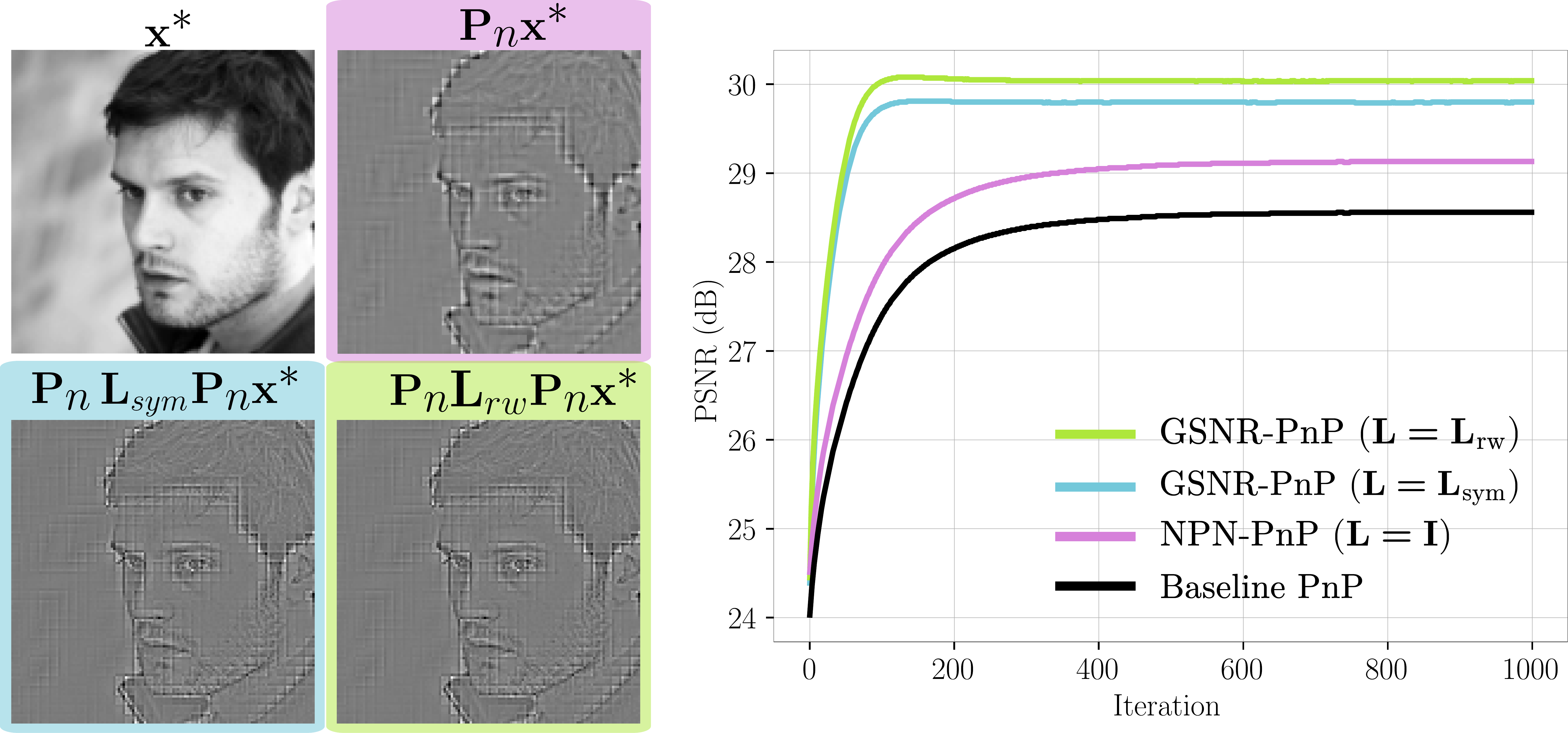}
    \caption{SR with different graph Laplacians. 
Left: ground truth $\mathbf{x}^\ast$, its null component $\mathbf{P}_n\mathbf{x}^\ast$, and
graph-smoothed null responses $\mathbf{P}_n\mathbf{L}_{\mathrm{sym}}\mathbf{P}_n\mathbf{x}^\ast$ 
and $\mathbf{P}_n\mathbf{L}_{\mathrm{rw}}\mathbf{P}_n\mathbf{x}^\ast$. 
Right: PSNR vs.\ iteration for GSNR-PnP with 
$\mathbf{L} \in \{\mathbf{L}_{\mathrm{sym}}, \mathbf{L}_{\mathrm{rw}}, \mathbf{I}\}$, and the PnP baseline.}
    \label{fig:rw_sym}
\end{figure}

\begin{figure*}[!t]
    \centering
    \includegraphics[width=\linewidth]{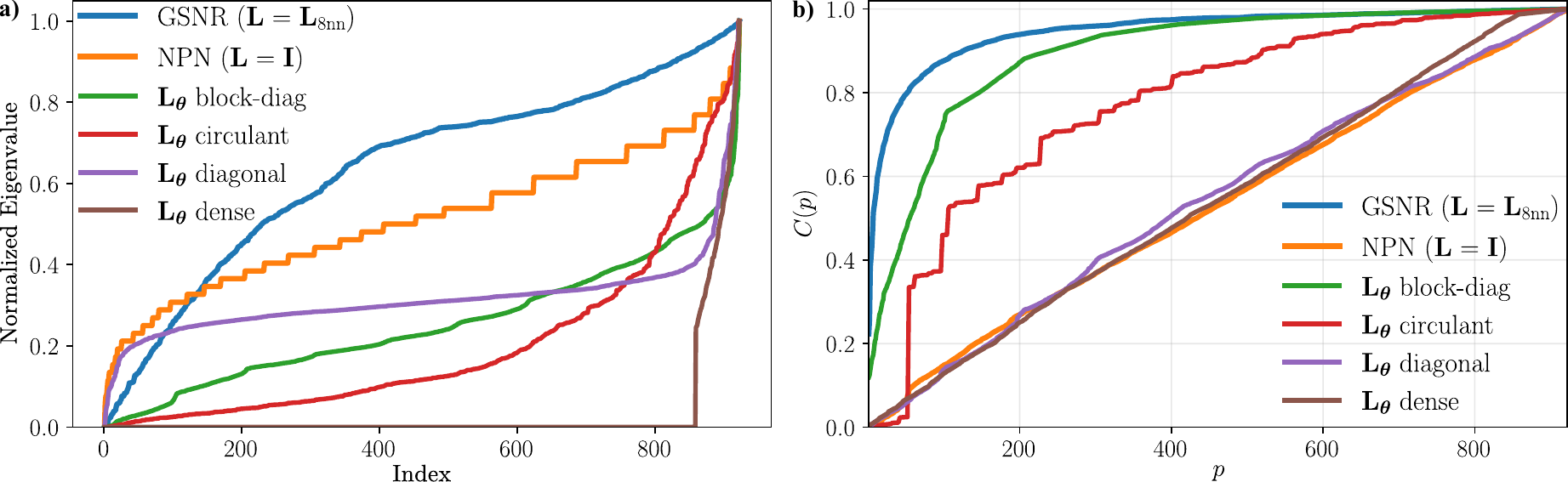}
\vspace{-0.7cm}
    \caption{\textbf{(a)} Variation of $\mathbf{T}$ normalized eigenvalues for $\mathbf{L}_{\theta}$ with respect to their index in CS. \textbf{(b)} Coverage of $\mathbf{S}$ with $\mathbf{L}_{\theta}$ in CS.}
    \label{fig:L_theta}
\vspace{-0.5cm}
\end{figure*}

In Figure \ref{fig:rw_sym}, the left panels visualize the invisible component and its graph–smoothed variants:
$\mathbf{P}_n\mathbf{x}^{\ast}$ (bottom–left) highlights edges and fine textures that lie in the
null space of the sensing operator, while applying the null–restricted Laplacians
$\mathbf{P}_n\mathbf{L}_{\mathrm{sym}}\mathbf{P}_n$ and
$\mathbf{P}_n\mathbf{L}_{\mathrm{rw}}\mathbf{P}_n$ further emphasizes coherent edge geometry and
suppresses isolated, high–frequency speckles. The two normalizations yield very similar
structures, with $\mathbf{L}_{\mathrm{rw}}$ marginally enhancing sharp contours, consistent with its degree-normalized reweighting. This visual evidence aligns with the goal of GSNR: to impose
structure \emph{only} in the null space and make its content smoother and more predictable.

The PSNR–iteration curves (right) show that graph–limited null–space designs
(GSNR–PnP with $\mathbf{L}_{\mathrm{sym}}$ or $\mathbf{L}_{\mathrm{rw}}$) both \emph{converge
faster} and \emph{stabilize at higher PSNR} than the geometry–free alternative
($\mathbf{L}=\mathbf{I}$) and the PnP baseline. Among the graph choices, the
$\mathbf{L}_{\mathrm{rw}}$ variant exhibits the steepest initial rise and the highest plateau,
while $\mathbf{L}_{\mathrm{sym}}$ tracks closely behind; both consistently dominate
$\mathbf{L}=\mathbf{I}$. This behavior matches the theory: normalized Laplacians produce a more informative spectrum for the null–restricted operator $\mathbf{T}=\mathbf{P}_n\mathbf{L}
\mathbf{P}_n$, yielding higher spectral coverage and stronger statistical coupling, which in turn
improves both the fixed point (final PSNR) and the transient (speed of convergence).

\subsection{Learning a Structured Laplacian} \label{app:graph_as_a_matrix}

The learning objective maximizes the average coverage over a prescribed set \(\mathcal{P}\) (typically 10 equispaced values):
\[
\max_{\mathbf{L}_{\theta} \succeq 0} \;\; 
\frac{1}{|\mathcal{P}|}\sum_{p\in\mathcal{P}} C(p),
\]

\noindent using $\mathbf{T}=\mathbf{P}_n \mathbf{L}_{\theta} \mathbf{P}_n$ for the creation of $\mathbf{S}$ as in Sec. \ref{subsect:graphlimited}. Each parametrization enforces \(\mathbf{L}_{\theta}\succeq 0\) via a symmetric construction plus a small \(\varepsilon \mathbf{I}\) to keep eigenvalues nonnegative.

\paragraph{Dense (low-rank PSD).}
Learn \(\boldsymbol{\Theta}\in\mathbb{R}^{n\times r}\):
\[
\mathbf{L}_{\theta} = \boldsymbol{\Theta}\,\boldsymbol{\Theta}^\top + \varepsilon\,\mathbf{I}.
\]

\paragraph{Diagonal.}
Let \(\boldsymbol{\theta}\in\mathbb{R}^n\) be learnable logits and define \(\mathrm{softplus}(t) \triangleq \log\!\bigl(1+ e^{t}\bigr)\). Set
\[
\mathbf{d} = \mathrm{softplus}(\boldsymbol{\theta}) + \varepsilon,\qquad
\mathbf{L}_{\theta} = \mathrm{diag}(\mathbf{d}).
\]

\paragraph{Circulant (wrap-around convolution).}
Learn a kernel \(\boldsymbol{\Theta}\in\mathbb{R}^{E\times E}\); let \(\mathbf{C}_{\mathrm{circ}}(\boldsymbol{\Theta})\in\mathbb{R}^{n\times n}\) denote the circular (wrap-around) convolution operator on vectorized images. Then
\[
\mathbf{L}_{\theta} = \mathbf{C}_{\mathrm{circ}}(\boldsymbol{\Theta})^\top\,\mathbf{C}_{\mathrm{circ}}(\boldsymbol{\Theta}) + \varepsilon\,\mathbf{I}.
\]
\paragraph{Block-diagonal.}
Partition \(n\) into \(b=n/B\) blocks of size \(B\). Learn \(\boldsymbol{\Theta}_i \in \mathbb{R}^{B\times B}\) for \(i=1,\dots,b\). With
\[
\mathrm{blockdiag}(\mathbf{A}_1,\dots,\mathbf{A}_b) \triangleq
\begin{bmatrix}
\mathbf{A}_1 & & \\
& \ddots & \\
& & \mathbf{A}_b
\end{bmatrix},
\]
we set
\[
\mathbf{L}_{\theta} = \mathrm{blockdiag}\bigl(
\boldsymbol{\Theta}_1 \boldsymbol{\Theta}_1^\top + \varepsilon \mathbf{I}_B,\;
\dots,\;
\boldsymbol{\Theta}_b \boldsymbol{\Theta}_b^\top + \varepsilon \mathbf{I}_B
\bigr).
\]

Fig. \ref{fig:L_theta} shows the comparison of a) normalized $\mathbf{T}$-eigenvalues and b) $\mathbf{S}$-coverage for the different parametrizations of $\mathbf{L}_{\theta}$. Using a Laplacian graph such as $\mathbf{L}_{8nn}$ in GSNR yields the best spectral performance and coverage for any value of $p$. Although NPN has an adequate spectrum, its coverage is very limited, whereas the opposite is true for block-diag.

\section{Settings for construction null-restricted Laplacian in practice}\label{app:t_constructio}

We never materialize \(\mathbf{P}_n\) or \(\mathbf{T}\) as dense matrices. Instead, \(\mathbf{P}_n\mathbf{v}=\mathbf{v}-\mathbf{H}^\top(\mathbf{H}\mathbf{H}^\top)^{-1}\mathbf{H}\mathbf{v}\) is exposed as a callable projector using a factorization of \(\mathbf{H}\mathbf{H}^\top\), and \(\mathbf{T}\) is wrapped as a SciPy \texttt{LinearOperator} that applies \(\mathbf{T}\mathbf{x}=\mathbf{P}_n\bigl(\mathbf{L}\,\mathbf{P}_n\mathbf{x}\bigr)\) on the fly. We then invoke ARPACK \cite{lehoucq1998arpack} via \texttt{eigsh} with spectral method to extract the \(k\) smallest–magnitude eigenpairs of \(\mathbf{T}\) (the smoothest graph–null modes), respecting the constraint \(k\leq n\) and defaulting to \(k=\min(q,n-1)\) when unspecified. The routine returns eigenvectors as columns \(\mathbf{U}\in\mathbb{R}^{n\times k}\) and eigenvalues \(\{\mu_j\}\); we set \(\mathbf{S}_{\text{full}}=\mathbf{U}^\top\) so that the rows of \(\mathbf{S}\) form an orthonormal basis of the selected null–space subspace, and finally truncate to the first \(p\) rows for training/inference.
\textcolor{black}{The process is performed for different graph Laplacians $\mathbf{L}$, and the structure selection is guided by our spectral criteria, i.e., maximum coverage/predictability over the first \(p\) modes.} The \emph{demosaicing} case follows the same flow but loads a precomputed sparse \(\mathbf{H}\), optionally lifts the Laplacian to multi–channel form with a Kronecker product \(\mathbf{L}\leftarrow \mathbf{I}_C\otimes\mathbf{L}\).

For the numerical results of this work, we empirically set the dimension $p$. However, our framework provides a principled evaluation on how to select $p$ based on the null-space coverage. Let \(\boldsymbol{\lambda}=(\lambda_1,\dots,\lambda_q)\) be the eigenvalues of
\(\mathrm{Cov}(\mathbf{x}_n)\) in the graph-smooth basis, ordered so that
\(\lambda_1\ge\cdots\ge\lambda_q>0\). Define the cumulative coverage
\(C(p)=\big(\sum_{i=1}^{p}\lambda_i\big)/\big(\sum_{i=1}^{q}\lambda_i\big)\).
We select the effective dimension \(p^\star\) as the smallest \(p\) that simultaneously
achieves a target coverage level and lies on a plateau of the coverage curve, following Algorithm \ref{alg:coverage_p}.

\begin{algorithm}[h]
\caption{Coverage-based automatic selection of \(p\)}
\label{alg:coverage_p}
\begin{algorithmic}[1]
\Require Eigenvalues \(\lambda_1\ge\cdots\ge\lambda_q>0\);
         coverage target \(\kappa\in(0,1)\) (e.g.\ \(\kappa=0.95\));
         slope tolerance \(\delta>0\) (e.g.\ \(\delta=10^{-3}\));
         plateau length \(L\in\mathbb{N}\) (e.g.\ \(L=10\)).
\State Compute total variance \(S \leftarrow \sum_{i=1}^{q}\lambda_i\).
\State For \(p=1,\dots,q\), compute coverage
       \(C(p) \leftarrow \big(\sum_{i=1}^{p}\lambda_i\big)/S\).
\State For \(p=1,\dots,q\), compute incremental gains
       \(\Delta C(p) \leftarrow C(p)-C(p-1)\), with \(C(0)\equiv 0\).
\For{\(p = 1,\dots,q\)}
    \If{\(C(p) \ge \kappa\)}
        \State Check plateau condition:
               \(\max\{\Delta C(p),\dots,\Delta C(\min(p+L-1,q))\} \le \delta\).
        \If{plateau condition holds}
            \State \textbf{return} \(p^\star \leftarrow p\).
        \EndIf
    \EndIf
\EndFor
\State If no \(p\) satisfies the above, set \(p^\star \leftarrow q\) (use all modes).
\end{algorithmic}
\end{algorithm}

\section{Ablation Studies of the Graph Regularizer}
\label{app:graph_reg}
\textcolor{black}{Recall that GSNR is \emph{solver-agnostic}: it augments the data-fidelity objective with the terms
\(\gamma\|\mathrm{G}^{\ast}(\mathbf{y})-\mathbf{S}\mathbf{x}\|_{2}^{2}+\tfrac{\gamma_{g}}{2}\mathbf{x}^{\top}\mathbf{T}\mathbf{x}\).
Consequently, GSNR can be incorporated into any iterative solver, e.g., ADMM, HQS, FISTA, PD, or diffusion-based methods (see \ref{app:GSNR_in_DM}), by including these terms in its \(\mathbf{x}\)-update.}
Before showing further experiments, we describe the baseline algorithms and the GSNR versions. 

\textbf{Plug-and-Play.}
In this case, we used the PGD-PnP method. 
Algorithm \ref{alg:gsnr_pgd} shows the GSNR PnP-PGD modification.

\begin{algorithm}[h]
\footnotesize 
\caption{GSNR PnP-PGD with \textcolor{OliveGreen!100}{null-space} and \textcolor{blue!100}{graph regularization}}
\label{alg:gsnr_pgd}
\begin{algorithmic}[1]
\Require $K,\mathbf{H},\mathbf{y},\alpha,\eta$, $\gamma,\gamma_g,\mathrm{G}^*,\mathbf{S}$
  \State $\mathbf{x}_0=\mathbf{H}^\top \mathbf{y} +\mathbf{S}^\top \mathrm{G}^*(\mathbf{y} ) $
  \For{$i=1,\dots,K$}
 
    \State \begin{align*}
    \mathbf{x}_i\gets\mathbf{x}_{i-1}-\alpha&\left(\mathbf{H}^\top(\mathbf{Hx}_{i-1} -\mathbf{y})+\textcolor{OliveGreen!100}{\gamma \mathbf{S}^\top(\mathbf{Sx}_{i-1} -\mathrm{G}^*(\mathbf{y}))}\right.\\&\left.+ \textcolor{blue!100}{ \gamma_g\mathbf{Tx}_{i-1}}\right)\end{align*}
    \State $\mathbf{x}_i\gets\mathrm{D}_\eta(\mathbf{x}_i)$
  \EndFor
  \State\Return $\mathbf{x}_i$
\end{algorithmic}
\end{algorithm}

\begin{figure}[!t]
\centering
\includegraphics[width=\linewidth]{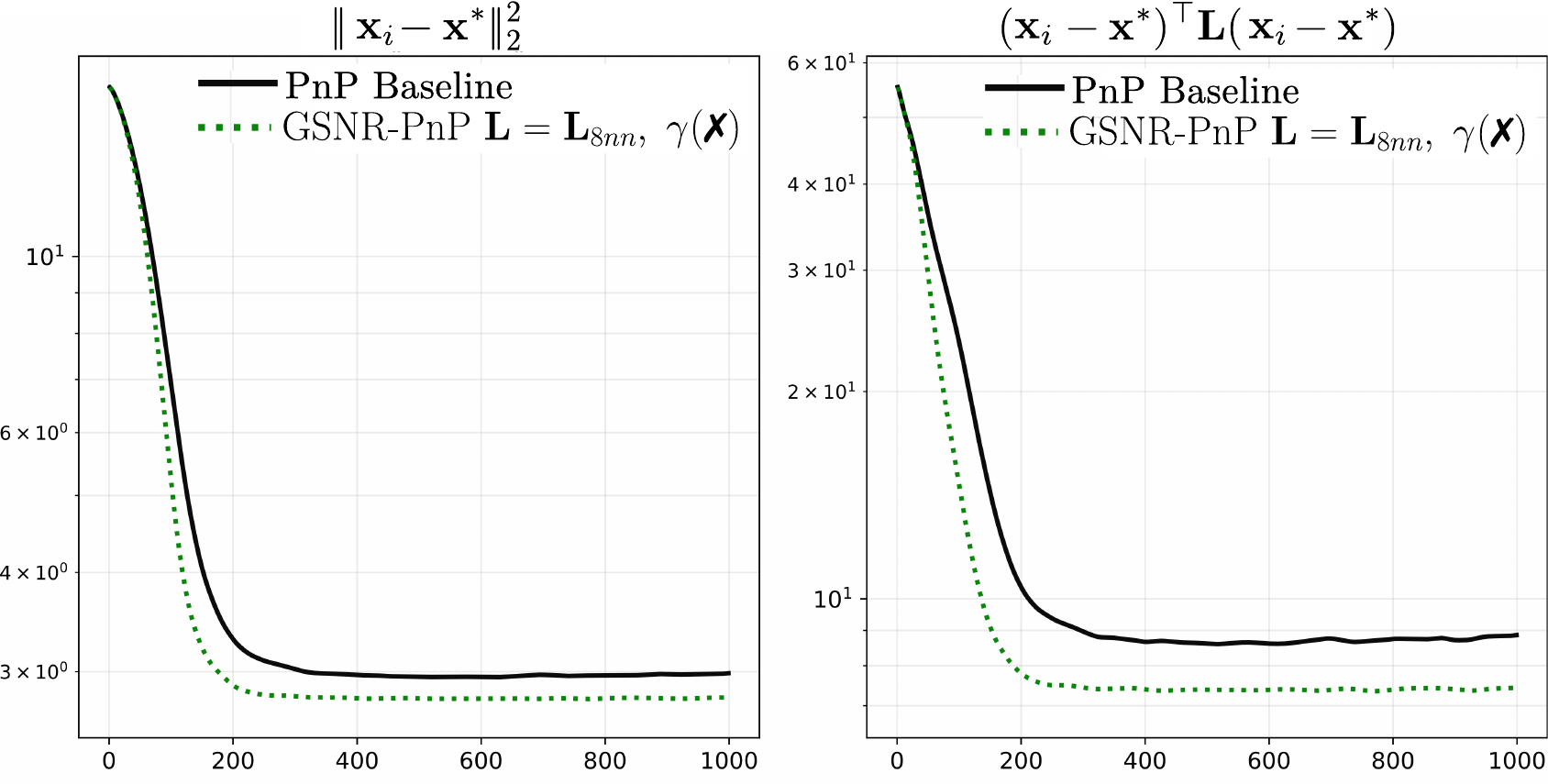}
        \vspace{-0.7cm}
\caption{PnP variant with null-only projector regularizer: illustrative convergence.}
\vspace{-0.4cm}
\label{fig:convergence_with_projector}
\end{figure}

\subsection{Why graph smooth null-space?}  An initial test that shows us the usefulness of the null-space Laplacian is to start from the assumption that the reconstruction error with respect to the ground-truth, \(\mathbf{x}_n^{(i)}\!=\!\mathbf{x}_i-\mathbf{x}^\ast\) is the null-space. Fig. \ref{fig:convergence_with_projector} shows the convergence of the reconstruction and how the null-laplacian regularizer behaves. The baseline (black) shows that improving the quality of our reconstruction also reduces the null-laplacian error \((\mathbf{x}_n^{(i)})^\top \mathbf{L}(\mathbf{x}_n^{(i)})\), even though this term is not considered in the PnP cost function. Adding this term (green) would further improve the reconstruction. This demonstrates the usefulness of including a regularizer that promotes a graph-limited null-space.

\subsection{Why low-dimensional null-space projections?} 
Fig. \ref{fig:pablation} shows the convergence of the reconstruction when using different values of $p$. It can be seen that as the value of $p$ increases, the reconstruction error decreases; however, this reaches a limit, since from $p/n=0.1$ onward, the gain decreases. These results justify the need to use low-dimensional null-space projections, as they enable higher-quality reconstruction without being significantly affected by the decrease in predictability.

\begin{figure}[!t]
    \centering
\includegraphics[width=0.8\linewidth]{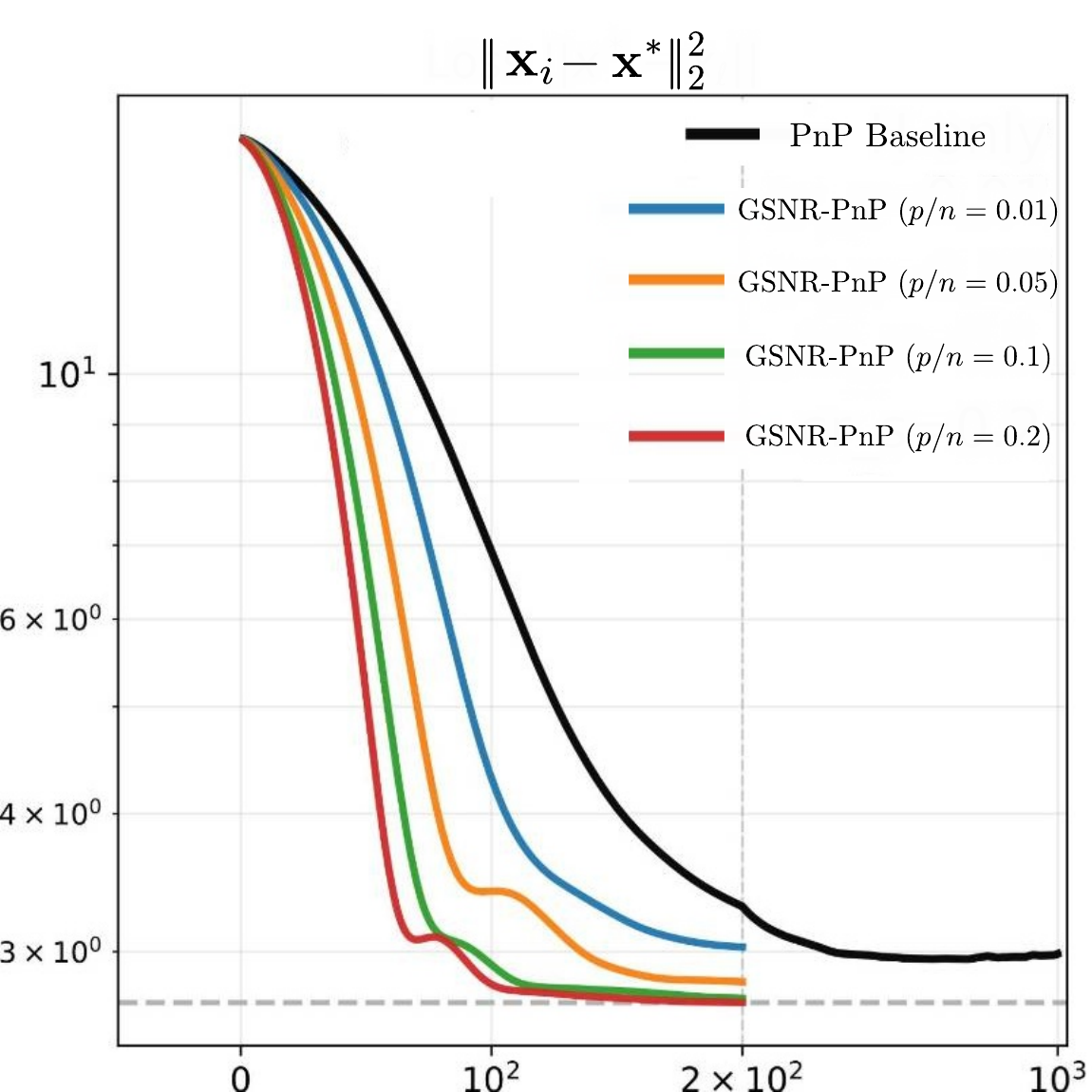}
    \vspace{-0.3cm}
\caption{Low-dimensional null-space.}
    \vspace{-0.4cm}
\label{fig:pablation}
\end{figure}

\subsection{Cost-function ablation}

\begin{figure}[!t]
    \centering
    \includegraphics[width=\linewidth]{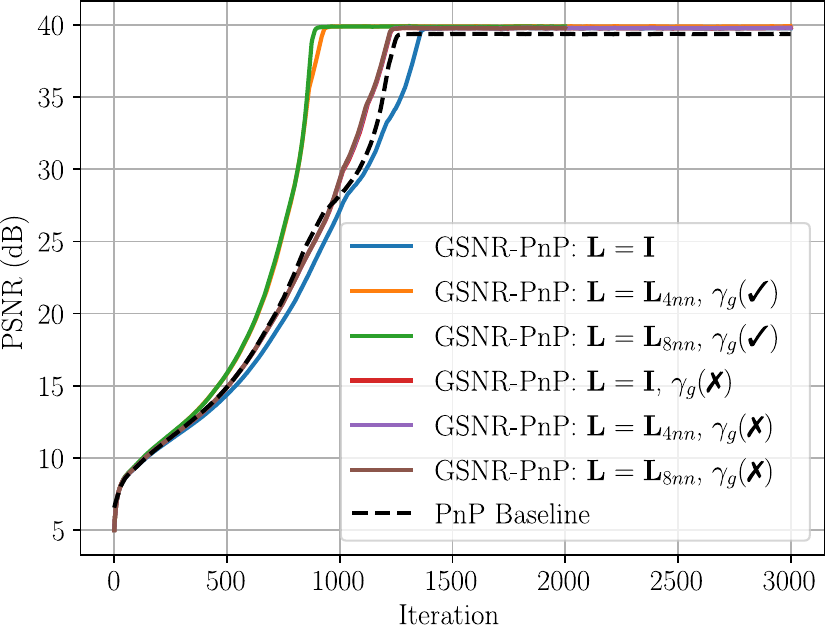}
        \vspace{-0.7cm}
    \caption{Effect of the null-only graph regularizer on GSNR-PnP convergence for demosaicing.
    We plot PSNR versus iteration for GSNR-PnP with different Laplacians
    ($\mathbf{L}=\mathbf{I}$, $\mathbf{L}_{4\mathrm{nn}}$, $\mathbf{L}_{8\mathrm{nn}}$) and with
    the graph-regularization weight $\gamma_{\mathrm{g}}$ either enabled or disabled,
    along with the standard PnP baseline (dashed).}
    \vspace{-0.1cm}
    \label{fig:gsnr_pnp_gamma}
\end{figure}

\begin{figure}[!t]
    \centering
    \includegraphics[width=\linewidth]{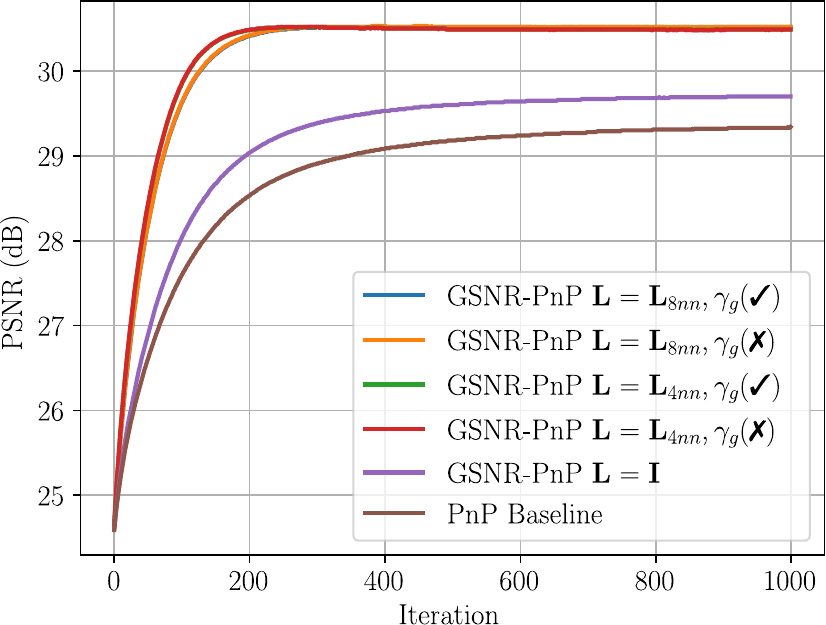}
        \vspace{-0.7cm}
    \caption{Effect of the null-only graph regularizer on GSNR-PnP convergence for super-resolution.
    We plot PSNR versus iteration for GSNR-PnP with different Laplacians
    ($\mathbf{L}=\mathbf{I}$, $\mathbf{L}_{4\mathrm{nn}}$, $\mathbf{L}_{8\mathrm{nn}}$) and with
    the graph-regularization weight $\gamma_{\mathrm{g}}$ either enabled or disabled,
    along with the standard PnP baseline (dashed).}
    \vspace{-0.1cm}
    \label{fig:gsnr_pnp_gamma_sr}
\end{figure}

Figure~\ref{fig:gsnr_pnp_gamma} highlights two effects of the GSNR design: improved conditioning
through the null-only graph regularizer and sensitivity to the choice of Laplacian in demosaicing problem. All GSNR-PnP
variants eventually reach a similar high PSNR plateau, slightly above the PnP baseline, showing
that incorporating the graph-smooth null-space prior does not harm the final reconstruction
quality and can modestly improve it. However, the convergence speed differs significantly:
when $\gamma_{\mathrm{g}}$ is active, GSNR-PnP with
$\mathbf{L}_{4\mathrm{nn}}$ or $\mathbf{L}_{8\mathrm{nn}}$ reaches its peak PSNR in far fewer
iterations than both the geometry-free $\mathbf{L}=\mathbf{I}$ case and the baseline.

When the graph regularizer is turned off, the graph-based methods still
outperform the baseline but converge more slowly, with trajectories that are closer to the standard PnP.
This confirms the theoretical prediction that the null-only graph term improves the spectrum of
the normal matrix, effectively “lifting’’ the null directions, while the GSNR basis itself governs
the final achievable PSNR. In practice, combining a graph Laplacian with a nonzero
$\gamma_{\mathrm{g}}$ yields the best trade-off: fast convergence to high-quality solutions with
minimal overhead in the PnP update. Similar analysis and results are shown in Fig. \ref{fig:gsnr_pnp_gamma_sr}, for the image super-resolution problem, where the graph regularizer slightly increases convergence speed.

\begin{figure}[!t]
    \centering
    \includegraphics[width=0.8\linewidth]{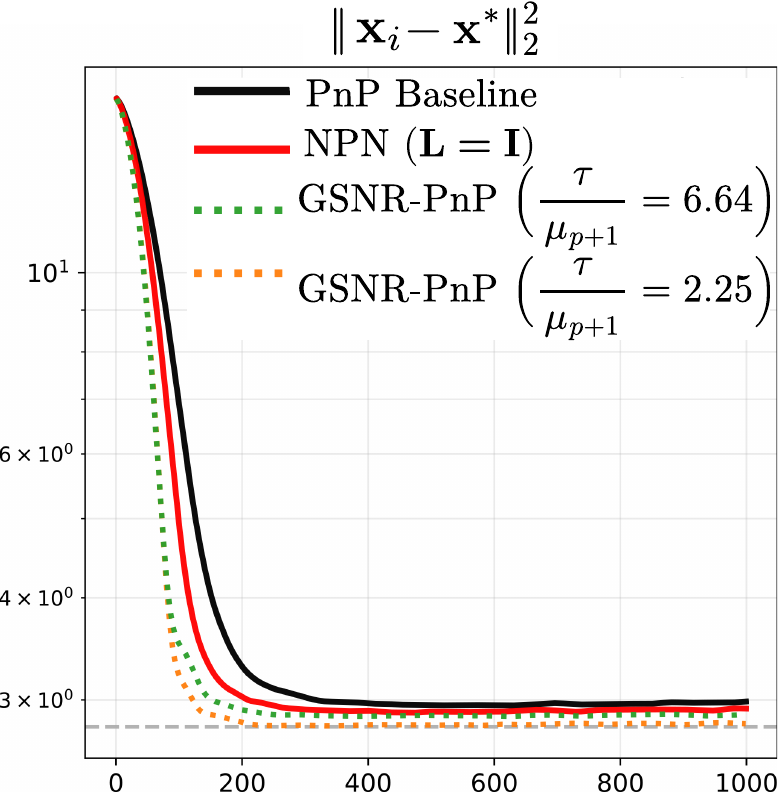}
    \vspace{-0.3cm}
\caption{Bound vs.\ quality. }
    \vspace{-0.3cm}
\label{fig:boundquality}
\end{figure}

 \subsection{Minimax optimality bound}
\textcolor{black}{To experimentally validate the theory of Theorem \ref{th:Th1}, two different operators $\mathbf{S}$ were tested with $p=0.1n$ and $\tau=1$ for Fig. \ref{fig:boundquality}. In the green and yellow cases, there are two extreme values of the bound $\frac{\tau}{\mu_{p+1}}$ that demonstrate the theorem's postulate, since a $\mathbf{S}$ with a lower bound (2.25) results in better reconstruction.}

 \subsection{Fixed-point convergence}
\label{app:fixed_point_convergence}
 When analyzing an extreme case in Fig. \ref{fig:fixedpoint}, it can be observed that the proposed GSNR method (dotted lines) converges to a fixed point without diverging as iterations progress, achieving even faster convergence than NPN and the baseline.

\begin{figure}
    \centering
\includegraphics[width=0.8\linewidth]{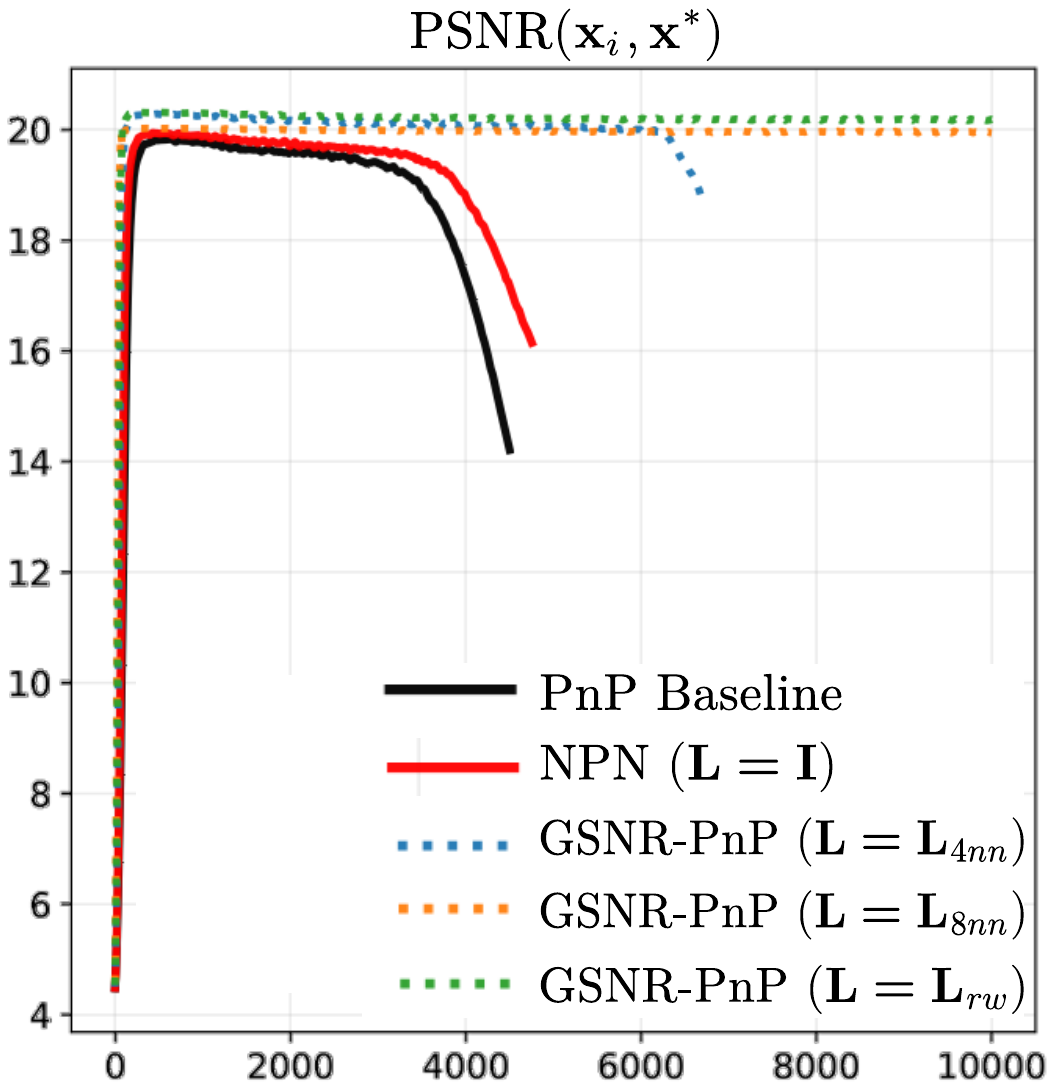}
        \vspace{-0.3cm}
\caption{Fixed-point convergence.}\label{fig:fixedpoint}
        \vspace{-0.1cm}
\end{figure}

\section{Coverage curve}
\label{app:coverage_in_other_tasks}

Figure~\ref{fig:coverage_sr} shows that graph-based Laplacians concentrate null-space variance
into a small number of modes. For $\mathbf{L}_{4\mathrm{nn}}$, $\mathbf{L}_{\mathrm{rw}}$, and
$\mathbf{L}_{\mathrm{sym}}$, the coverage rises steeply and reaches almost full variance with
only a fraction of the null-space dimension, whereas the identity Laplacian
$\mathbf{L}=\mathbf{I}$ exhibits an almost perfectly linear curve $C(p)\approx p/q$, meaning
that coverage grows only proportionally to the dimension, and no “early” compression occurs. The
$\mathbf{L}_{8\mathrm{nn}}$ graph still offers a substantial advantage over $\mathbf{L}=\mathbf{I}$.
But its curve saturates below the others, indicating slightly less concentrated variance. Overall,
These results confirm the theoretical prediction that graph-smooth null modes provide much better
coverage than geometry-free bases: a relatively small $p$ already captures most null-space energy
for the normalized and grid Laplacians, while the identity requires many more modes to achieve

\begin{figure}[t!]
\centering
\includegraphics[width=\linewidth]{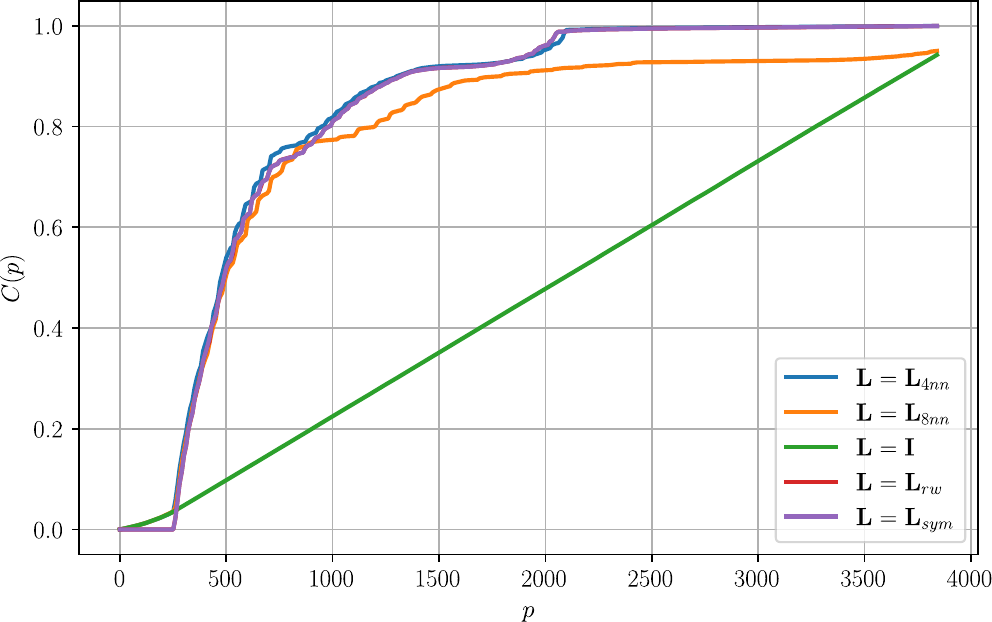}
\vspace{-0.7cm}
\caption{Spectral coverage curves $C(p)$ for super-resolution, comparing different
    Laplacian choices in the GSNR construction: grid $\mathbf{L}_{4\mathrm{nn}}$,
    grid $\mathbf{L}_{8\mathrm{nn}}$, random-walk normalized $\mathbf{L}_{\mathrm{rw}}$,
    symmetric normalized $\mathbf{L}_{\mathrm{sym}}$, and the geometry-free baseline
    $\mathbf{L}=\mathbf{I}$. Coverage $C(p)$ is the fraction of null-space variance captured by the first $p$ graph-smooth modes.}
\label{fig:coverage_sr}
\vspace{-0.4cm}
\end{figure}

\section{{GSNR inclusion in Diffusion-based solvers}}
\label{app:GSNR_in_DM}
\subsection{DPS \cite{dps}.}
We denote $K$ is the number of reverse diffusion steps, and $i\in\{0,\dots,K-1\}$ is the
reverse-time index; $\mathbf{x}_i\in\mathbb{R}^n$ is the current latent state and
$\mathbf{x}_K\sim\mathcal{N}(\mathbf{0},\mathbf{I})$ is the Gaussian start;
$\hat{\mathbf{s}}=\mathbf{s}_\theta(\mathbf{x}_i,i)$ is the score/noise estimate produced by the
network with parameters $\theta$; $\hat{\mathbf{x}}_0$ is the network’s prediction of the clean
sample at step $i$; $\alpha_i\in(0,1]$ is the per-step retention factor,
$\beta_i=1-\alpha_i$ is the noise increment, and
$\bar{\alpha}_i=\prod_{j=1}^{i}\alpha_j$ is the cumulative product
(with $\bar{\alpha}_0=1$); $\zeta_i>0$ is the data-consistency step size and
$\tilde{\sigma}_i\ge 0$ is the sampling noise scale at step $i$;
$\mathbf{z}\sim\mathcal{N}(\mathbf{0},\mathbf{I})$ is i.i.d.\ Gaussian noise;
$\mathbf{x}'_{i-1}$ denotes the pre–data-consistency iterate before applying the gradient
correction. In the GSNR version, we further introduce a learned null-space predictor
$\mathrm{G}(\mathbf{y})\approx\mathbf{S}\mathbf{x}^\ast$ and a weight $\gamma>0$ for the
graph-smooth null-space penalty $\|\mathrm{G}(\mathbf{y})-\mathbf{S}\hat{\mathbf{x}}_0\|_2^2$. Algorithm \ref{alg:gsnr_dps} shows the integration of GSNR in DPS.

\begin{algorithm}[!t]
\footnotesize
\caption{GSNR–DPS Sampling with \textcolor{OliveGreen!100}{null-space} and \textcolor{blue!100}{graph regularization}}
\label{alg:gsnr_dps}
\begin{algorithmic}[1]
\Require $K,\;\mathbf{H},\;\mathbf{y},\;\{\,\zeta_i\,\}_{i=1}^K,\;\{\,\tilde{\sigma}_i\,\}_{i=1}^K,\;
         \gamma,\;\gamma_{\mathrm{g}},\;\mathbf{S},\;\mathrm{G}^\ast,\;\mathbf{P}_n,\;\mathbf{L}$
\State $\mathbf{x}_K \sim \mathcal{N}(\mathbf{0},\mathbf{I})$
\For{$i = K\!-\!1,\dots,0$}
  \State $\hat{\mathbf{s}} \gets \mathbf{s}_\theta(\mathbf{x}_i,\, i)$
  \State $\hat{\mathbf{x}}_0 \gets \frac{1}{\sqrt{\bar{\alpha}_i}}\!\left(\mathbf{x}_i + (1-\bar{\alpha}_i)\hat{\mathbf{s}}\right)$
  \State $\mathbf{z} \sim \mathcal{N}(\mathbf{0},\mathbf{I})$
  \State $\mathbf{x}'_{i-1} \gets
      \frac{\sqrt{\alpha_i} (1-\bar{\alpha}_{i-1})}{1-\bar{\alpha}_i}\;\mathbf{x}_i
      + \frac{\sqrt{\bar{\alpha}_{i-1}}\beta_i}{1-\bar{\alpha}_i}\;\hat{\mathbf{x}}_0
      + \tilde{\sigma}_i\,\mathbf{z}$
  \State $\mathbf{x}_{i-1} \gets \mathbf{x}'_{i-1}
      \;-\;\zeta_i\,\nabla_{\mathbf{x}_i}\Bigl[
           \bigl\|\mathbf{y}-\mathbf{H}\,\hat{\mathbf{x}}_0\bigr\|_2^2
           + \textcolor{OliveGreen!100}{\gamma\,\bigl\|\mathrm{G}^\ast(\mathbf{y})-\mathbf{S}\hat{\mathbf{x}}_0\bigr\|_2^2}
           + \textcolor{blue!100}{\gamma_{\mathrm{g}}\,\bigl\|\mathbf{P}_n \mathbf{L}\hat{\mathbf{x}}_0\bigr\|_2^2}
        \Bigr]$
\EndFor
\State \Return $\hat{\mathbf{x}}_0$
\end{algorithmic}
\end{algorithm}

\medskip

\subsection{DiffPIR \cite{DiffPIR}.}
$\sigma_n>0$ denotes the standard deviation of the measurement noise, and $\eta>0$ is the
data–proximal penalty that trades off data fidelity and the denoiser prior inside the subproblem;
$\rho_i \triangleq \eta\,\sigma_n^{2}/\tilde{\sigma}_i^{\,2}$ is the iteration-dependent
weight used in the proximal objective at step $i$;
$\tilde{\mathbf{x}}^{(i)}_{0}$ is the score-model denoised prediction of the clean sample at
step $i$ (before enforcing data consistency);
$\hat{\mathbf{x}}^{(i)}_{0}$ is the solution of the data-proximal subproblem at step $i$;
$\hat{\boldsymbol{\epsilon}}=\bigl(1-\alpha_i\bigr)^{-1/2}
\bigl(\mathbf{x}_i-\sqrt{\bar{\alpha}_i}\,\hat{\mathbf{x}}^{(i)}_0\bigr)$
is the effective noise estimate implied by $(\mathbf{x}_i,\hat{\mathbf{x}}^{(i)}_0)$;
$\boldsymbol{\epsilon}_i\sim\mathcal{N}(\mathbf{0},\mathbf{I})$ is the fresh Gaussian noise
injected at step $i$; and $\zeta\in[0,1]$ mixes deterministic and stochastic updates
($\zeta=0$ fully deterministic, $\zeta=1$ fully stochastic). In the GSNR variant, we again use
a null-space predictor $\mathrm{G}(\mathbf{y})$ and a weight $\gamma>0$ to bias the proximal
subproblem towards graph-smooth null-space coefficients. In Algorithm \ref{alg:gsnr_diffpir}, we show the GSNR modification of DiffPIR.

\begin{algorithm}[!t]
\footnotesize
\caption{GSNR–DiffPIR Sampling \textcolor{OliveGreen!100}{null-space} and \textcolor{blue!100}{graph regularization}}
\label{alg:gsnr_diffpir}
\begin{algorithmic}[1]
\Require $K,\;\mathbf{H},\;\mathbf{y},\;\sigma_n,\;\{\tilde{\sigma}_i\}_{i=1}^K,\;\zeta,\;\eta,\;\gamma,\;
         \gamma_{\mathrm{g}},\;\mathbf{S},\;\mathrm{G}^\ast,\;\mathbf{P}_n,\;\mathbf{L}$
\State Precompute $\rho_i \gets \eta\,\sigma_n^2/\tilde{\sigma}_i^{\,2}$ for $i=1,\dots,K$
\State $\mathbf{x}_K \sim \mathcal{N}(\mathbf{0},\mathbf{I})$
\For{$i=K,\dots,1$}
  \State $\hat{\mathbf{s}} \gets \mathbf{s}_\theta(\mathbf{x}_i,i)$
  \State $\tilde{\mathbf{x}}^{(i)}_0 \gets \frac{1}{\sqrt{\bar{\alpha}_i}}\!\left(\mathbf{x}_i + (1-\bar{\alpha}_i)\hat{\mathbf{s}}\right)$
  \State $\hat{\mathbf{x}}^{(i)}_0 \gets \arg\min_{\mathbf{x}}\;
          \bigl\|\mathbf{y}-\mathbf{H}\mathbf{x}\bigr\|_2^2
          + \rho_i\bigl\|\mathbf{x}-\tilde{\mathbf{x}}^{(i)}_0\bigr\|_2^2
          + \textcolor{OliveGreen!100}{\gamma\,\bigl\|\mathrm{G}^\ast(\mathbf{y})-\mathbf{S}\mathbf{x}\bigr\|_2^2}
          + \textcolor{blue!100}{\gamma_{\mathrm{g}}\,\bigl\|\mathbf{P}_n \mathbf{L}\mathbf{x}\bigr\|_2^2}$
  \State $\hat{\boldsymbol{\epsilon}} \gets \frac{1}{\sqrt{1-\alpha_i}}\!\left(\mathbf{x}_i - \sqrt{\bar{\alpha}_i}\,\hat{\mathbf{x}}^{(i)}_0\right)$
  \State $\boldsymbol{\epsilon}_i \sim \mathcal{N}(\mathbf{0},\mathbf{I})$
  \State $\mathbf{x}_{i-1} \gets \sqrt{\bar{\alpha}_{i-1}}\,\hat{\mathbf{x}}^{(i)}_0
    + \sqrt{1-\bar{\alpha}_{i-1}}\big(\sqrt{1-\zeta}\,\hat{\boldsymbol{\epsilon}}+\sqrt{\zeta}\,\boldsymbol{\epsilon}_i\big)$
\EndFor
\State \Return $\hat{\mathbf{x}}^{(1)}_0$
\end{algorithmic}
\end{algorithm}

In addition to the DPS / DiffPIR variables, GSNR uses:
$\mathbf{S}\in\mathbb{R}^{p\times n}$, the graph-smooth null-space projector;
$\mathrm{G}^\ast(\mathbf{y})\approx\mathbf{S}\mathbf{x}^\ast$, a learned predictor of the target
null coefficients; $\gamma>0$, the weight of the null-space matching term;
$\gamma_{\mathrm{g}}>0$, the weight of the null-only graph regularizer; and
$\mathbf{P}_n$ and $\mathbf{L}$, the null projector and graph Laplacian, respectively.
The GSNR prior is
\[
\gamma\bigl\|\mathrm{G}^\ast(\mathbf{y})-\mathbf{S}\hat{\mathbf{x}}_0\bigr\|_2^2
\;+\;
\gamma_{\mathrm{g}}\bigl\|\mathbf{P}_n \mathbf{L}\hat{\mathbf{x}}_0\bigr\|_2^2,
\]
for DPS (acting on the current prediction $\hat{\mathbf{x}}_0$), and the analogous expression
with $\mathbf{x}$ in the DiffPIR proximal subproblem.

\begin{algorithm}[!t]
\footnotesize
\caption{GSNR–MPGD Sampling \textcolor{OliveGreen!100}{null-space} and \textcolor{blue!100}{graph regularization}}
\label{alg:gsnr_mpgd}
\begin{algorithmic}[1]
\Require $K,\;\mathbf{H},\;\mathbf{y},\;\{\,\zeta_i\,\}_{i=1}^K,\;\{\,\tilde{\sigma}_i\,\}_{i=1}^K,\;
         \gamma,\;\gamma_{\mathrm{g}},\;\mathbf{S},\;\mathrm{G}^\ast,\;\mathbf{P}_n,\;\mathbf{L}$
\State $\mathbf{z}_K \sim \mathcal{N}(\mathbf{0},\mathbf{I})$
\For{$i = K-1,\dots,0$}
\State $\epsilon_i \sim \mathcal{N}(0,\mathbf{I})$
\State $\mathbf{z}_{0|i} = \frac{1}{\sqrt{\overline{\alpha}_i}} (\mathbf{z}_i - \sqrt{1-\overline{\alpha}_i}\epsilon_\theta(\mathbf{z}_i,i)$

\State $\mathbf{z}_{0|i} = \mathbf{z}_{0|i} -  \zeta_i\left( \nabla_{ \mathbf{z}_{0|i}}\Vert \mathbf{H}\mathrm{D}(\mathbf{z}_{0|i})-\mathbf{y}\Vert +           \right.
$

\State $ \left. \qquad \;\; \textcolor{OliveGreen!100}{\gamma\,\bigl\|\mathrm{G}^\ast(\mathbf{y})-\mathbf{S}\mathrm{D}(\mathbf{z}_{0|i})\bigr\|_2^2} + \textcolor{blue!100}{\gamma_{\mathrm{g}}\,\bigl\|\mathbf{P}_n \mathbf{L}\mathrm{D}(\mathbf{z}_{0|i})\bigr\|_2^2}\right)$
\State $\mathbf{z}_{i-1} = \sqrt{\overline{\alpha}_{i-1}} \mathbf{z}_{0|i} + \sqrt{1-\overline{\alpha}_{i-1}-\sigma_i^2}\epsilon_\theta(\mathbf{z}_i,i)+\sigma_i\epsilon_i$
\EndFor
\State \Return $\hat{\mathbf{x}} = \mathrm{D}(\mathbf{z}_0)$
\end{algorithmic}
\end{algorithm}

\medskip
\subsection{MPGD \cite{he2024manifold}.}
We evaluate graph-null-space-regularized manifold projected gradient descent (MPGD) \cite{he2024manifold} for super-resolution. This DM is performed on the latent space \cite{rombach2022high}. We used the pre-trained latent diffusion model from \footnote{\href{http://github.com/CompVis/latent-diffusion}{github.com/CompVis/latent-diffusion}} with the CelebA-HQ model. In Algorithm \ref{alg:gsnr_mpgd}, we show the GSNR modification of MPGD. For the experiments, we used the $\mathbf{L}=\mathbf{L}_{8nn}$ variant with $p=0.1 n$. \textcolor{black}{In Fig.~\ref{fig:dm_mpgd}, we present the visual outcomes of incorporating GSNR into MPGD. This integration yields up to 0.78 dB SR improvement, indicating that GSNR enhances even competitive end-to-end diffusion-based solvers.}

\begin{figure}[!t]
    \centering
    \includegraphics[width=\linewidth]{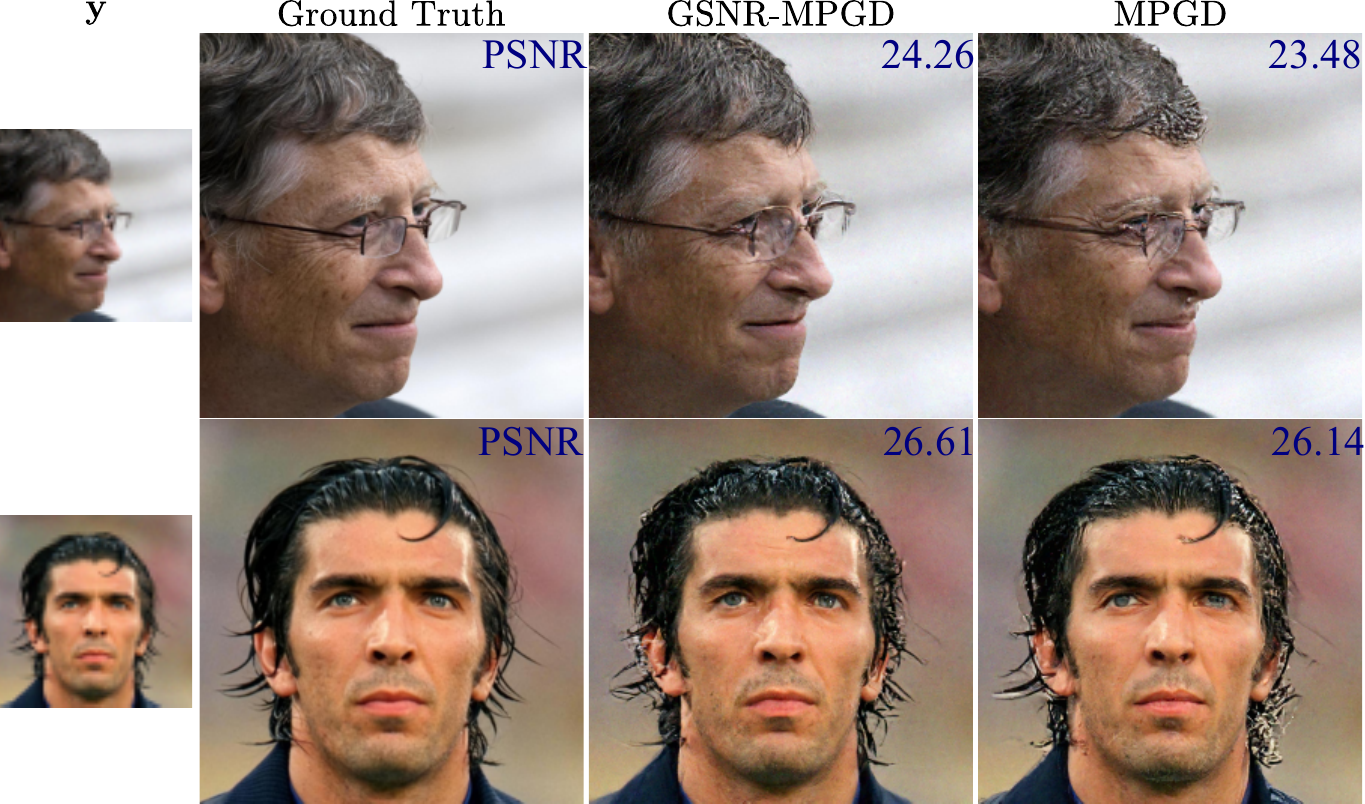}
    \caption{Results using latent-space diffusion models for SR.}
    \label{fig:dm_mpgd}
\end{figure}

\section{CS results}
\label{app:cs_results}

\textcolor{black}{Fig. \ref{fig:cost_function_ablation} shows an ablation of \eqref{eq:recon} for the convergence of the reconstruction in PnP-FISTA for CS. PnP Baseline indicates that $\gamma=\gamma_g=0$. For the GSNR, with only the graph-regularizer (green), $\gamma=0$ is used. The state-of-the-art, NPN baseline (red) uses the matrix $\mathbf{S}$ from \cite{Neurips} and $\gamma_g=0$, obtains greater acceleration. GSNR without graph-regularization (blue) uses the proposed operator $\mathbf{S}$ from \eqref{eq:S_definition} and the learned network $\mathrm{G}$ with $\gamma_g=0$. Finally, GSNR with both regularizers, in yellow, utilizes the entire equation \eqref{eq:recon}. From this figure, we can conclude the contribution of each term: if $\tfrac{\gamma_g}{2}\,(\Pin \mathbf{x})^\top \mathbf{L}(\Pin \mathbf{x})$ is used, better convergence and reconstruction will be achieved. If $\gamma\,\|\mathrm{G}(\mathbf{y})-\mathbf{S} \mathbf{x}\|_2^2$ is used, convergence is further accelerated and reconstruction is improved. Using both terms guarantees good convergence and the best possible reconstruction quality (yellow).}

\begin{figure}[!t]
    \centering
\includegraphics[width=0.8\linewidth]{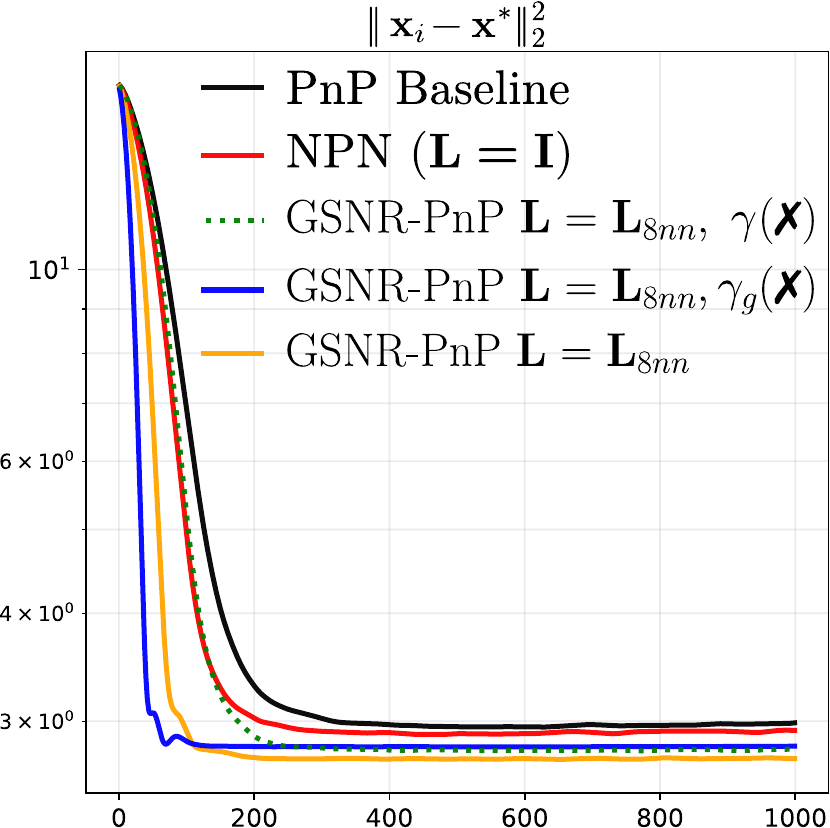}
\vspace{-0.2cm}
\caption{Convergence comparison with PnP, NPN, and GSNR for CS.}
\vspace{-0.3cm}
\label{fig:cost_function_ablation}
\end{figure}

\section{Demosaicking results}
\label{app:demosaicking}

\begin{table}[!t]
\caption{Experimental settings  for GSNR PGD-PnP in demosacing.} 
\vspace{-0.3cm}
\label{tab:parameters_demo}
\centering
\begin{tabular}{lc}
\toprule
Parameter & Value\\
\midrule
$\gamma_{\mathrm{g}}$&0.01\\
$\gamma$ &0.1\\
$\eta$&0.01\\
$\alpha$&$\frac{1.0}{\|\mathbf{H}\|}$\\
$K$ (Max iterations)&3000\\
\bottomrule
\end{tabular}%
\end{table}
In Table \ref{tab:parameters_demo}, the parameter settings are shown for demosaicing experiments.
As shown in Table \ref{tab:psnr_compact}, GSNR consistently improves over the baselines. With Lip-DnCNN, GSNR with either $\mathbf{L}_{4\text{nn}}$ or $\mathbf{L}_{8\text{nn}}$ yields a clear PSNR gain over both PGD-PnP and NPN, showing that graph-limited null-space information brings a systematic boost. With DRUNet, all methods achieve similar absolute PSNR, but GSNR still matches or slightly surpasses NPN across graph topologies. Overall, the improvements are modest yet consistent, underscoring that the main advantage comes from the GSNR null-space representation rather than the specific choice of backbone denoiser or additional graph penalty.
\begin{table}[!t]
\caption{Final PSNR (dB) comparison across graph variants and topologies for demosaicing with $n=64^2$ and $p=0.5\cdot 3\cdot n$.}
\vspace{-0.3cm}
\label{tab:psnr_compact}
\centering
\resizebox{\linewidth}{!}{\begin{tabular}{lrcc}
\toprule
Method & $\gamma_{\mathrm{g}}$ & Lip-DnCNN \cite{ryu2019plug} & DRUNet  \cite{zhang2021plug} \\
\midrule 
Baseline & -- & 39.35 & 27.91\\
NPN \cite{Neurips}        & \ding{56} & 39.77 & 30.12 \\
GSNR w. $\mathbf{L}_{4nn}$  & \ding{56} & 39.79 & 30.14 \\
GSNR w. $\mathbf{L}_{4nn}$  & \ding{51} & 39.89 & 30.13 \\
GSNR w. $\mathbf{L}_{8nn}$  & \ding{56} & 39.77 & 30.27 \\
GSNR w. $\mathbf{L}_{8nn}$  & \ding{51} & 39.88 & 30.27 \\
\bottomrule
\end{tabular}}
\end{table}

\section{SR results} \label{app:sr_results}
\textbf{SAR details for SR:} from which we extract 3289 patches of size $128\times128$ from six satellite scenes for training and 1162 patches from two additional satellite scenes for evaluation. 

\begin{figure}[!t]
    \centering
    \includegraphics[width=\linewidth]{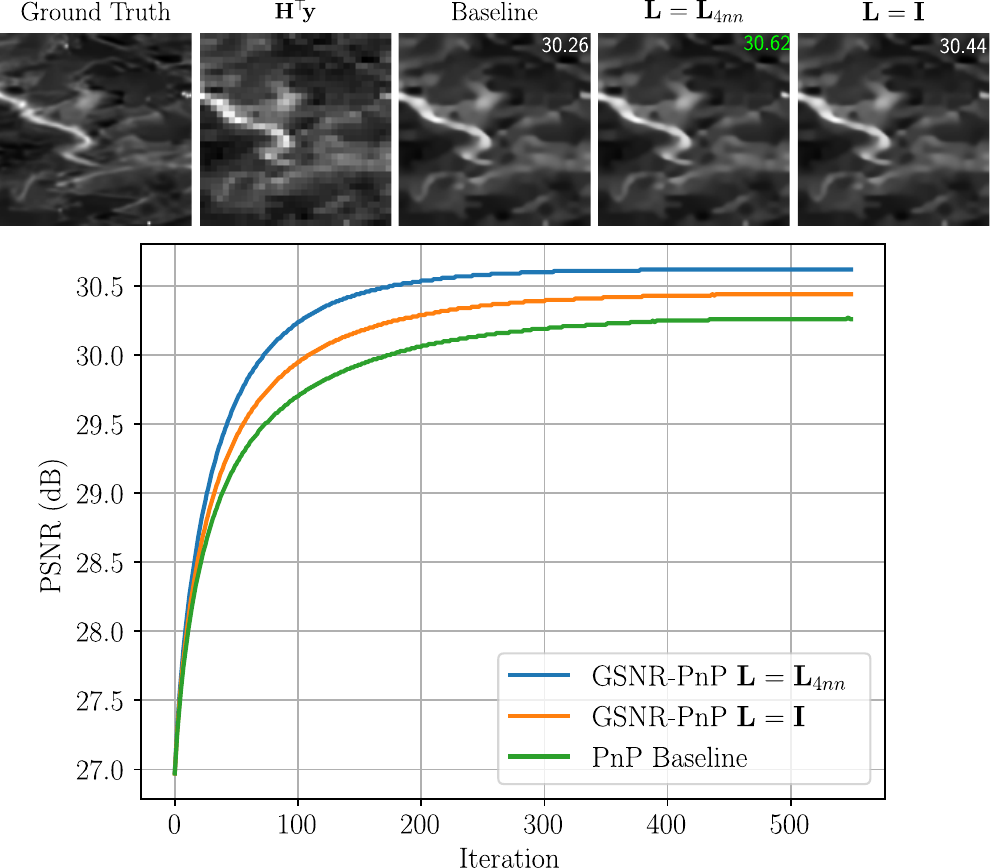}
    \vspace{-0.6cm}
    \caption{Evaluation of GSNR-PnP on a non-optical imaging example (SAR-like data).
    Top row: ground-truth image $\mathbf{x}^\ast$, back-projection $\mathbf{H}^\top\mathbf{y}$,
    baseline PnP reconstruction, and GSNR-PnP with $\mathbf{L}_{4\mathrm{nn}}$ and
    $\mathbf{L}=\mathbf{I}$, with PSNR values overlaid. 
    Bottom: PSNR as a function of iteration for GSNR-PnP with $\mathbf{L}_{4\mathrm{nn}}$,
    GSNR-PnP with $\mathbf{L}=\mathbf{I}$, and the PnP baseline.}
    \label{fig:gsnr_sar}
\end{figure}

\begin{figure*}[!t]
    \centering
    \includegraphics[width=\linewidth]{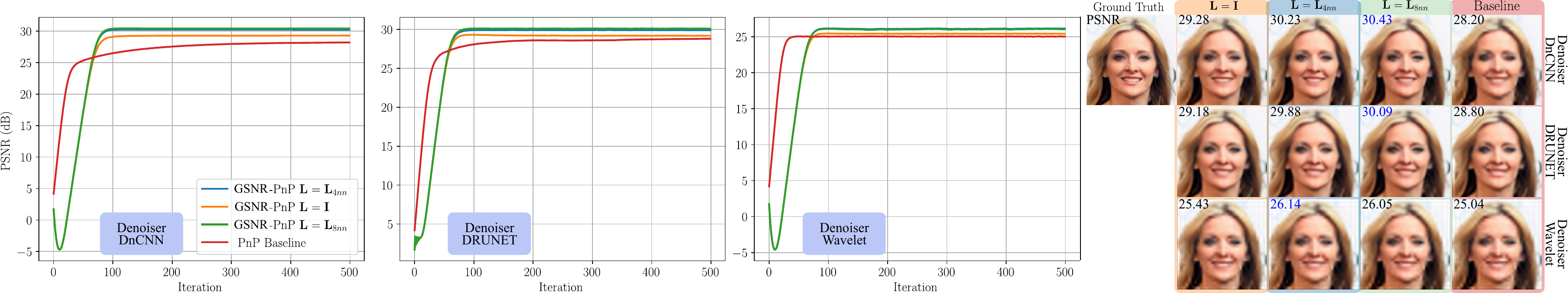}
    \vspace{-0.7cm}
    \caption{{Super-resolution results with GSNR-PnP for different graph Laplacians and denoisers.
    Left: PSNR versus iteration for GSNR-PnP with $\mathbf{L}=\mathbf{I}$, $\mathbf{L}_{4\mathrm{nn}}$,
    $\mathbf{L}_{8\mathrm{nn}}$, and the PnP, using (from left to right) DnCNN, DRUNET, and a wavelet denoiser. Right: corresponding reconstructions for a representative face image, with the ground truth at the top-left and PSNR values overlaid on each result.} }
    \vspace{-0.3cm}
    \label{fig:sr_gsnr_dynamics}
\end{figure*}

Figure~\ref{fig:gsnr_sar} demonstrates that the proposed graph-smooth null-space representation
extends beyond natural images to other imaging domains. The top-row reconstructions show that
GSNR-PnP with a grid Laplacian $\mathbf{L}_{4\mathrm{nn}}$ produces sharper, less noisy
structures than both the baseline PnP solver and the geometry-free $\mathbf{L}=\mathbf{I}$
variant, even though the scene exhibits a speckled texture rather than a smooth photographic
content. The corresponding PSNR curves in the bottom panel confirm this behavior quantitatively:
GSNR-PnP with $\mathbf{L}_{4\mathrm{nn}}$ converges faster and stabilizes at the highest PSNR,
while GSNR-PnP with $\mathbf{L}=\mathbf{I}$ still improves over the baseline but remains clearly
below the graph-based design. These results indicate that encoding graph-smooth structure in the
null-space is beneficial not only for face or natural-image SR, but also for more
challenging sensing models such as SAR-like imaging, supporting the broader applicability of
GSNR across imaging modalities.

Figure~\ref{fig:sr_gsnr_dynamics} illustrates the effect of the graph-smooth null-space design on
super-resolution performance and convergence. The PSNR–iteration plots (left) show that, for all
three denoisers, GSNR-PnP with either $\mathbf{L}_{4\mathrm{nn}}$ or $\mathbf{L}_{8\mathrm{nn}}$
converges to a higher PSNR than both the baseline PnP solver and the geometry-free
$\mathbf{L}=\mathbf{I}$ variant. With DnCNN and DRUNET, the graph-based curves not only reach a
higher plateau but also exhibit a steeper initial rise, indicating a faster approach to a good
solution. Even with the simpler wavelet denoiser, the graph-limited versions match or slightly
improve upon the baseline while maintaining stable dynamics. The image grid on the right confirms
these trends visually: reconstructions obtained with $\mathbf{L}_{4\mathrm{nn}}$ and
$\mathbf{L}_{8\mathrm{nn}}$ display sharper facial contours and more coherent high-frequency
details than both the baseline and $\mathbf{L}=\mathbf{I}$, highlighting the benefit of injecting
graph-smooth structure specifically into the null space.

\begin{figure}[ht]
    \centering
    \includegraphics[width=\linewidth]{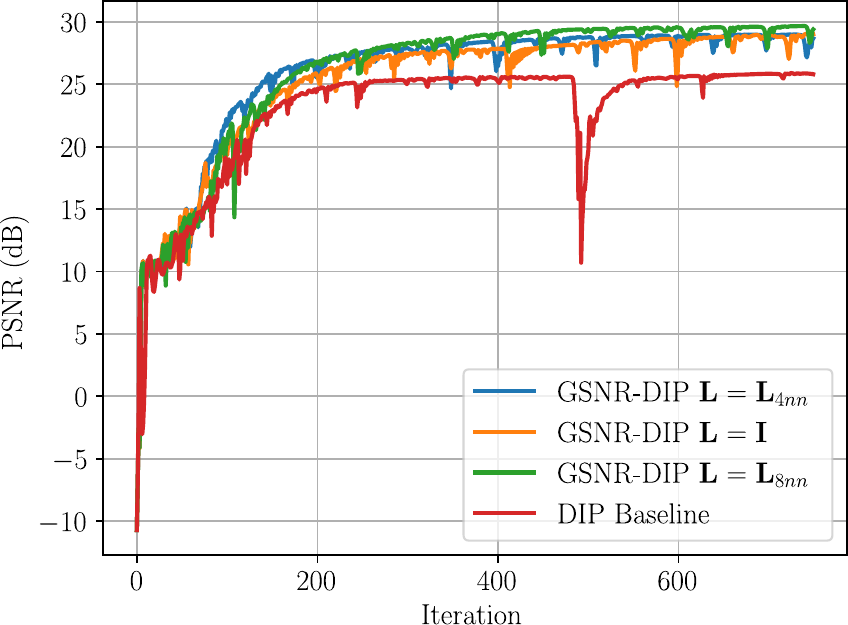}
    \vspace{-0.6cm}
    \caption{Super-resolution on CelebA (20 images) with Deep Image Prior (DIP). We plot PSNR versus iteration for the standard DIP baseline (red) and the proposed GSNR-DIP variants using different Laplacians: $\mathbf{L}=\mathbf{I}$,
    $\mathbf{L}_{4\mathrm{nn}}$, and $\mathbf{L}_{8\mathrm{nn}}$.}
    \vspace{-0.2cm}
    \label{fig:gsnr_dip_sr}
\end{figure}

Figure~\ref{fig:gsnr_dip_sr} shows the evolution of PSNR for super-resolution on 20 images from the CelebA dataset when DIP is run with and without the proposed graph-smooth null-space representation.
The DIP baseline (red) improves rapidly at first but then saturates at a lower PSNR and
exhibits noticeable instabilities, including pronounced dips during the late iterations.
In contrast, all GSNR-DIP variants converge to a higher PSNR plateau and have much smoother
trajectories. Among them, the graph-based choices $\mathbf{L}_{4\mathrm{nn}}$ and
$\mathbf{L}_{8\mathrm{nn}}$ (blue/green) provides the most stable and accurate reconstructions,
consistently outperforming both the baseline and the geometry-free $\mathbf{L}=\mathbf{I}$
(orange). This indicates that enforcing graph-smooth structure specifically in the null space
not only improves the final reconstruction quality but also regularizes the DIP optimization
itself, mitigating the overfitting and oscillations typically observed in vanilla DIP.

\section{Deblurring results}\label{app:deblurring}
In the deblurring setting, the sensing matrix $\mathbf{H}$ does not reduce dimensionality, meaning that $n-m=0$. Consequently, the full set of $n$ eigenvectors of $\mathbf{T}$ was considered when selecting the $p$ smoothest directions.

\begin{table}[!t]
\caption{Experimental settings for GSNR PnP-PGD in deblurring.}   
\vspace{-0.3cm}
\label{tab:parameters_deblur}
\centering
\begin{tabular}{lccc}
\toprule
Parameter & Lip-DnCNN &Wavelet\\
\midrule
$\gamma_{\mathrm{g}}$&0.1&0.1\\
$\gamma$ &0.1&0.1\\
$\eta$&0.0001&0.001\\
$\alpha$&$\frac{1.5}{\|\mathbf{H}\|}$&$\frac{1.5}{\|\mathbf{H}\|}$\\
$K$ (Max iterations)&800&800\\

\bottomrule
\end{tabular}%
\vspace{0.1cm}
\end{table}

\textbf{Experimental settings.}
Table \ref{tab:parameters_deblur} reports the GSNR PnP-PGD parameters defined to perform image reconstruction.

\begin{figure*}[!t]
    \centering
    \includegraphics[width=\linewidth]{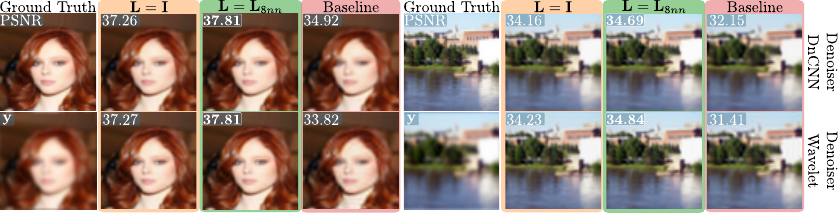}
    \vspace{-0.7cm}
    \caption{Deblurring visual results comparing PGD-PnP, GSNR with $\mathbf{L}=\mathbf{I}$ (NPN), and GSNR with $\mathbf{L}=\mathbf{L}_{8nn}$ across two datasets and two denoisers. Best results for each denoiser are in \textbf{bold}.
    Here, $p=0.8$.}
    \vspace{-0.5cm}
    \label{fig:visual_deblur}
\end{figure*}

 \begin{table}[!t]
  \caption{\textcolor{black}{GSNR's computational cost for SR: offline and online (PnP, one image, 1000 iters).}}
  \vspace{-0.3cm}
  \centering
  \resizebox{\linewidth}{!}{
 \begin{tabular}{lccc}
\toprule
\textbf{Metric \quad \quad / \quad \quad Resolution} & $128^2$ & $256^2$ & $512^2$\\
\midrule
Offline EVD computation (s) & 102 & 1315 & 22310 \\
Offline RAM (GiB) & 0.68 & 10.88 & 174.09  \\
PnP runtime (Wavelet) (s) & 1.09 & 1.42 & 1.96 \\
\bottomrule
\end{tabular}
}
  \label{tab:EVD}
    \vspace{-0.7cm}
\end{table}

\textbf{Visual results.}
Fig.~\ref{fig:visual_deblur} presents deblurring examples on the CelebA and Places365 datasets for baseline PGD-PnP, GSNR with $\mathbf{L}=\mathbf{I}$ (NPN), and GSNR with $\mathbf{L}=\mathbf{L}_{8nn}$, using either Lip-DnCNN or Wavelet denoisers. The qualitative behavior is consistent across both denoisers. The baseline PGD-PnP produces reconstructions with noticeable residual blur and poor recovery of high-frequency structures, such as eyes and eyebrows in faces or the sharp contours of tree tops in natural scenes.


Using GSNR with $\mathbf{L}=\mathbf{I}$ improves the reconstruction of high-frequency components, producing visually sharper results. However, because the identity Laplacian imposes no geometric constraints, the recovered details are not necessarily aligned with the true image structures. This often leads to overly sharp but inaccurate features that do not reflect the desired image geometry. In contrast, GSNR with graph-based Laplacians, such as $\mathbf{L}_{8nn}$, enforces geometric consistency through graph-smoothness. As a result, the restored high-frequency structures are both sharp and coherent with the true image content, yielding the most faithful reconstructions among the tested methods. This advantage is reflected not only visually but also quantitatively, with GSNR achieving the highest PSNR values. A noteworthy observation is that GSNR enables reconstructions using a simple Wavelet denoiser to surpass the performance of reconstructions obtained with a more advanced neural-network denoiser like DnCNN. This highlights GSNR’s capacity to adapt to and significantly strengthen any reconstruction method, offering robust improvements across a broad range of inverse problem solvers.

\begin{table}[!t]
\caption{Final PSNR on DIV2K for deblurring ($\mathrm{G}$ trained on Places365).}
\vspace{-0.3cm}
\centering
\resizebox{0.8\linewidth}{!}{
 \begin{tabular}{lcc}
\toprule
& Lip-DnCNN & Wavelet \\
\midrule
Baseline (PnP-PGD) &31.23 & 30.39\\
NPN \cite{Neurips}   & 33.05 & 32.90 \\
GSNR \ ($\mathbf{L}_{8\text{nn}}$) & \best{33.69} & \best{33.65}  \\

\bottomrule
\end{tabular}
}
\label{tab:cross_dataset_div2k_DB}
  \vspace{-0.3cm}
\end{table}

\section{Scalability and computational cost}\label{app:scalability}
We use $128^2$ images to enable controlled comparisons. GSNR introduces only a low \textit{online} overhead at inference time, since the subspace computation is performed \textit{offline} once per $(\mathbf{H, L}, n, p)$ and reused. 
We only compute the $p$ smoothest eigenvectors (selected by the coverage plateau in Algorithm \ref{alg:coverage_p}), so memory scales as $\mathcal{O}(pn)$. We never materialize $\mathbf{P}_n$ or $\mathbf{T}$: we wrap $\mathbf{Tx}=\mathbf{P}_n\mathbf{L}\mathbf{P}_n \mathbf{x}$ as an implicit LinearOperator and compute the first $p$ eigenpairs via ARPACK \texttt{eigsh} (\ref{app:t_constructio}), exploiting sparse $\mathbf{L}$ for fast matrix operations.   Table ~\ref{tab:EVD} reports scaling to $512^2$; the online cost remains low, whereas the offline EVD increases with resolution. For $n^2\geq 1024^2$, EVD computation becomes costly with ARPACK. Future work will focus on efficient approximate EVD computation.

\section{Dataset generalization and neural network ablation}
\label{app:neural_network_ablation}
\textcolor{black}{To assess whether GSNR regularization transfers across image distributions, we performed a cross-dataset experiment for deblurring setting. We trained the neural network $\mathrm{G}$ on Places365 and then evaluated GSNR on DIV2K \cite{DIV2K}. The results in Table~\ref{tab:cross_dataset_div2k_DB} show that the gains provided by GSNR remain consistent under this distribution shift, indicating that the learned null-space predictor generalizes to diverse natural images.}

\textcolor{black}{We further examine the robustness of GSNR with respect to the choice of neural network architecture for $\mathrm{G}$. In Figure \ref{fig:G_ablation}, we replace the default U-Net with three recent architectures: DFPIR \cite{tian2025degradation}, EVSSM \cite{EVSSM}, and AdaIR \cite{cui2025adair}. As expected, stronger predictors yield more accurate null-space estimates, as reflected by the higher PSNR values for the null-space mapping (see Fig.~\ref{fig:G_ablation} legend). Importantly, GSNR improves PnP reconstruction quality and accelerates convergence for all tested backbones, suggesting its benefits arise from the proposed GSNR regularization rather than a specific architecture.}

\begin{figure}[t]
    \centering
\includegraphics[width=\linewidth]{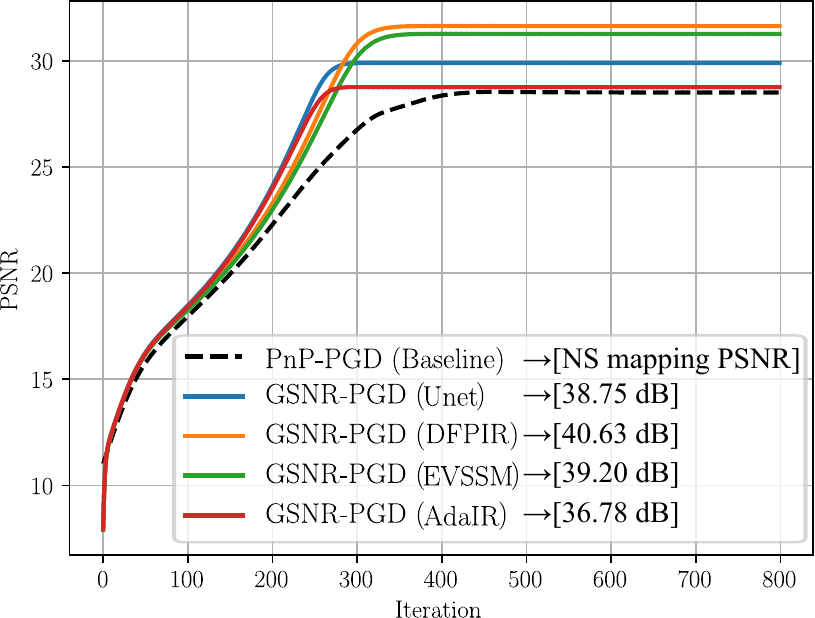}
\caption{PSNR with $\mathrm{G}$ ablation in Places for the deblurring task.}
\label{fig:G_ablation}
\end{figure}



\section{Inexact forward operator}\label{app:inexact_forward_operator}
\vspace{-0.2cm}
In many inverse problems, the forward operator available to the reconstruction algorithm is only an approximation of the true sensing physics. In practice, the nominal sensing matrix $\mathbf{H}$ can deviate from the actual measurement operator due to calibration errors, hardware tolerances, or other unmodeled effects. Such mismatches are particularly detrimental because they propagate into the algorithmic components derived from $\mathbf{H}$, including null-space projectors and any structure imposed through them, and can therefore undermine both accuracy and convergence. To quantify this effect in a concrete setting, Fig.~\ref{fig:imperfect_H} evaluates deblurring under operator mismatch. Measurements are generated as \(\mathbf{y=(H+H_{\xi})x^{\ast}+\boldsymbol{\omega}}\), with \(\mathbf{H_{\xi}}\sim\mathcal{N}(0,0.005^2)\) while recovery still assumes the nominal $\mathbf{H}$. This mismatch yields imperfect estimates of \(\mathbf{P}_{n}\) and \(\mathbf{S}\), which in turn degrades the null-space representation used by GSNR. Despite these compounded imperfections, GSNR remains effective, improving performance by approximately 1 dB and converging in fewer iterations.


\begin{figure}[!t]
\centering
\includegraphics[width=\linewidth]{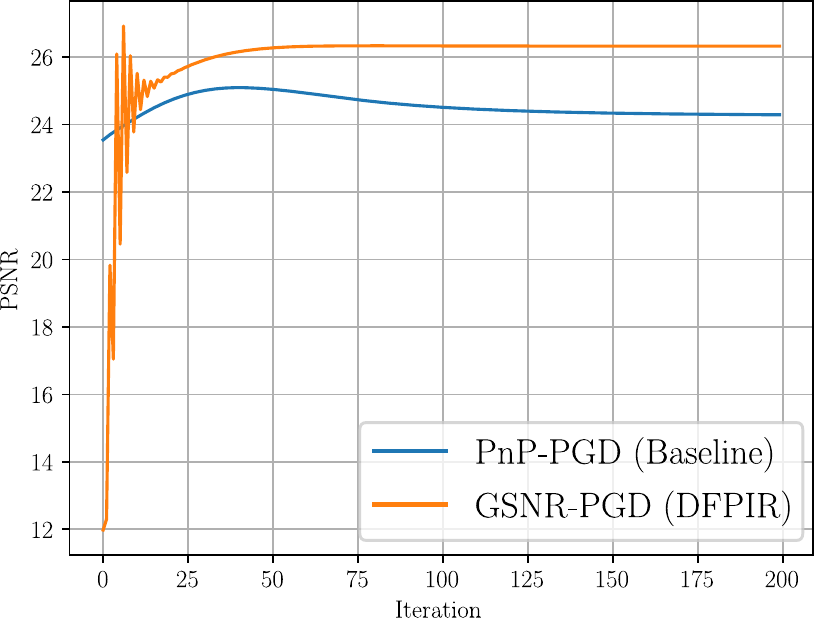}
\captionof{figure}{PSNR in deblurring with inexact forward operator.}
\label{fig:imperfect_H}
\end{figure}





